\def\eqref#1{equation~\ref{#1}}
\def\1{\bm{1}}
\DeclareMathAlphabet{\mathsfit}{\encodingdefault}{\sfdefault}{m}{sl}
\SetMathAlphabet{\mathsfit}{bold}{\encodingdefault}{\sfdefault}{bx}{n}
\DeclareMathOperator*{\argmax}{arg\,max}
\DeclareMathOperator*{\argmin}{arg\,min}
\theoremstyle{plain}
\newtheorem{theorem}{Theorem}[section]
\newaliascnt{proposition}{theorem}
\newaliascnt{lemma}{theorem}
\newtheorem{lemma}[lemma]{Lemma}
\newaliascnt{corollary}{theorem}
\newaliascnt{definition}{theorem}
\theoremstyle{definition}
\newtheorem{definition}[definition]{Definition}
\newaliascnt{assumption}{theorem}
\theoremstyle{remark}
\newaliascnt{remark}{theorem}
\newaliascnt{example}{theorem}
\newtheorem{example}[example]{Example}
\renewcommand{\Pr}{{\text{Pr}}}
\crefname{figure}{Figure}{Figures}
\Crefname{figure}{Figure}{Figures}
\crefname{table}{Table}{Tables}
\Crefname{table}{Table}{Tables}
\def\@fnsymbol#1{%
  \ifcase#1\or
    \diamondsuit 
  \or \dagger 
  \or \ddagger 
  \or \mathsection 
  \or \mathparagraph 
  \or \| 
  \or **
  \or \dagger\dagger 
  \or \ddagger\ddagger 
  \fi}
\title{Doubly-Regressing Approach for Subgroup Fairness}
\author{Kunwoong Kim\textsuperscript{1}\thanks{These authors contributed equally to this work.}
\qquad Kyungseon Lee\textsuperscript{1}\footnotemark[1]
\qquad Jihu Lee\textsuperscript{1}
\quad Dongyoon Yang\textsuperscript{2}
\quad Yongdai Kim\textsuperscript{1} \\~\\
\textsuperscript{1}Department of Statistics, Seoul National University\\
\textsuperscript{2}AI Advanced Technology, SK Hynix Inc.\\
\texttt{\footnotesize kwkim.online@gmail.com, \{ppleeqq,
rieky0426\}@snu.ac.kr,} \\
\texttt{\footnotesize dongyoon.yang@sk.com, ydkim0903@gmail.com}
}
\begin{document}

\maketitle

\begin{abstract}
    Algorithmic fairness is a socially crucial topic in real-world applications of AI.
    Among many notions of fairness, subgroup fairness is widely studied when multiple sensitive attributes (e.g., gender, race, age) are present.
    However, as the number of sensitive attributes grows, the number of subgroups increases accordingly, creating heavy computational burdens and data sparsity problem (subgroups with too small sizes).
    In this paper, we develop a novel learning algorithm for subgroup fairness which resolves these issues by focusing on subgroups with sufficient sample sizes as well as marginal fairness (fairness for each sensitive attribute).
    To this end, we formalize a notion of subgroup-subset fairness and introduce a corresponding distributional fairness measure called the supremum Integral Probability Metric (supIPM).
    Building on this formulation, we propose the Doubly Regressing Adversarial learning for subgroup Fairness (DRAF) algorithm, which reduces a surrogate fairness gap for supIPM with much less computation than directly reducing supIPM.
    Theoretically, we prove that the proposed surrogate fairness gap is an upper bound of supIPM.
    Empirically, we show that the DRAF algorithm outperforms baseline methods in benchmark datasets, specifically when the number of sensitive attributes is large so that many subgroups are very small.
\end{abstract}

\section{Introduction}
Rapid deployments of AI models in socially consequential domains such as finance, hiring, and criminal justice have amplified the demand for fairness-aware predictions.
Early definitions of algorithmic fairness predominantly focused on a single sensitive attribute, such as gender or race, requiring parity across these (marginal) protected groups.
However, fairness with respect to a single attribute is not sufficient to protect against discrimination at the intersections of multiple attributes. 
In particular,  the problem of \textit{fairness gerrymandering}, where severe unfairness may remain on their intersections, even if fairness is satisfied on each marginal attribute, has been noticed
\citep{kearns2018empiricalstudyrichsubgroup,kearns2018preventing}.
For instance, while a lending model may equalize approval rates between men and women, the subgroup defined by ``female and minority race'' may still experience significantly lower approval rates. This illustrates the necessity of \textit{(intersectional) subgroup fairness}. 

To state subgroup fairness formally, suppose that the $i^{\textup{th}}$ individual is specified by its $q$-dimensional sensitive attribute $s_i \in \{0,1\}^q$, where each coordinate (sensitive attribute) is binary.
Then, there are $2^{q}$ subgroups, defined by
$$
\mathcal{D}_v = \{i : s_i = v\}, \textup{ for } v \in \{0,1\}^q .
$$
Subgroup fairness requires that the distributions of prediction values 
be similar (i.e., distributional fairness) across all $2^q$ subgroups.
However, when $q$ is large, we may face two major challenges:
(i) \textit{data sparsity}, when certain subgroups contain very few samples, model estimation on such subgroups becomes unstable and inaccurate \citep{molina2023boundingapproximatingintersectionalfairness};
(ii) \textit{computational burden}, the number of fairness constraints scales exponentially in $q$.

Various learning algorithms for subgroup fairness have been proposed to resolve the aforementioned two problems
\citep{foulds2019intersectionaldefinitionfairness,molina2023boundingapproximatingintersectionalfairness,foulds2019bayesianmodelingintersectionalfairness,shui2022on,maheshwari2023fair,
Hu_2024},
but there are still several limitations.
Existing algorithms either do not guarantee the marginal fairness (i.e., fairness on each sensitive attribute) which may lead to a socially unacceptable prediction model, or would be computationally demanding when an adversarial learning is required to measure fairness.

The aim of this paper is to develop a learning algorithm for subgroup fairness which resolves  data sparsity and computational burden simultaneously.
To avoid data sparsity, we simply focus on \textit{active} subgroups, i.e., subgroups whose sample sizes are not too small. 
Considering only active subgroups is statistically sound since empirical fairness on non-active subgroups does not guarantee the fairness on the population level. A novel part of our proposed learning algorithm is to find a prediction model which achieves (active) subgroup fairness and marginal fairness simultaneously in the context of distributional fairness, the strongest notion of fairness (see \cref{sec:setting} for definition), without resorting on heavy computational burden.

For this purpose, 
we define a \textit{subgroup-subset} $W \subseteq \{0,1\}^q$ as a union of certain subgroups, and focus on
$ \mathcal{D}_{W}  =  \bigcup_{v \in W} \mathcal{D}_v. $
Our approach enforces the distributional fairness over pre-selected subgroup-subsets whose sizes are not small.
Then, we design a novel adversarial training strategy termed \textit{doubly regressing adversarial learning} 
which learns a prediction model without heavy computational burden but  guarantees the distributional fairness for all pre-selected subgroup-subsets. The doubly regressing adversarial learning algorithm requires only a single discriminator
regardless of the number of pre-selected subgroup-subsets and so computational demand
is practically acceptable even when $q$ is large.
By including all active subgroups and marginal subgroups (subgroups corresponding to each sensitive attribute) 
into the set of pre-selected subgroup-subsets,  
we can effectively achieve subgroup fairness and marginal fairness simultaneously.

The main contributions of this work can be summarized as follows:
\begin{enumerate}[leftmargin=2em, itemsep=1pt, topsep=1pt] 
    \item We formalize \textit{subgroup-subset fairness} and introduce a measure 
    for the distributional subgroup-subset fairness     
    called the supremum Integral Probability Metric (supIPM).
    \item We propose a surrogate fairness measure for supIPM which requires only a single discriminator
    regardless of the number of subgroup-subsets, and develop an adversarial learning algorithm called {\it Doubly Regressing Adversarial learning for Fairness} (DRAF) algorithm to learn an accurate and subgroup-subset fair prediction model.
    \item Theoretically, we prove that the proposed surrogate fairness measure becomes an upper bound
    of supIPM.
    \item Empirically, we show that the DRAF algorithm outperforms baseline methods in benchmark datasets, with large margins when $q$ is large so many subgroups are extremely small.
\end{enumerate}

\section{Related Works}

Existing methods such as \citet{kearns2018preventing,agarwal2018reductions} aim to reduce prediction-based subgroup disparities.
In particular, \citet{agarwal2018reductions} proposed a reduction-based approach for marginal fairness by solving a min–max game using Lagrangian multipliers, while \citet{kearns2018preventing} extended this framework to minimize the worst-case subgroup disparity.
To mitigate data sparsity problem of tiny subgroups, \citet{kearns2018preventing,kearns2018empiricalstudyrichsubgroup} employed weights proportional to the sample size of each subgroup.

However, as the number of intersectional subgroups grows, these methods can become computationally expensive.
Moreover, they do not explicitly target distributional fairness (e.g., IPM-based criteria), which is the focus of our work.
While other approaches focus on other fairness notions, for example, \citet{Lai2025fairicp} designed an adversarial learning-based method for equalized odds, they also differ from our focus on the distributional fairness.

A Bayesian method is proposed to borrow information in large-size subgroups when estimating prediction models for small-sized subgroups \citep{foulds2019bayesianmodelingintersectionalfairness}.
These approaches, however,
do not guarantee the marginal fairness (i.e., fairness on each sensitive attribute) which makes it difficult to socially interpret the fairness of a prediction model.
On the contrary, \citep{molina2023boundingapproximatingintersectionalfairness} consider only the marginal fairness but it could be vulnerable to fairness gerrymandering.

To resolve heavy computational burden, weak notions of fairness such as the DP (Demographic Parity) are employed in the fairness constraint \citep{kearns2018preventing,kearns2018empiricalstudyrichsubgroup} or post-processing techniques are used  after learning a prediction model without fairness constraint \citep{Hu_2024}.
These methods, however, would yield suboptimal prediction models in view of other stronger fairness notions (e.g., distributional fairness) and/or prediction accuracy.

\paragraph{Our approach}
We propose an in-processing algorithm for distributional fairness on pre-selected subgroup-subsets whose sizes are not too small. We formalize \textit{subgroup-subset fairness} and develop a computationally efficient adversarial algorithm to achieve the distributional fairness.

\section{Subgroup-subset fairness}\label{sec:partial}

\subsection{Problem setting}\label{sec:setting}

We consider data points $(x_i,y_i,s_i)$ with input vectors $x_i \in \mathcal{X}$, output variables $y_i \in \mathcal{Y}$, and $s_i=(s_{i1},\dots,s_{iq})^\top \in \{0,1\}^q$ denoting the $q$ binary sensitive attributes.
Let $\mathbb{P}$ be the probability measure of $(X, Y, S) \in \mathcal{X} \times \mathcal{Y} \times \{ 0, 1 \}^{q}$ and $\mathbb{P}_{n}$ be its empirical counterpart.
Let $\mathcal{F}$ be the set of prediction models $f:\mathcal{X}  \times \{0,1\}^{q} \rightarrow \mathbb{R}^c$ for $c \geq 1.$
Here, $c = 1$ for regression problems (i.e., $\mathcal{Y}=\mathbb{R}$), while $c$ is the number of classes for classification problems (i.e., $\mathcal{Y}=\{1,\ldots,c\}$).
For a given prediction model $f\in \mathcal{F}$ and $s\in \{0,1\}^q,$ let $\mathbb{P}_{f,s}$ be the conditional distribution of $f(X)$ given on $S=s.$

We say that $f$ is (perfectly) subgroup-fair if $\mathbb{P}_{f,s}, s\in \{0,1\}^q$ are all the same. 
To relax the perfect fairness, we define ($\psi$-distributional) subgroup fairness gap for a given deviance $\psi(\cdot,\cdot)$ between two probability measures as $\Delta_{\psi}(f)= \sup_{s\in \{0,1\}^q} \psi(\mathbb{P}_{f,s},\mathbb{P}_{f,\cdot}),$ where $\mathbb{P}_{f,\cdot}$ is the marginal distribution of $f(X).$
Then, we say \textit{$f$ is $\psi$-subgroup fair} with level $\delta>0$ if $\Delta_{\psi}(f)\le \delta.$
The main goal of subgroup fair learning is to find an accurate prediction model among $\psi$-subgroup fair prediction models with level $\delta.$

Various kinds of deviance have been used in fair AI.
Examples are 
(i) the original DP when $\Delta_\textup{DP}(f)=|\Pr(f(X, s)\ge\tau|S=s)-\Pr(f(X, \cdot)\ge\tau)|$ for a given threshold $\tau$ for binary classification \citep{agarwal2018reductions},
(ii) the mean DP when $ \Delta_{\overline{\textup{DP}}} (f) := \left\vert \mathbb{E}(f(X, s) | S = s) - \mathbb{E}(f(X, \cdot)) \right\vert$ \citep{madras2018learning,donini2018empirical},
(iii) the distributional DP when $\psi(\mathbb{P}_{f,s},\mathbb{P}_{f,\cdot})=0$ implies $ \mathbb{P}_{f,s} = \mathbb{P}_{f,\cdot}$ \citep{jiang2020wasserstein,chzhen2020fair,silvia2020general,barata2021fair,kim2025fairness}.
Popularly used distributional DPs are Wasserstein distance, Maximum Mean Discrepancy (MMD), Kullback-Leibler divergence, and Kolmogorov-Smirnov distance, to name a few.
Among these, distributional DP is the strongest one since it can imply other DPs. 
In the problem of subgroup fairness, the distributional DP has not been popularly used partly because its computation would be demanding when $q$ is large.

There are large amounts of literature about subgroup fair learning algorithms \citep{kearns2018preventing,kearns2018empiricalstudyrichsubgroup,hebertjohnson2018calibrationcomputationallyidentifiablemasses,foulds2019intersectionaldefinitionfairness,foulds2019bayesianmodelingintersectionalfairness,Tian_2025},
which learn a prediction model by minimizing the empirical risk (e.g., the residual sum of squares or cross-entropy) subject to the constraint that empirical subgroup fairness gap $\Delta_{n,\psi}(f)$ is less than or equal to $\delta$.
Here, empirical subgroup fairness gap $\Delta_{n,\psi}(f)$ is the fairness gap on the empirical distributions.

Existing subgroup fair learning algorithms, however, are not easily applicable to the case of large $q$
due to data sparsity and computational burden.
Note that the number of subgroups grows exponentially in $q$ and thus certain subgroups have too small amounts of data and so empirical subgroup fairness gap is not a good estimator of population subgroup fairness gap.
With a limited amount of data, there is no hope to be able to guarantee the fairness of a given prediction model on all of subgroups.
We could ignore subgroups having too small samples but this naive approach
does not ensure the marginal fairness which would not be acceptable.
In addition, $2^q$ many computations of the deviance $\psi$ is required to calculate subgroup fairness gap, and so easy-to-compute $\psi$s (e.g., mean DP) have been used.
Furthermore, a subgroup-fair prediction model may not always satisfy the marginal fairness and thus would not be socially acceptable (see an example in \cref{eg:appen-sub_not_marg}).
Hence, rather than considering all subgroups, we focus only on subgroups whose sizes are sufficiently large and enforce fairness on such large subgroups.
To do so, we introduce a new fairness concept called subgroup-subset fairness, in the next subsection.

We also provide a more detailed discussion regarding the connection between marginal and subgroup fairness in \cref{eg:appen-sub_not_marg}.

\subsection{Definition of subgroup-subset fairness}\label{sec:def_partial}

To resolve the data sparsity problem, in this paper, we propose a new notion of subgroup fairness called \textit{subgroup-subset fairness}.
The main idea of subgroup-subset fairness is to guarantee fairness on two disjoint subsets of sensitive attributes.
To be more specific, we call any subset $W$ of $\{0,1\}^q$ as a \textit{subgroup-subset} and 
let $\mathbb{P}_{f,W}$ be the distribution of $f(X)$ conditional on $S\in W$ and $\mathbb{P}_{f,W}^n$ be its empirical counterpart.
For a given collection $\mathcal{W}$ of subgroup-subsets and a deviance $\psi,$ let
$$
\Delta_{\psi,\mathcal{W}}(f)= \sup_{W \in \mathcal{W}} \psi(\mathbb{P}_{f,W},\mathbb{P}_{f,W^c}),
$$
which we call the subgroup-subset fairness gap (with respect to $\mathcal{W}$).
Then, we say $f$ is \textit{subgroup-subset fair} with level $\delta$
if $\Delta_{\psi,\mathcal{W}}(f) \le \delta.$

\paragraph{Choice of $\mathcal{W}$}
If $\mathcal{W}$ consists of all subgroups, subgroup-subset fairness is equal to subgroup fairness.
To resolve data sparsity, we should only include large subgroups in $\mathcal{W}.$
In turn, to simultaneously achieve the marginal fairness (i.e., fairness on each sensitive attribute), we add the marginal subgroups (i.e., $W_{j,s}=\{i:s_{ij}=s\}$ for $j\in [q]$ and $s\in \{0,1\}$) to $\mathcal{W}.$
In general, we can guarantee fairness for any subgroup-subsets of interest in by adding those subgroup-subsets to $\mathcal{W}.$
For example, we can guarantee the second-order marginal fairness (i.e., fairness on $W_{(j,k), (s_1,s_2)}=\{i: (s_{ij},s_{ik})=(s_1,s_2)\}$ for $j,k\in [q]$ and $(s_1,s_2)\in \{0,1\}^2$) by adding the corresponding subgroup-subsets.
Similarly, we can consider the $l^{\textup{th}}$-order marginal fairness for $l\in [q].$
\cref{fig:hierarchy} in \cref{eg:appen-sub_not_marg} illustrates the hierarchical structure of subgroup-subsets (from marginal groups to intersectional subgroups).

However, one may worry that computation becomes difficult when $|\mathcal{W}|$ is too large.
In \cref{sec:surrogate_dr}, we develop a computationally efficient adversarial learning algorithm for subgroup-subset fairness, where only a single discriminator is used regardless of the size of $\mathcal{W}.$
Furthermore, \cref{table:runtime} in \cref{sec:appen-realdata} empirically supports the computational scalability of our proposed algorithm.

\subsection{Supremum IPM for subgroup-subset fairness gap}

\paragraph{Integral Probability Metric (IPM)}

In the group fairness problem with a single binary sensitive attribute (i.e., $q = 1$), the integral probability metric (IPM) \citep{Muller_1997,sriperumbudur2009integralprobabilitymetricsphidivergences} has been popularly used as the deviation $\psi$ \citep{NEURIPS2020_51cdbd26,pmlr-v115-jiang20a,pmlr-v162-kim22b,kim2025fairness,kong2025fair} to ensure the distributional fairness.
For given two probability measures $\mathbb{P}_0$ and $\mathbb{P}_1,$ the IPM with a given discriminator class $\mathcal{G} \subset \{ g: \mathbb{R}^{c} \to \mathbb{R} \}$ is defined as
$$
\textup{IPM}_\mathcal{G}(\mathbb{P}_0,\mathbb{P}_1)=\sup_{g\in \mathcal{G}} \left| \int g(x) \mathbb{P}_{0}(dx)- \int g(x) \mathbb{P}_{1}(dx)\right|.
$$
Various IPMs are obtained by selecting the discriminator class $\mathcal{G}$ accordingly.
Popular examples for $\mathcal{G}$ are
(i) the collection of 1-Lipschitz functions for Wasserstein distance \citep{villani2009optimal};
(ii) the unit ball of an RKHS for MMD \citep{JMLR:v13:gretton12a};
(iii) indicator functions over a VC-bounded family for Total Variation \citep{shorack2000probability}.

\paragraph{Supremum IPM and its statistical property}

When $\psi$ is $\textup{IPM}_{\mathcal{G}}$, we call $\Delta_{\psi,\mathcal{W}}(\cdot)$ as
the \textit{supIPM}, and denote the supIPM and its empirical counterpart as $\Delta_{\mathcal{W},\mathcal{G}}(\cdot)$
and $\Delta_{n,\mathcal{W},\mathcal{G}}(\cdot),$ respectively.
\cref{thm:main-1}, whose proof is deferred to \cref{sec:appen-proofs},
implies that the estimation error of $\Delta_{n,\mathcal{W},\mathcal{G}}(\cdot)$
does not depend heavily on the size of $\mathcal{W}$ but depends on 
$n_{\mathcal{W}}=\min_{W\in \mathcal{W}} \min\{n_W, n-n_W\},$ where $n_W=|\{i:s_i\in W\}|.$
This result suggests that we can construct $\mathcal{W}$ as large as possible until 
$n_{\mathcal{W}}$ is sufficiently large. 

Let $\mathcal{R}_{m}(\mathcal{H})$ denotes the empirical Rademacher complexity of a given function class $\mathcal{H}$ with $m$ samples (see \cref{def:rademacher} for its detailed definition).
\begin{theorem}\label{thm:main-1}
    Let $\mathcal{W}$ be a collection of subgroup-subsets
    and $n_{W} := \vert \{i:s_i\in W\} \vert$ for $W \in \mathcal{W}.$
    Assume that $ \|g\|_\infty \le B, \forall g \in \mathcal{G}. $
    Then, we have for all $f \in \mathcal{F}$ that
    \begin{equation}\label{eq:upper}
        \Delta_{n,\mathcal{W},\mathcal{G}}(f)-\Delta_{\mathcal{W},\mathcal{G}}(f)
        \le
        4 \mathcal{R}_{n_{\mathcal{W}}}\!\big(\mathcal{G}\circ\mathcal{F}\big) + 2B \sqrt{\frac{2\log\big(2n|\mathcal{W}|\big)}{n_{\mathcal{W}}}},
    \end{equation}
    with probability at least $1-1/n,$
    where $n_{\mathcal{W}} = \underset{{W \in \mathcal{W}}}{\min} \min\{n_W, n-n_W\}.$
\end{theorem}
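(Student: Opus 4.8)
I would prove \ref{thm:main-1} by a uniform-convergence argument that splits the supremum of IPMs over $\mathcal{W}$ into at most $2|\mathcal{W}|$ one-sided empirical-process deviations — one for each $W\in\mathcal{W}$ and one for each complement $W^c$ — and controls each by a Rademacher bound on the subsample sitting inside the corresponding set. \emph{Step 1 (a difference of suprema to a supremum of differences).} Since $\Delta_{n,\mathcal{W},\mathcal{G}}(f)$ and $\Delta_{\mathcal{W},\mathcal{G}}(f)$ are suprema over $W\in\mathcal{W}$ of IPMs on the empirical and on the population conditional distributions respectively, $\Delta_{n,\mathcal{W},\mathcal{G}}(f)-\Delta_{\mathcal{W},\mathcal{G}}(f)\le\sup_{W\in\mathcal{W}}\big[\textup{IPM}_{\mathcal{G}}(\mathbb{P}^n_{f,W},\mathbb{P}^n_{f,W^c})-\textup{IPM}_{\mathcal{G}}(\mathbb{P}_{f,W},\mathbb{P}_{f,W^c})\big]$. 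Writing each IPM as $\sup_{g\in\mathcal{G}}|\cdot|$ and using the reverse triangle inequality $|a|-|b|\le|a-b|$ and then the triangle inequality, for a fixed $W$ the bracketed term is at most $A_W(f)+A_{W^c}(f)$, where $A_V(f):=\sup_{g\in\mathcal{G}}\big|\mathbb{E}_{\mathbb{P}^n_{f,V}}[g]-\mathbb{E}_{\mathbb{P}_{f,V}}[g]\big|$. Hence it suffices to bound $\sup_{f\in\mathcal{F}}A_V(f)$ for every $V\in\mathcal{W}\cup\{W^c:W\in\mathcal{W}\}$.

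\emph{Step 2 (one subgroup-subset, conditioning on its size).} Fix $V$ and condition on the count $n_V$. By the elementary rejection-sampling fact that filtering an i.i.d.\ sample by the event $\{S\in V\}$ and conditioning on the number retained yields i.i.d.\ draws from the conditional law $\mathbb{P}_{\cdot\mid S\in V}$, the observations $\{f(X_i):s_i\in V\}$ are, conditionally on $n_V$, an i.i.d.\ sample of size $n_V$ from $\mathbb{P}_{f,V}$, with the underlying $X_i$'s common to all $f$. Therefore $\sup_{f\in\mathcal{F}}A_V(f)$ equals a uniform deviation $\sup_{h\in\mathcal{G}\circ\mathcal{F}}\big|\tfrac1{n_V}\sum_{i:s_i\in V}h(X_i)-\mathbb{E}_{\mathbb{P}_{f,V}}[h]\big|$ over the class $\mathcal{G}\circ\mathcal{F}$, whose members are bounded by $B$ since $\|g\|_\infty\le B$. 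McDiarmid's inequality — one sample changes the empirical mean, hence the supremum, by at most $2B/n_V$ — together with symmetrization gives, with probability at least $1-\delta'$ conditionally on $n_V$ and therefore also unconditionally,
\[
\sup_{f\in\mathcal{F}}A_V(f)\ \le\ 2\,\mathcal{R}_{n_V}(\mathcal{G}\circ\mathcal{F})+B\sqrt{\frac{2\log(1/\delta')}{n_V}}.
\]
(When $\mathcal{G}$ is symmetric, as for every standard IPM discriminator class, the absolute value costs nothing; otherwise it only changes the constants.)

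\emph{Step 3 (union bound and reduction to $n_{\mathcal{W}}$).} I apply Step 2 with $\delta'=1/(2n|\mathcal{W}|)$ to each of the at most $2|\mathcal{W}|$ sets in $\mathcal{W}\cup\{W^c:W\in\mathcal{W}\}$ and take a union bound, so that with probability at least $1-1/n$ the displayed inequality holds for all of them simultaneously, with $\log(1/\delta')=\log(2n|\mathcal{W}|)$. For every $W\in\mathcal{W}$ we have $n_W\ge n_{\mathcal{W}}$ and $n_{W^c}=n-n_W\ge n_{\mathcal{W}}$ by the definition of $n_{\mathcal{W}}$, so $\mathcal{R}_{n_W}(\mathcal{G}\circ\mathcal{F}),\mathcal{R}_{n_{W^c}}(\mathcal{G}\circ\mathcal{F})\le\mathcal{R}_{n_{\mathcal{W}}}(\mathcal{G}\circ\mathcal{F})$ (Rademacher complexity does not increase with the sample size) and $1/\sqrt{n_W},1/\sqrt{n_{W^c}}\le1/\sqrt{n_{\mathcal{W}}}$. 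Substituting into $A_W(f)+A_{W^c}(f)$ and combining with Step 1 yields, on this event and for all $f\in\mathcal{F}$,
\[
\Delta_{n,\mathcal{W},\mathcal{G}}(f)-\Delta_{\mathcal{W},\mathcal{G}}(f)\ \le\ 4\,\mathcal{R}_{n_{\mathcal{W}}}(\mathcal{G}\circ\mathcal{F})+2B\sqrt{\frac{2\log(2n|\mathcal{W}|)}{n_{\mathcal{W}}}},
\]
which is the claim.

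\emph{Main obstacle.} The one genuinely delicate point is the randomness of the subsample sizes $n_W$. Conditioning on the whole sensitive vector $(s_1,\dots,s_n)$ would leave the within-$W$ observations only independent — not i.i.d.\ from $\mathbb{P}_{f,W}$ — and would introduce a bias term that, bounded crudely, scales with the support size $2^q$. Conditioning instead only on the counts $n_W$ (equivalently, invoking multinomial splitting) restores exact i.i.d.-ness inside each $W$ and each $W^c$, which is exactly what makes the Rademacher term $\mathcal{R}_{n_{\mathcal{W}}}(\mathcal{G}\circ\mathcal{F})$ appear with no dependence on $|\mathcal{W}|$ outside the logarithm. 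Everything else — tracking the constants $4$ and $\sqrt2$ and the monotonicity of $\mathcal{R}_m$ in $m$ — is routine bookkeeping.
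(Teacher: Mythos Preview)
Your proof is correct and follows essentially the same route as the paper's own: reduce the difference of sup-IPMs via the triangle inequality to $A_W(f)+A_{W^c}(f)$, bound each term uniformly over $f$ by Rademacher symmetrization plus bounded-differences concentration, and finish with a union bound over the $2|\mathcal{W}|$ sets at level $\delta'=1/(2n|\mathcal{W}|)$. Your explicit conditioning-on-$n_V$ argument (and the accompanying ``main obstacle'' discussion) is in fact more careful than the paper's presentation, which simply invokes ``Hoeffding's inequality combined with Rademacher symmetrization'' without spelling out why the within-$V$ subsample is i.i.d.\ from the conditional law.
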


In \cref{sec:appen-rademacher_egs}, we show that $\mathcal{R}_{n_{\mathcal W}}(\mathcal{G}\circ\mathcal{F})=\mathcal{O}(1/\sqrt{n_{\mathcal W}})$
for two cases of $\mathcal{G}$ and $\mathcal{F},$ which indicates that
the estimation error of $\Delta_{n,\mathcal{W},\mathcal{G}}(f)$ is $O(\sqrt{\log (|\mathcal{W}|)/n_{\mathcal{W}}})$ ignoring $\log n$ term.
This suggests that it would be reasonable to add only subgroup-subsets $W$ with $|W|\ge \gamma n$ into $\mathcal{W}$ for some small $\gamma>0.$
Then, it is guaranteed that the population fairness level locates within the $O(\sqrt{\log(|\mathcal{W}|)/n})$ range of the empirical fairness level.
See \cref{sec:exp-setting} how we choose $\gamma$ in practice.


\paragraph{Challenges in using supIPM for subgroup-subset fairness}

A technical difficulty, however, exists in using supIPM since computation of supIPM could be very demanding  when 
$|\mathcal{W}|$ is large.
To be more specific, for given $f$ and $W,$ let $\hat{g}_{W,f}=\argmax_{g\in \mathcal{G}} |\int g(z) \mathbb{P}_{f,W}(dz)-\int g(z) \mathbb{P}_{f,W^c}(dz)|.$
To calculate supIPM, we should find $\hat{g}_{W,f}$ for all $W\in \mathcal{W},$ which is computationally demanding when $|\mathcal{W}|$ is large.
We could avoid this problem by using the IPM which admits a closed-form calculation.
An example is the Maximum Mean Discrepancy (MMD) \citep{gretton2012kernel}.
However, computational cost of calculating supMMD is $O(|\mathcal{W}| n^2),$ which is still large when $|\mathcal{W}|$ and/or $n$ is large. 
In addition, the choice of the kernel would not be easy.

In the following section, we propose a novel surrogate subgroup-subset fairness gap of supIPM which serves as an upper bound of supIPM and requires only a single discriminator to be computed.

\section{Doubly Regressing Algorithm}

\subsection{A surrogate deviance for IPM}

Fix $W\in \mathcal{W},$ and let $y_{W,i} := 2\mathbb{I}(s_i\in W)-1$ be the indicator whether $i^{\textup{th}}$ observation belongs to $W$ or not.
To assess the fairness of a given prediction model $f$ on $W,$ a standard method is to investigate the error rate of a classifier learned with $f_{i} : = f(x_i, s_i)$ being input and $y_{W,i}$ being the label, which is used for fair adversarial learning \citep{9aa5ba8a091248d597ff7cf0173da151,madras2018learning}.
That is, we look at $\sup_{g\in \mathcal{G}} \sum_{i=1}^n\mathbb{I}( y_{W,i} \times g(f_{i}) < 0).$ If $f$ is fair on $W,$
the distribution of $f$ on $W$ and $W^c$ are similar and thus the misclassification error becomes large.

Instead of the misclassfication error, we could consider the Residual Sum of Squares (RSS) 
$\sup_{g\in \mathcal{G}} \sum_{i=1}^n (y_{W,i}-g(f_{i}))^2$
as the fairness measure. The RSS is mathematically more tractable than the misclassification error
since the former is differentiable but the latter is not.
This mathematical tractability plays an important role when we extend a surrogate measure of IPM for supIPM.
A larger RSS implies a fairer $f.$
An equivalent measure would be $\textup{supSSR} := \sup_{g\in \mathcal{G}} \left\{ \sum_{i=1}^n (y_{W,i}- \bar{y}_{W})^2- \sum_{i=1}^n(y_{W,i}-g(f_{i}))^2\right\},$ which is an analogy of the Sum of Squares of Regression (SSR) used in the regression analysis.
This measure becomes small when $f$ is fair. 

A related measure of supSSR is 
$\sup_{g\in \mathcal{G}} R^2(f,W,g),$
where
\begin{equation}
    \label{eq:R2}
    R^2(f,W,g)
    = 1 - \frac{ \sum_{i=1}^n(y_{W,i}-g(f_{i}))^2 }{\sum_{i=1}^n (y_{W,i}- \bar{y}_{W})^2}
    = \frac{ \sum_{i=1}^n (y_{W,i}- \bar{y}_{W})^2- \sum_{i=1}^n(y_{W,i}-g(f_{i}))^2 }{
    \sum_{i=1}^n (y_{W,i}- \bar{y}_{W})^2},
\end{equation}
which is an analogy of $R^2$ in the regression analysis. 
This measure becomes small when $f$ is fair.
A surprising result is that a slight modification of (\ref{eq:R2}) is equal to IPM, which is stated in the following theorem.
Refer to \cref{sec:appen-proofs} for its proof.

\begin{theorem}\label{thm:dr_to_ipm_log_sphere_noBayes}
    For given $f \in \mathcal{F}, W \subseteq \{0, 1\}^{q}$ and $\mathcal{G},$ we have
    $$ \textup{IPM}_{\mathcal{G}} (\mathbb{P}_{f,W},\mathbb{P}_{f,W^c})= \sup_{g\in \mathcal{G}}|\tilde{R}^2(f,W,g)|,$$
    where 
    $ \tilde{R}^2(f,W,g)=R^2(f,W,g)+ \frac{\sum_{i=1}^n (g(f_{i})- \bar{y}_W)^2}{
    \sum_{i=1}^n (y_{W,i}- \bar{y}_{W})^2}. $
\end{theorem}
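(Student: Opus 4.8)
The plan is to show that the extra term $\sum_{i=1}^n (g(f_i)-\bar y_W)^2 \big/ \sum_{i=1}^n (y_{W,i}-\bar y_W)^2$ appended to $R^2$ is exactly what cancels the terms that are quadratic in $g$, so that $\tilde R^2(f,W,g)$ collapses to something \emph{linear} in $g$ and in fact equals the mean discrepancy $\frac{1}{n_W}\sum_{s_i\in W} g(f_i) - \frac{1}{n_{W^c}}\sum_{s_i\in W^c} g(f_i)$, i.e. $\int g\, d\mathbb{P}_{f,W}-\int g\, d\mathbb{P}_{f,W^c}$. Once this identity is in hand, taking $\sup_{g\in\mathcal{G}}|\cdot|$ on both sides gives $\textup{IPM}_{\mathcal{G}}(\mathbb{P}_{f,W},\mathbb{P}_{f,W^c})$ by definition.

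First I would record the elementary identity for the common denominator. Since $y_{W,i}\in\{-1,+1\}$, we have $\sum_i y_{W,i}^2 = n$ and $\bar y_W = (n_W-n_{W^c})/n$, whence $D := \sum_i (y_{W,i}-\bar y_W)^2 = n(1-\bar y_W^2) = 4 n_W n_{W^c}/n$; in particular $n\bar y_W^2 = n-D$. (Here $0<n_W<n$, which is precisely the regime in which $\mathbb{P}_{f,W}$ and $\mathbb{P}_{f,W^c}$ are both well defined, so this causes no loss of generality.)

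Next I would simplify the numerator of $\tilde R^2$. Writing $\tilde R^2(f,W,g) = \big(D - \sum_i (y_{W,i}-g(f_i))^2 + \sum_i (g(f_i)-\bar y_W)^2\big)/D$ and expanding the two squares, the $\sum_i g(f_i)^2$ contributions cancel — this is the one nontrivial step — the scalar terms combine via $\sum_i y_{W,i}^2=n$ and $n\bar y_W^2=n-D$, and using $\sum_i y_{W,i}g(f_i)-\bar y_W\sum_i g(f_i)=\sum_i(y_{W,i}-\bar y_W)g(f_i)$ one is left with $\tilde R^2(f,W,g) = \tfrac{2}{D}\sum_i (y_{W,i}-\bar y_W)g(f_i)$. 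Finally I would evaluate the weights: $y_{W,i}-\bar y_W$ equals $2n_{W^c}/n$ on $\{s_i\in W\}$ and $-2n_W/n$ on $\{s_i\in W^c\}$, so substituting and canceling $D=4n_Wn_{W^c}/n$ yields $\tilde R^2(f,W,g)=\frac{1}{n_W}\sum_{s_i\in W} g(f_i)-\frac{1}{n_{W^c}}\sum_{s_i\in W^c} g(f_i)$, which is $\int g\,d\mathbb{P}_{f,W}-\int g\,d\mathbb{P}_{f,W^c}$; taking $\sup_{g\in\mathcal{G}}$ of the absolute value closes the argument.

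The whole derivation is essentially bookkeeping, so I do not expect a serious technical obstacle; the only real ``content'' is the observation that the correction term is exactly the quadratic-in-$g$ part of the residual sum of squares, which linearizes $\tilde R^2$ — an insight more than a computation. One thing I would double-check is the population analogue (replace sums by expectations, $\bar y_W=2\Pr(S\in W)-1$, $D=4\Pr(S\in W)\Pr(S\in W^c)$): the same manipulations go through verbatim, so the statement is robust to whether it is read at the empirical or population level.
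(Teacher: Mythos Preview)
Your proposal is correct and follows essentially the same route as the paper: both arguments reduce to the pointwise identity $\tilde R^2(f,W,g)=\tfrac{1}{n_W}\sum_{s_i\in W}g(f_i)-\tfrac{1}{n_{W^c}}\sum_{s_i\in W^c}g(f_i)$, after which taking $\sup_{g\in\mathcal G}|\cdot|$ is immediate. The only cosmetic difference is that the paper packages the identity $\text{(mean difference)}=2\sum_i(y_{W,i}-\bar y_W)g(f_i)/D$ into a separate lemma and then matches it to $\tilde R^2$, whereas you expand $\tilde R^2$ directly, cancel the $\sum_i g(f_i)^2$ terms, and evaluate the weights in one pass; the substance is identical.
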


Suppose that $\mathcal{G}_{\text{obs}}=\{(g(x_1),\ldots,g(x_n))^\top: g\in \mathcal{G}\}$ is a linear space.
Then $\hat{g} := \argmin_{g \in \mathcal{G}} R^2(f,W,g)$
becomes the projection of $\{y_{W,i}\}$ onto $\mathcal{G}_{\text{obs}}$ and thus it can be shown that
$R^2(f,W,\hat{g})$ is the squared correlation of $\{y_{W,i}\}$ and $\{\hat{g}_i\}$
and $\tilde{R}^2(f,W,\hat{g})=2 R^2(f,W,\hat{g}).$  
In fact, the additional term in $\tilde{R}^2(f,W,g)$ is introduced for $\mathcal{G}_{\text{obs}}$  not being a linear space.
An interesting new implication of Theorem \ref{thm:dr_to_ipm_log_sphere_noBayes}
is that IPM is somehow related to the correlation between the class label and a discriminator.

\subsection{A surrogate deviance for supIPM: Doubly Regressing \texorpdfstring{$R^{2}$}{R2}}\label{sec:surrogate_dr}

Theorem \ref{thm:dr_to_ipm_log_sphere_noBayes} implies
$
\Delta_{n,\mathcal{W},\mathcal{G}}(f)=\sup_{W\in \mathcal{W}} \sup_{g\in \mathcal{G}}
\tilde{R}^2(f,W,g),
$
which is not easy to calculate since $W$ is not a numerical quantity and so a gradient ascent algorithm cannot be applied.
To resolve this computational problem, we introduce a smoother version of 
$\tilde{R}^2(f,W,g)$ so-called the {\it Doubly Regressing} $R^2$ ($\textup{DR}^2$) as follows.

Suppose that $\mathcal{W}=\{W_1,\ldots,W_M\}.$
For each $i\in [n],$ define $c_i \in \{-1,1\}^{M}$ with $c_{im} = 2\mathbb{I}(s_i \in W_m) - 1.$
Given a predictor $f$, discriminator $g$, and weight vector $\mathbf{v}\in\mathcal{S}^M$, we define
$$
\textup{DR}^2(f, \mathbf{v},g) 
:= 1-\frac{\left\{ \sum_{i=1}^n(\mathbf{v}^\top c_i-g(f_{i}))^2-\sum_{i=1}^n (g(f_{i})- \mu_{\mathbf{v}})^2\right\}}{
\sum_{i=1}^n (\mathbf{v}^\top c_i- \mu_{\mathbf{v}})^2},
$$
where $\mu_\mathbf{v} = \frac{1}{n}\sum_{i=1}^n \mathbf{v}^\top c_i$ 
and $\mathcal{S}^M$ is the unit sphere on $\mathbb{R}^M.$
The name `Doubly Regressing' is used since we regress input $g(f_{i})$ and output $\mathbf{v}^{\top} c_{i}$ simultaneously when calculating 
$\textup{DR}^2.$

Note that DR$^2(f,\mathbf{v},g)$ is equal to $\tilde{R}^2(f,W_m,g)$ when $\mathbf{v}=\mathbf{e}_k,$
where $\mathbf{e}_k$ is the vector whose entries are all 0 except the $k^{\textup{th}}$ entry being 1.
Thus, it is obvious that the supIPM is bounded as:
\begin{equation}\label{eq:delta_dr_bound}
    \sup_{W \in \mathcal{W}} \textup{IPM}_{\mathcal{G}} (\mathbb{P}_{f,W}^{n},\mathbb{P}_{f,W^c}^{n}) =
    \Delta_{n,\mathcal{W},\mathcal{G}}(f) \le \sup_{g \in \mathcal{G}, \mathbf{v} \in \mathcal{S}^M}
    |\textup{DR}^2(f, \mathbf{v},g)|.  
\end{equation}

Building on this inequality, our proposed surrogate subgroup-subset fairness gap for supIPM is
$\textup{DR}_{n,\mathcal{W},\mathcal{G}}(f) := 
\sup_{g \in \mathcal{G}, \mathbf{v} \in \mathcal{S}^M} z\textup{-DR}^2 (f, \mathbf{v},g)$
where
\begin{equation}
    \label{eq:z}
    z\textup{-DR}^2 (f, \mathbf{v},g)=\log \left( \frac{1 + |\textup{DR}^2(f, \mathbf{v},g)|/2}{1 - |\textup{DR}^2(f, \mathbf{v},g)|/2} \right).
\end{equation}
We apply the Fisher's $z$-transformation to $|\textup{DR}^2|/2$ for numerical stability.
We refer to $\textup{DR}_{n,\mathcal{W},\mathcal{G}}(f)$ as the \textit{Doubly Regressing (DR)} subgroup-subset fairness gap. 
A smaller value of $\textup{DR}_{n,\mathcal{W},\mathcal{G}}(f)$ indicates a higher level of subgroup-subset fairness of $f.$

\subsection{Algorithm: Doubly Regressing Adversarial learning for Fairness (DRAF)}

Based on the DR gap, we introduce \textit{DRAF (Doubly Regressing Adversarial learning for Fairness)} algorithm, 
which trains $f$ by minimizing $ \frac{1}{n} \sum_{i=1}^{n} l(y_{i}, f(x_{i}, s_{i})) + \lambda  \textup{DR}_{n,\mathcal{W},\mathcal{G}}(f), $ for a given loss $l$ (e.g., cross-entropy) and Lagrangian multiplier $\lambda.$
A key feature is that a single discriminator is used regardless of $\mathcal{W}.$

In the learning algorithm, we iteratively train the prediction model $f$ and  the pair of discriminator $g$ and weight vector $\mathbf{v}$
iteratively.
At each iteration, we
(i) update $f$ by applying a gradient descent algorithm to minimize
$ \frac{1}{n} \sum_{i=1}^{n} l(y_{i}, f(x_{i}, s_{i})) + \lambda  z\textup{-DR}^2 (f, \mathbf{v},g)$ while $g$ and $\mathbf{v}$ are fixed,
and then
(ii) update $g$ and $\mathbf{v}$ by applying a gradient ascent algorithm to maximize $z\textup{-DR}^2 (f, \mathbf{v},g)$ while $f$ being fixed.
Algorithm \ref{alg:fair_classify} in \cref{sec:appen-expdetail} below provides the outline of our proposed algorithm.


\section{Experiments}\label{sec:exp}

In this section, we empirically verify that DRAF can successfully achieve both marginal and subgroup fairness:
(i) it shows competitive performance to baseline methods for datasets with less sparse subgroups;
(ii) it outperforms baselines for datasets with extremely sparse subgroups.
After that, we conduct analyses on the effect of managing $\mathcal{W}$ and the choice of discriminator.

\subsection{Settings}\label{sec:exp-setting}

\paragraph{Datasets}
We consider the following four benchmark datasets (three tabular datasets and a text dataset) popularly used in algorithmic fairness research.
See \cref{sec:appen-datasets} for more details.

\textsc{Adult} (Tabular) \citep{misc_adult_2}:
The class label is binary indicating the income above 50k\$.
The input features are several demographic census features.
For the sensitive attributes, we consider sex, race, age, and marital-status, so that $q = 4.$

\textsc{Dutch} (Tabular) \citep{van2000integrating}:
The class label is binary indicating occupation level.
The input features are several socio-economic features.
For the sensitive attributes, we consider sex and age, so that $q = 2$.

\textsc{CivilComments} (Text) \citep{borkan2019nuanced}:
The class label is binary, indicating comment toxicity.
The input features are representations extracted from the pre-trained DistilBERT model \citep{Victor2019distillbert}.
For the sensitive attributes, we consider sex (male/female/other), race (black/white/asian/other), and religion (christian/other), which are non-binary, resulting in 24 subgroups.

\textsc{Communities} (Tabular) \citep{redmond2002data}: 
The class label is binary, indicating whether the violent crime rate is above a threshold.
The input features are 122 community-level attributes covering demographics and economic indicators.
For the sensitive attributes, we consider 
race, racial per-capita, and language/immigration variables
so that $q = 18.$

\cref{tab:datasets} summarizes the basic statistics of the four datasets and \cref{fig:subgroup_hists} presents the distribution of subgroup sizes for the datasets.
These statistics highlight the severity of data sparsity: in particular, \textsc{Communities} suffers from extreme sparsity with the vast majority of subgroups contain very few samples.
We construct a 60/20/20 split for train, validation, and test, respectively for the datasets except \textsc{Communities}.
Due to the extreme sparsity of certain subgroups in \textsc{Communities} dataset, ensuring sufficient samples within the test set would be important, so we use with 50/10/40 ratios.
We repeat this procedure five times randomly and report the average performance.

\begin{table}[h]
    \footnotesize
    \centering
    \caption{
        Summary of datasets.
        ``\# Subgroup'' indicates the possible maximum number of subgroups ($= 2^{q}$).
        ``\# Actual Subgroup'' indicates the actual number of subgroups in the datasets.
        ``\# Sparse subgroup'' indicates the number of subgroups whose size is at most 1\% of the total sample size $n.$
    }
    \label{tab:datasets}
    \begin{tabular}{l|ccccc}
        \toprule
        Dataset & $n$ & $q$ & \# Subgroup & \# Actual Subgroup & \# Sparse subgroup \\
        \midrule
        \textsc{Adult}  & $48{,}842$ & $4$ & $16$ & $16$ & $2$ \\
        \textsc{Dutch} & $60{,}420$ & $2$ & $4$ & $4$ & $0$ \\
        \textsc{CivilComments} & $3{,}365$ & $3$ (non-binary) & $24$ & $24$ & $3$ \\
        \textsc{Communities}  & $1{,}994$ & $18$ & $262{,}144$ & $1{,}180$ & $1{,}175$ \\
        \bottomrule
    \end{tabular}
\end{table}

\begin{figure}[h]
    \vskip -0.1in
    \centering
    \begin{subfigure}{0.24\linewidth}
        \includegraphics[width=\linewidth]{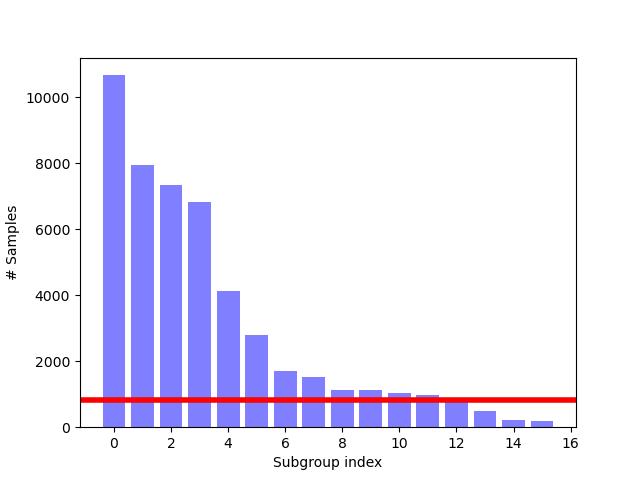}
        \caption{\textsc{Adult}}
        \label{fig:adult_subgroup_hist}
    \end{subfigure}
    \begin{subfigure}{0.24\linewidth}
        \includegraphics[width=\linewidth]{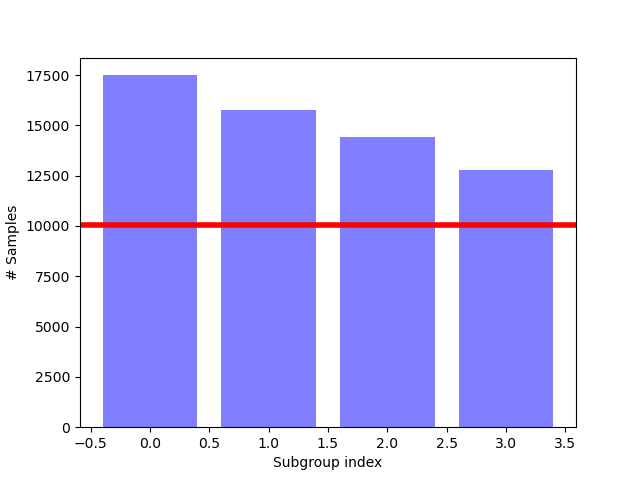}
        \caption{\textsc{Dutch}}
        \label{fig:dutch_subgroup_hist}
    \end{subfigure}
    \begin{subfigure}{0.24\linewidth}
        \includegraphics[width=\linewidth]{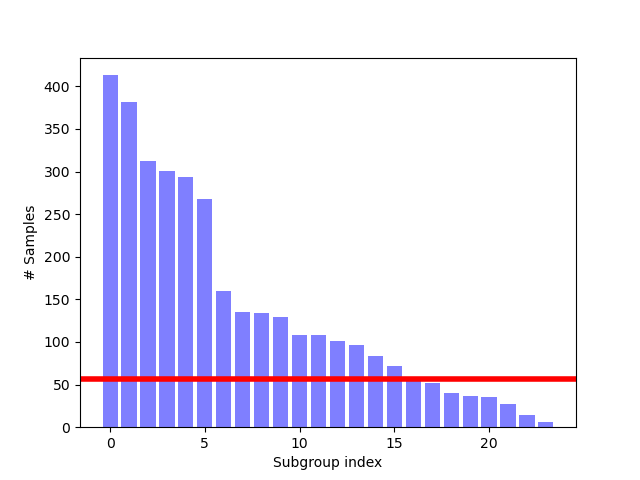}
        \caption{\textsc{CivilComments}}
        \label{fig:civil_subgroup_hist}
    \end{subfigure}
    \begin{subfigure}{0.24\linewidth}
        \includegraphics[width=\linewidth]{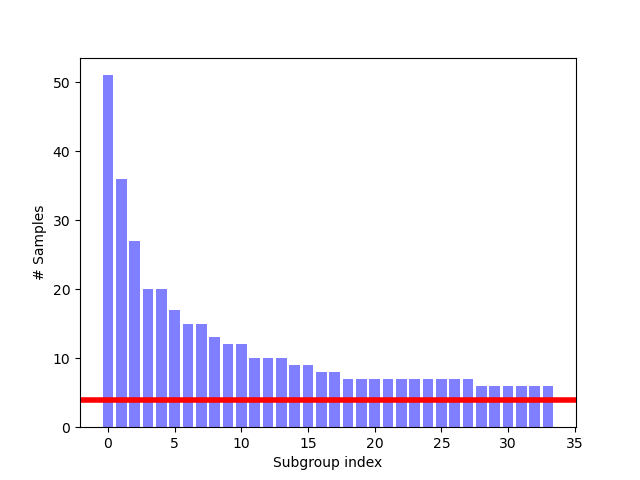}
        \caption{\textsc{Communities}}
        \label{fig:communities_subgroup_hist}
    \end{subfigure}
    \caption{
    Histograms of subgroup sizes. 
    The red horizontal line indicates $\gamma n$ used for the main analysis in \cref{sec:realdata}.
    The subgroup indices are assigned by sorting the subgroup sizes.
    }
    \label{fig:subgroup_hists}
    \vskip -0.1in
\end{figure}

\paragraph{Model and Performance measures}

Since the four datasets are for binary classification tasks, we only consider one dimensional prediction model $f$
for which we consider a single-layered MLP and
apply the sigmoid activation at the output layer to return the prediction score between $[0,1].$
Recall that $f_{i}=f(x_i,s_i)$ and let $\hat{y}_{i} = 2\mathbb{I}( f_{i} \ge 1/2 )-1.$
We consider the accuracy $\textup{Acc}(f) = \frac{1}{n} \sum_{i=1}^{n} \mathbb{I}(y_{i} = \hat{y}_{i} )$
for prediction performance of $f.$

For fairness performance, we consider the $l^{\textup{th}}$-order marginal fairness and subgroup level fairness.
For distributional fairness, we use the Wasserstein distance, but only for the first-order marginal subgroup only, as calculating it for higher-orders would be unstable due to the lack of samples.
To be more specific, let $ \hat{p} := \frac{1}{n}\sum_{i=1}^n \mathbb{I} (\hat{y}_{i}=1)$ and $ \hat{p}_s := \frac{1}{n_s}\sum_{i: s_i=s}\mathbb{I}(\hat{y}_{i}=1), s \in \{0, 1\}^{q} $ be the overall and subgroup-specific ratios of
positive prediction, respectively.
For a given order $l \in [q],$ consider $L \subseteq [q]$ such that $\vert L \vert = l.$
Let $s_{i}[L] := (s_{ij})_{j \in L} \in \{0, 1\}^{l}$ be the sensitive attribute subvector of the $i^{\textup{th}}$ individual.
For a given $a \in \{0,1\}^{l}$,
define $\hat p_{L}^{(a)}:=\frac{1}{n_{L, a}}\sum_{i:s_i[L] = a}\mathbb{I}(\hat y_i=1),$ where $n_{L}^{(a)}:=\sum_{i=1}^{n}\mathbb{I}(s_i[L]=a).$
Let 
$\widehat{\mathbb{P}}_{f}(\cdot):=\frac{1}{n}\sum_{i=1}^{n}\delta_{f_i}(\cdot).$
For a given $j \in [q],$ define
$\widehat{\mathbb{P}}_{f, j \vert a}(\cdot) := \frac{1}{n_{j}^{(a)}} \sum_{i: s_{ij} = a} \delta_{f_{i}}(\cdot),$
where $n_{j}^{(a)} := \sum_{i=1}^{n} \mathbb{I}(s_{ij} =a)$ for $a \in \{0, 1\}.$
\cref{tab:f_measures} describes the fairness performance measures used in the experiments. 

\begin{table}[h]
    \centering
    \caption{
        Fairness performance measures used in our experiments.
        MP, WMP, and SP are abbreviations of Marginal, Wasserstein Marginal, and Subgroup Parity, respectively.
        `$\textup{W}_{1}$' indicates the 1-Wasserstein distance between two probability measures on $\mathbb{R}.$
    }
    \label{tab:f_measures}
    \footnotesize
    \begin{tabular}{c|c|c}
        \toprule
        Name & Meaning & Formula
        \\
        \midrule
        \midrule
        $\textup{MP}^{(l)}$ & $l^{\textup{th}}$-order Marginal fairness
        & $ 
        \max_{L \subseteq [q], \vert L \vert = l} \sum_{a \in \{0,1\}^{l}} 
        \frac{n_{L}^{(a)}}{n} \big|\hat p_{L}^{(a)}-\hat p\big| $
        \\
        $\textup{WMP}$ & Distributional Marginal fairness 
        & $ 
        \max_{j \in [q]} 
        \max \left\{
        \frac{n_{j}^{(0)}}{n} \textup{W}_{1} (\widehat{\mathbb{P}}_{f, j \vert 0}, \widehat{\mathbb{P}}_{f}),
        \frac{n_{j}^{(1)}}{n} \textup{W}_{1} (\widehat{\mathbb{P}}_{f, j \vert 1}, \widehat{\mathbb{P}}_{f})
        \right\} $
        \\
        $\textup{SP}$ & Subgroup fairness
        & $ 
        \max_{s\in\{0,1\}^q} \frac{n_{s}}{n} \big|\hat{p}_s-\hat{p}\big|$
        \\
        \bottomrule
    \end{tabular}
    \vskip -0.1in
\end{table}

\paragraph{Implementation details and Baseline methods}
We sweep the Lagrangian multiplier $\lambda$ from $0.01$ to $10.0$ to control the fairness level.
For the discriminator $\mathcal{G},$ we use the discriminator class used in sIPM \citep{pmlr-v162-kim22b} (i.e., composition of sigmoid and a linear function).
For $\mathcal{W},$ we include the first and second-order marginal subgroups as well as subgroups whose sizes are larger than $\gamma n.$
To find an optimal value of $\gamma,$ we plot Pareto-front lines (for many $\gamma$s) between Acc and SP using validation data, compute the area under the lines, and then choose the one with the largest area.
As a result, we set $\gamma$ to $0.01, 0.001, 0.3,$ and $0.01$ for \textsc{Adult, CivilComments, Dutch}, and \textsc{Communities}, respectively.
We consider four representative approaches as baselines:
(i) Regularization (REG): this approach reduces the marginal disparities for $q$-many sensitive attributes;
(ii) GerryFair (GF) \citep{kearns2018empiricalstudyrichsubgroup,kearns2018preventing}: this approach reduces the (weighted) worst-case disparity $ \max_{s\in\{0,1\}^q} \frac{n_{s}}{n} \big|\hat{p}_s-\hat{p}\big|$;
(iii) Sequential (SEQ) \citep{Hu_2024}: this approach sequentially maps the scores of a pre-trained fairness-agnostic model
in each subgroup to a common barycenter;
%
%
See \cref{sec:appen-expdetail} for more details.

\subsection{Relationship between DR gap and supIPM}\label{sec:dr_supipm}

As theoretically shown in \cref{thm:dr_to_ipm_log_sphere_noBayes} and \cref{eq:delta_dr_bound}, the DR gap (i.e., $\textup{DR}_{n, \mathcal{W}, \mathcal{G}}(f)$) and the supIPM (i.e., $\Delta_{n, \mathcal{W}, \mathcal{G}}(f)$) are closely related, i.e., small DR gap $\implies$ small supIPM.
To numerically confirm this, we provide plots between the DR gap and the supIPM in \cref{fig:dr_vs_as}, indicating that the DR gap is also a numerically valid surrogate quantity for supIPM (i.e., reducing DR results in reducing supIPM).
Note that we use the Wasserstein distance for supIPM.

\begin{figure}[h]
    \vskip -0.1in
    \centering
    \begin{subfigure}{0.22\linewidth}
        \includegraphics[width=\linewidth]{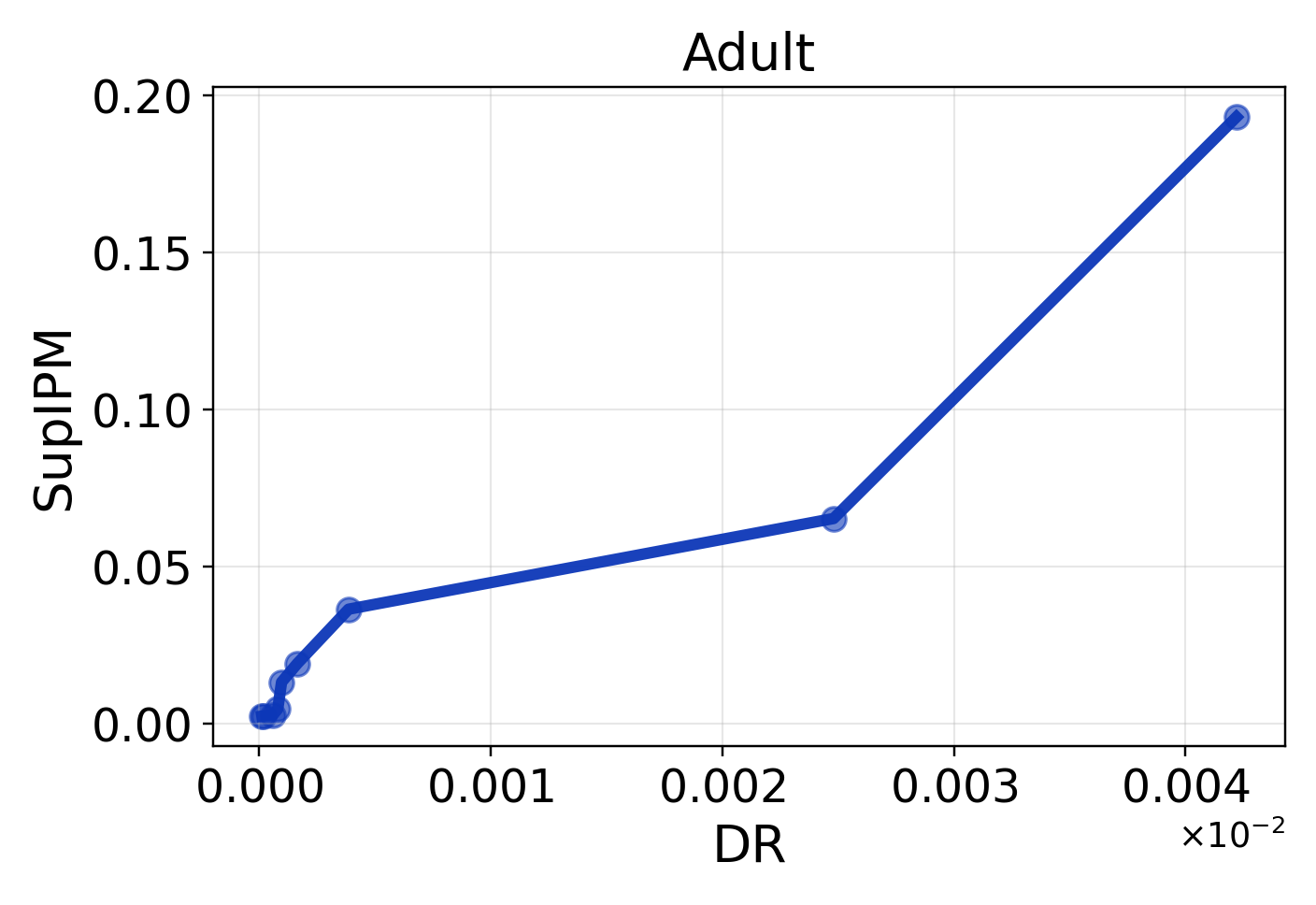}
    \end{subfigure}
    \begin{subfigure}{0.22\linewidth}
        \includegraphics[width=\linewidth]{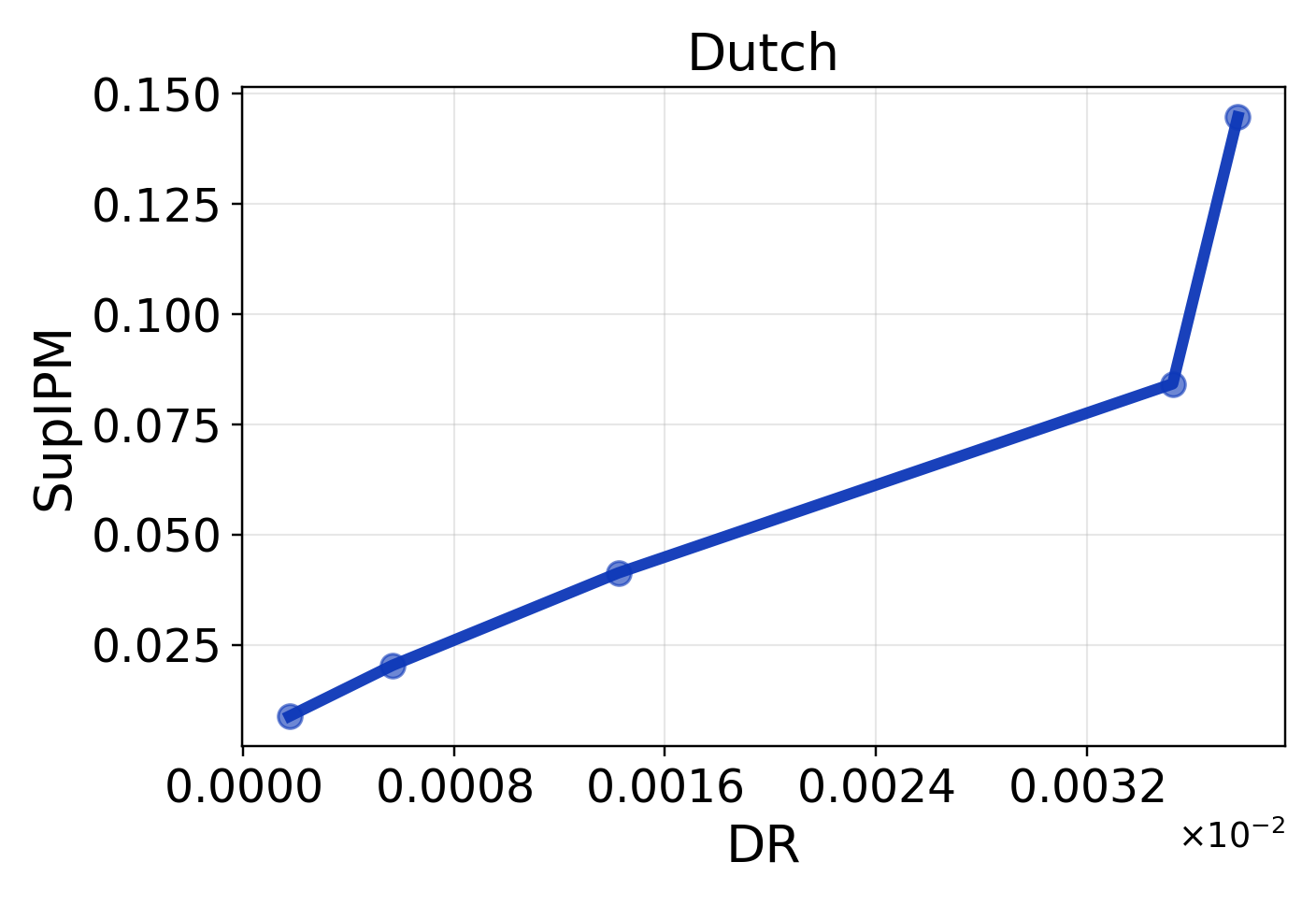}
    \end{subfigure}
    \begin{subfigure}{0.22\linewidth}
        \includegraphics[width=\linewidth]{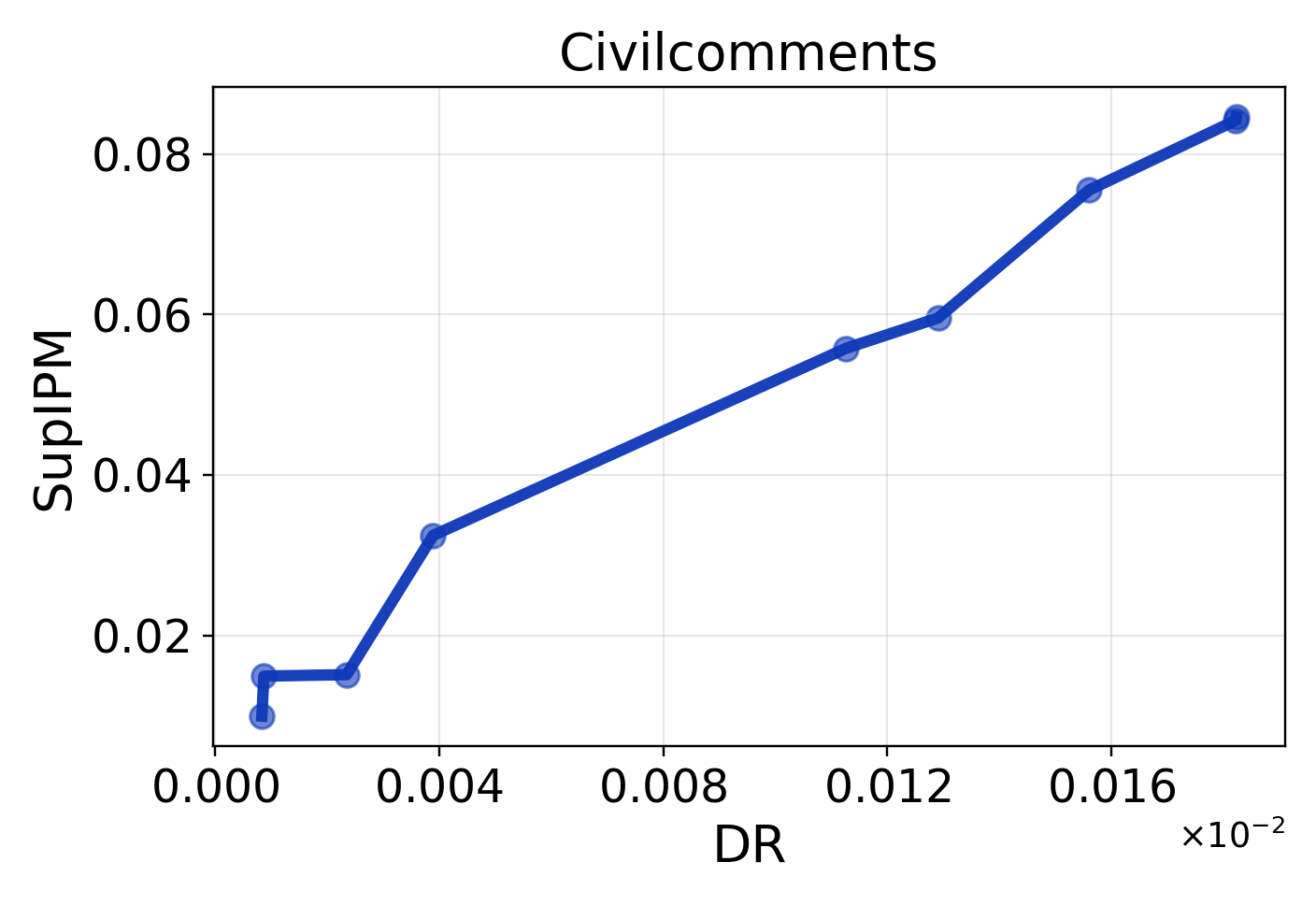}
    \end{subfigure}
    \begin{subfigure}{0.22\linewidth}
        \includegraphics[width=\linewidth]{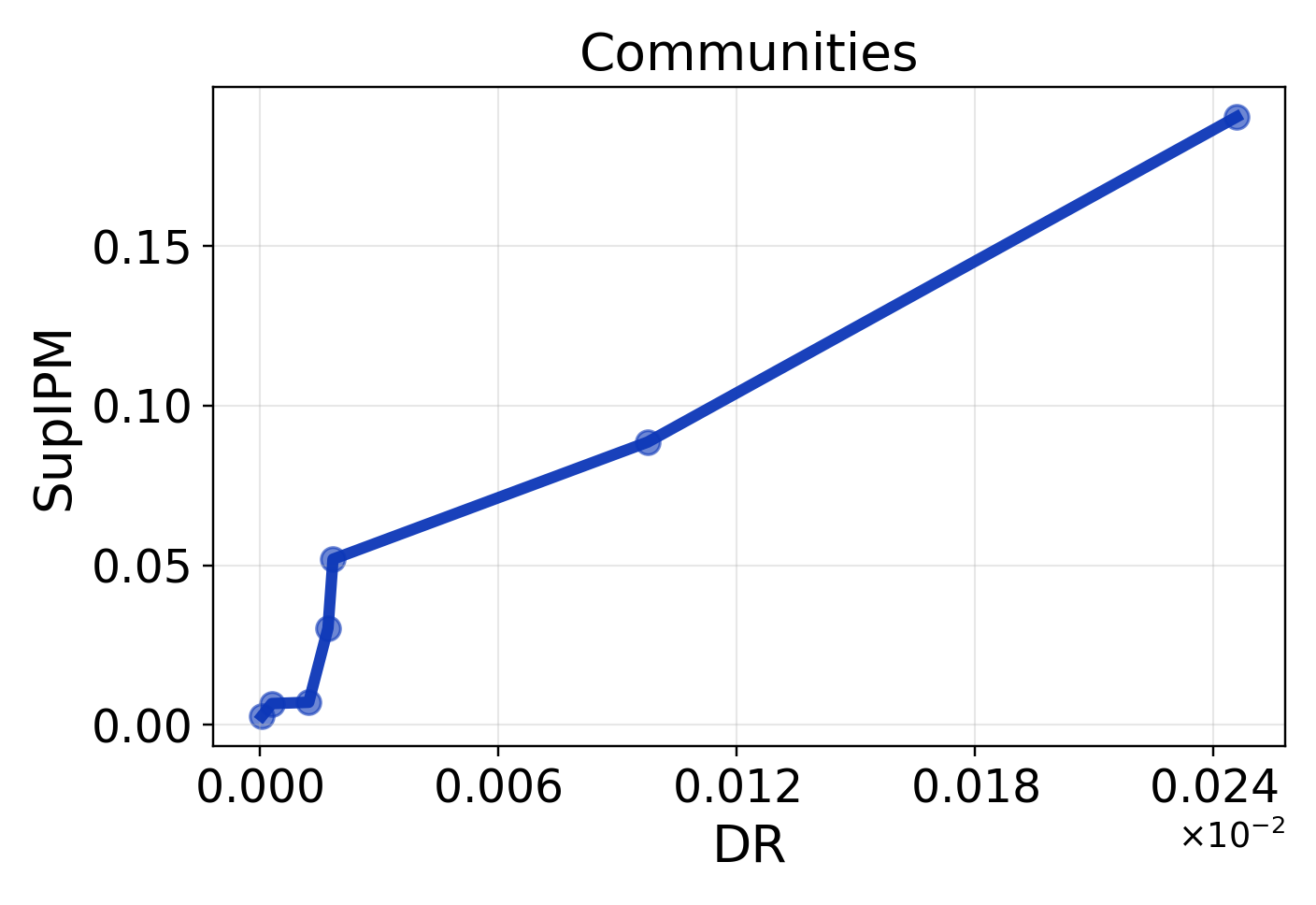}
    \end{subfigure}
    \caption{
    Empirical relationship between the DR gap and supIPM on \textsc{Adult, Dutch, CivilComments}, and \textsc{Communities} datasets.
    }
    \label{fig:dr_vs_as}
\end{figure}

Moreover, \cref{fig:dr_vertex} in \cref{sec:appen-dr_supipm} empirically shows that the DR gap and the supIPM are almost identical: after learning, the weight vector $\mathbf{v}$ becomes nearly a vertex of the simplex, so that $\textup{DR}^{2}(f,\mathbf{v},g) \approx \tilde{R}^{2}(f,W_{m},g)$ in practice.
This finding empirically supports the claim that the DR gap is a valid surrogate for the supIPM.


\subsection{Performance comparison}\label{sec:realdata}

\paragraph{Trade-off between accuracy and fairness}

\cref{fig:main_trade_offs} compares the trade-off between fairness levels (SP and $\textup{MP}^{(1)}$) and accuracies of the five methods - DRAF, three baselines and unfair prediction model.
Since the fairness level is not controllable for SEQ and unfair prediction model, their results are given as points instead of lines.
\cref{fig:append-main_trade_offs} in \cref{sec:appen-realdata} presents similar results for other fairness measures ($\textup{WMP}$ and $\textup{MP}^{(2)}$).
The main findings can be summarized as follows.

\begin{figure}[h]
    \centering
    \includegraphics[width=0.24\linewidth]{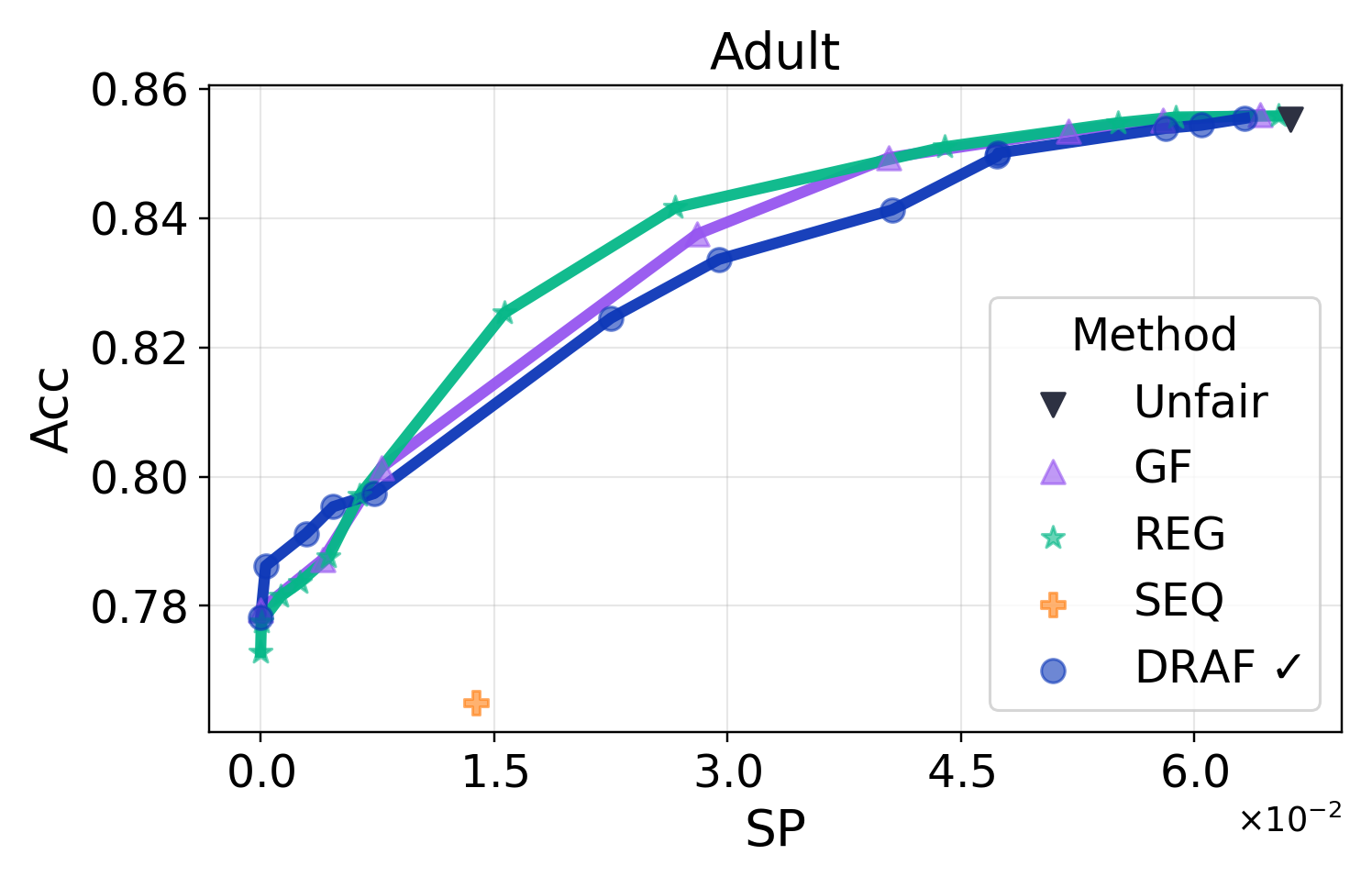}
    \includegraphics[width=0.24\linewidth]{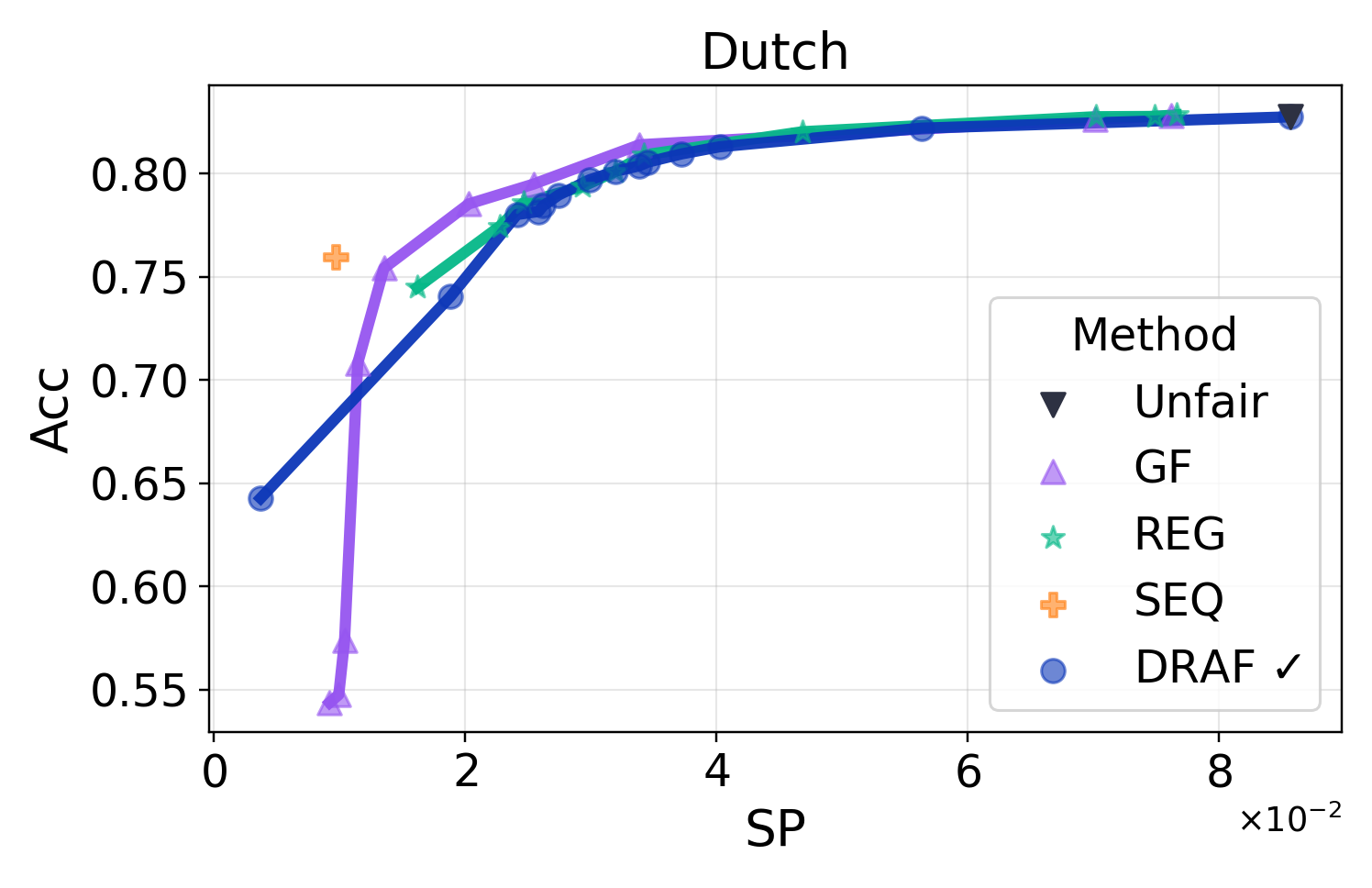}
    \includegraphics[width=0.24\linewidth]{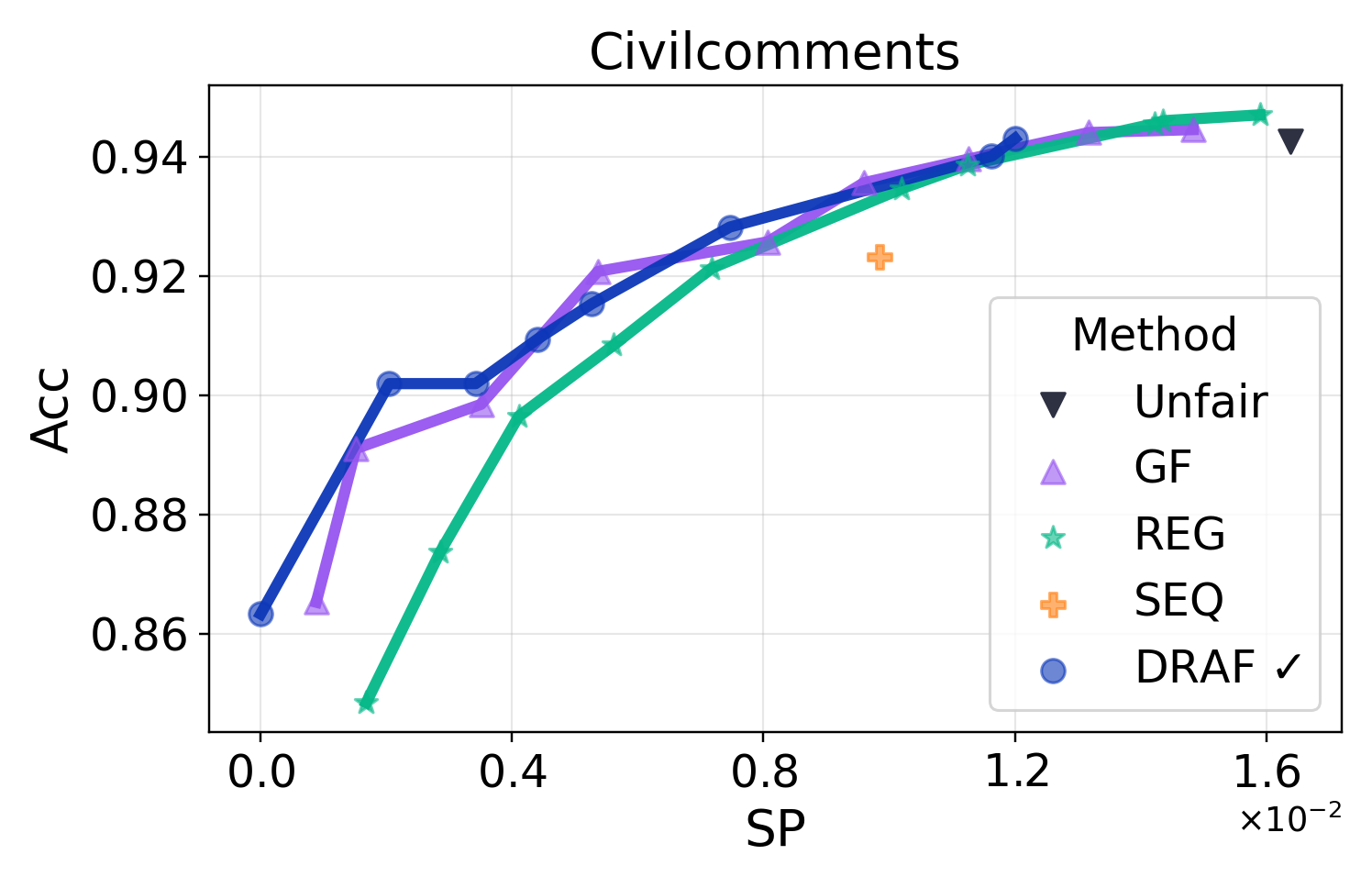}
    \includegraphics[width=0.24\linewidth]{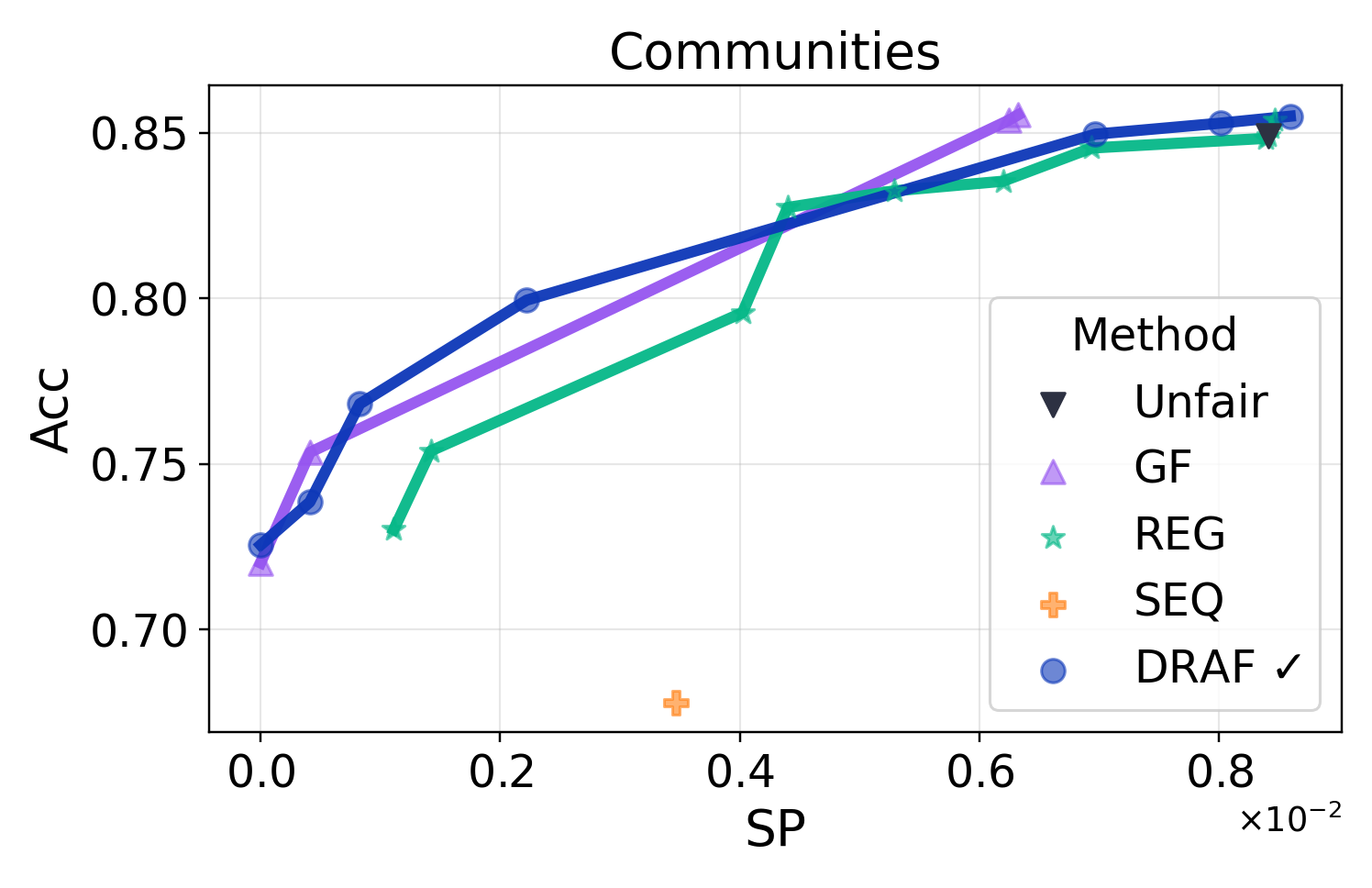}
    \\
    \includegraphics[width=0.24\linewidth]{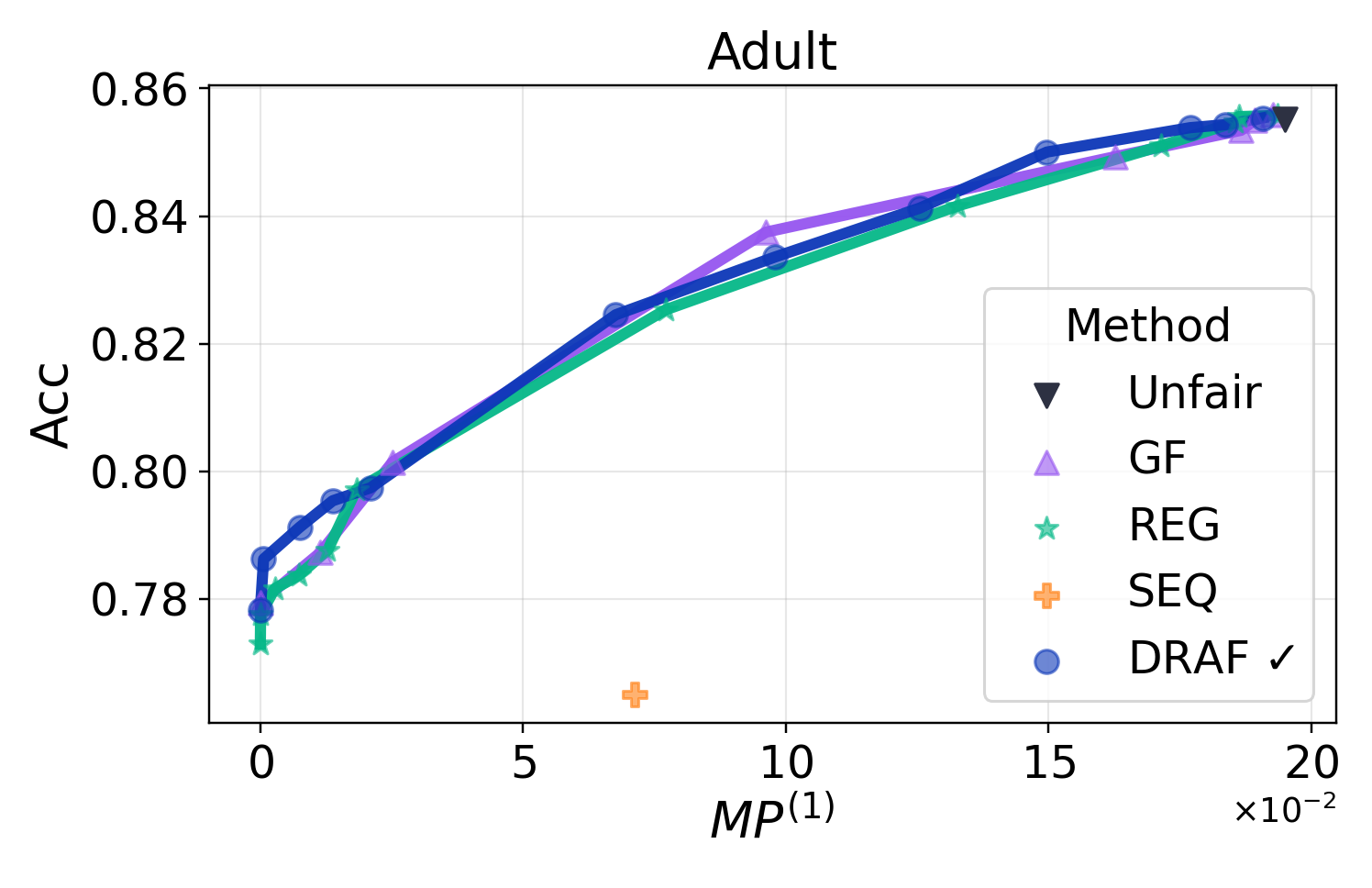}
    \includegraphics[width=0.24\linewidth]{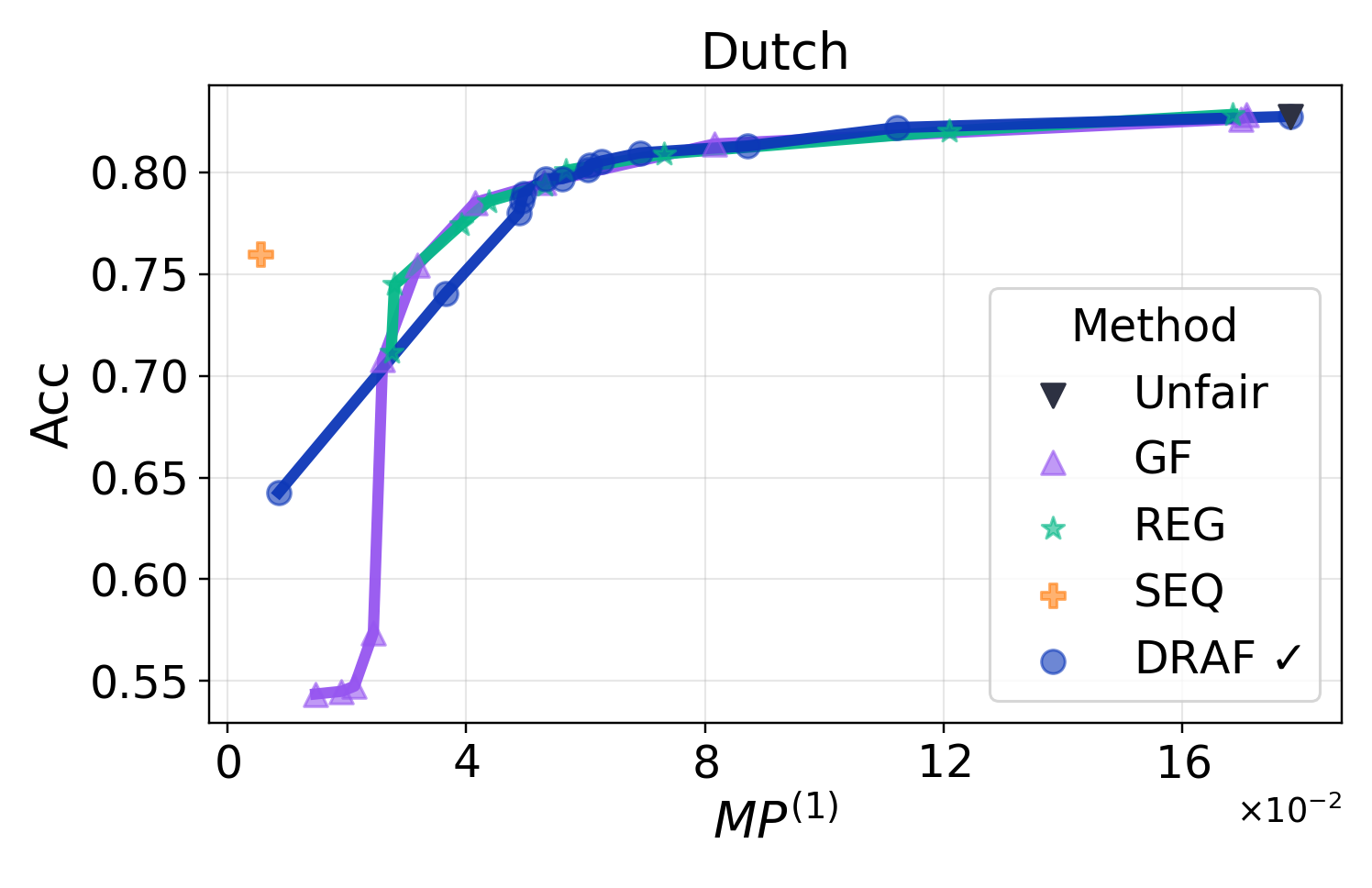}
    \includegraphics[width=0.24\linewidth]{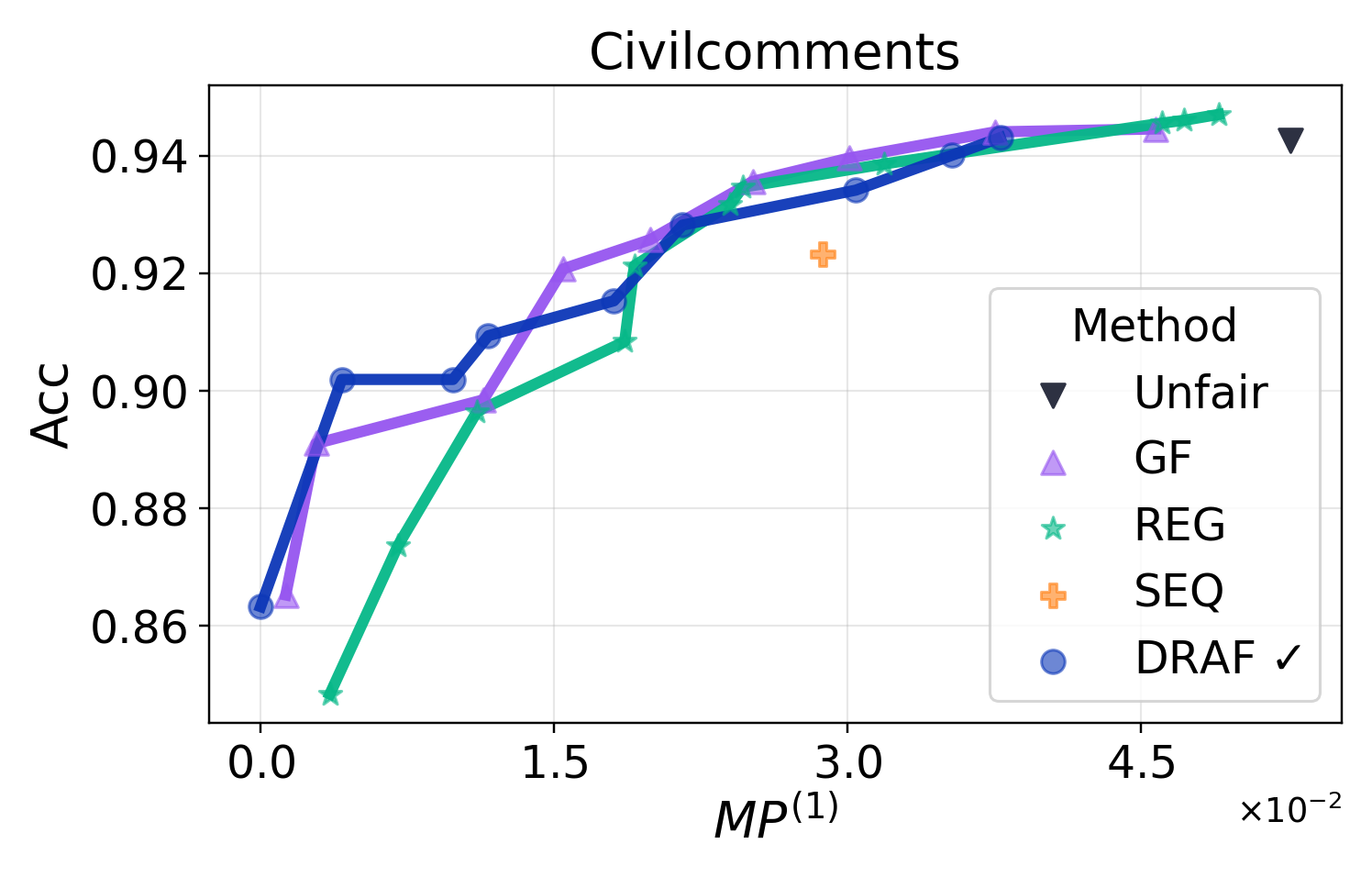}
    \includegraphics[width=0.24\linewidth]{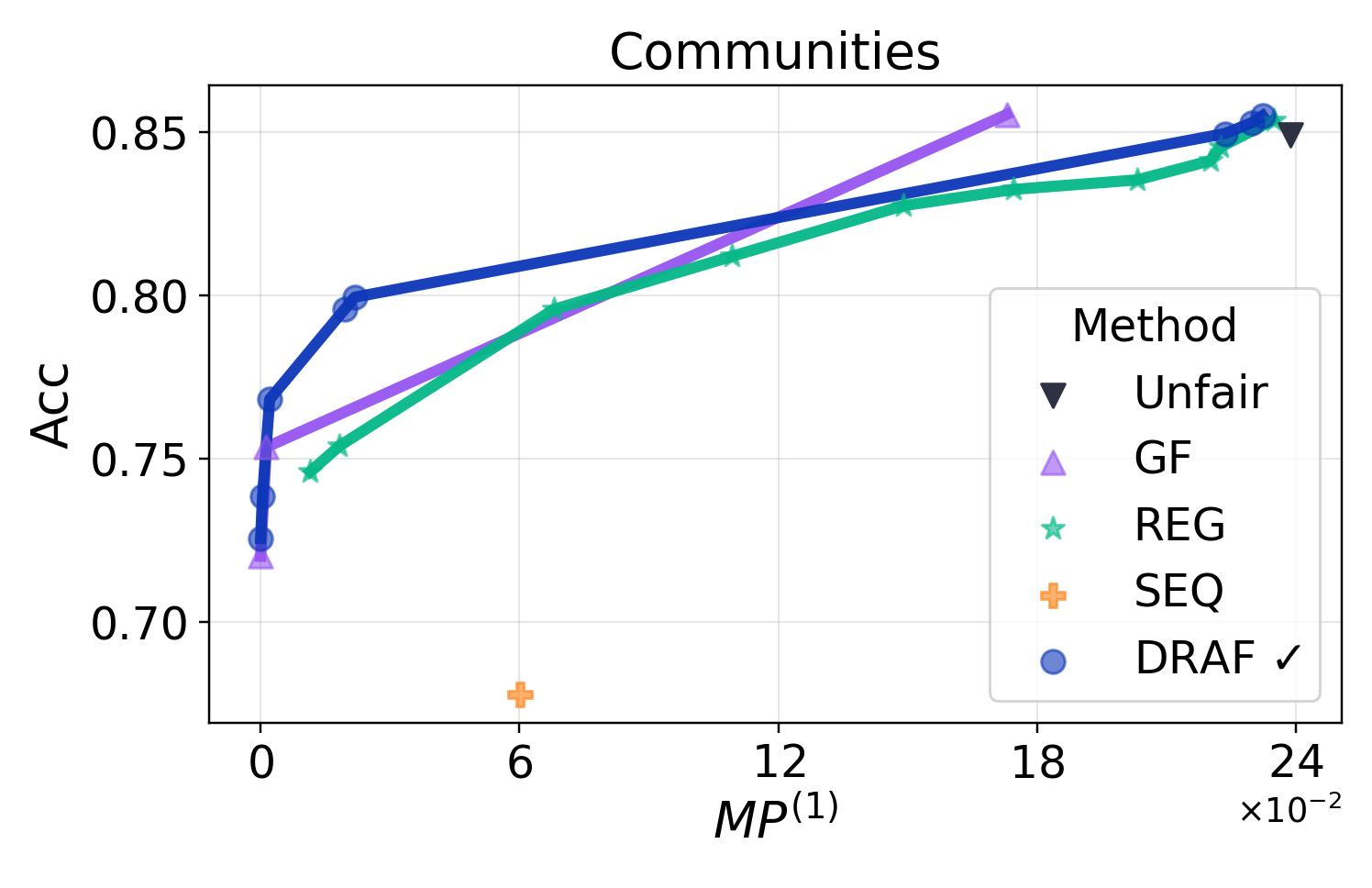}
    \caption{
    Trade-off between fairness level (top: $\textup{SP}$, bottom: $\textup{MP}^{(1)}$) and accuracy.
    }
    \label{fig:main_trade_offs}
    \vskip -0.1in
\end{figure}

\begin{itemize}[leftmargin=1.2em, itemsep=1pt, topsep=1pt]
    \item Datasets with less sparse subgroups (\textsc{Adult, Dutch} and \textsc{CivilComments}):
    For \textsc{Adult} and \textsc{Dutch}, the three methods REG, GF, and DRAF perform similarly on both first-order marginal and subgroup fairness.
    Note that the slight better performance of REG on \textsc{Adult} is due to a training-test data discrepancy: we observe that the three methods perform nearly the same on the training data.
    Specifically for \textsc{CivilComments}, REG underperforms GF and DRAF for SP, while GF slightly underperforms DRAF at small $\textup{MP}^{(1)}.$
    These results recommend using DRAF for achieving both subgroup and first-order marginal fairness, even on datasets with less sparse subgroups.
    Similar results on an additional dataset (\textsc{ACSIncome} \citep{ding2022retiringadultnewdatasets}) are shown in \cref{fig:acsincome} in \cref{sec:appen-realdata}.
    
    \item Datasets with sparse subgroups (\textsc{Communities}):
    DRAF outperforms REG on both first-order marginal and subgroup fairness, and GF on first-order marginal fairness.
    These results suggest that reducing only first-order marginal fairness (REG) or only subgroup fairness (GF) would be suboptimal, and so DRAF is particularly effective when subgroups are sparse.
    See \cref{table:appen-subadult} in \cref{sec:appen-realdata} for similar results on a subsampled \textsc{Adult} dataset with sparse subgroups.
\end{itemize}

\paragraph{Correlation between subgroup fairness and first-order marginal fairness}

We analyze the correlation between subgroup fairness and first-order marginal fairness, to investigate how a given algorithm can simultaneously control the both well.
\cref{fig:trade_off_sp_mp1} plots subgroup fairness (SP) versus first-order marginal fairness ($\textup{MP}^{(1)}$) for DRAF, GF, and REG.
To quantify their correlation, we fit a linear regression and calculate the SSE (Sum of Squared Errors).
The results show that the SSE for GF and REG is larger than that for DRAF in most cases, with a large margin for \textsc{Communities}.
It suggests that focusing solely on subgroup fairness (GF) or first-order marginal fairness (REG) does not guarantee the other, whereas DRAF can achieve both regardless of the sparsity.
This highlights the benefit of DRAF: 
subgroup and first-order marginal fairness tend to behave together, so we can control both with a single $\lambda$ without unexpected unfairness.

\begin{figure}[h]
    \vskip -0.1in
    \centering
    \includegraphics[width=0.24\linewidth]{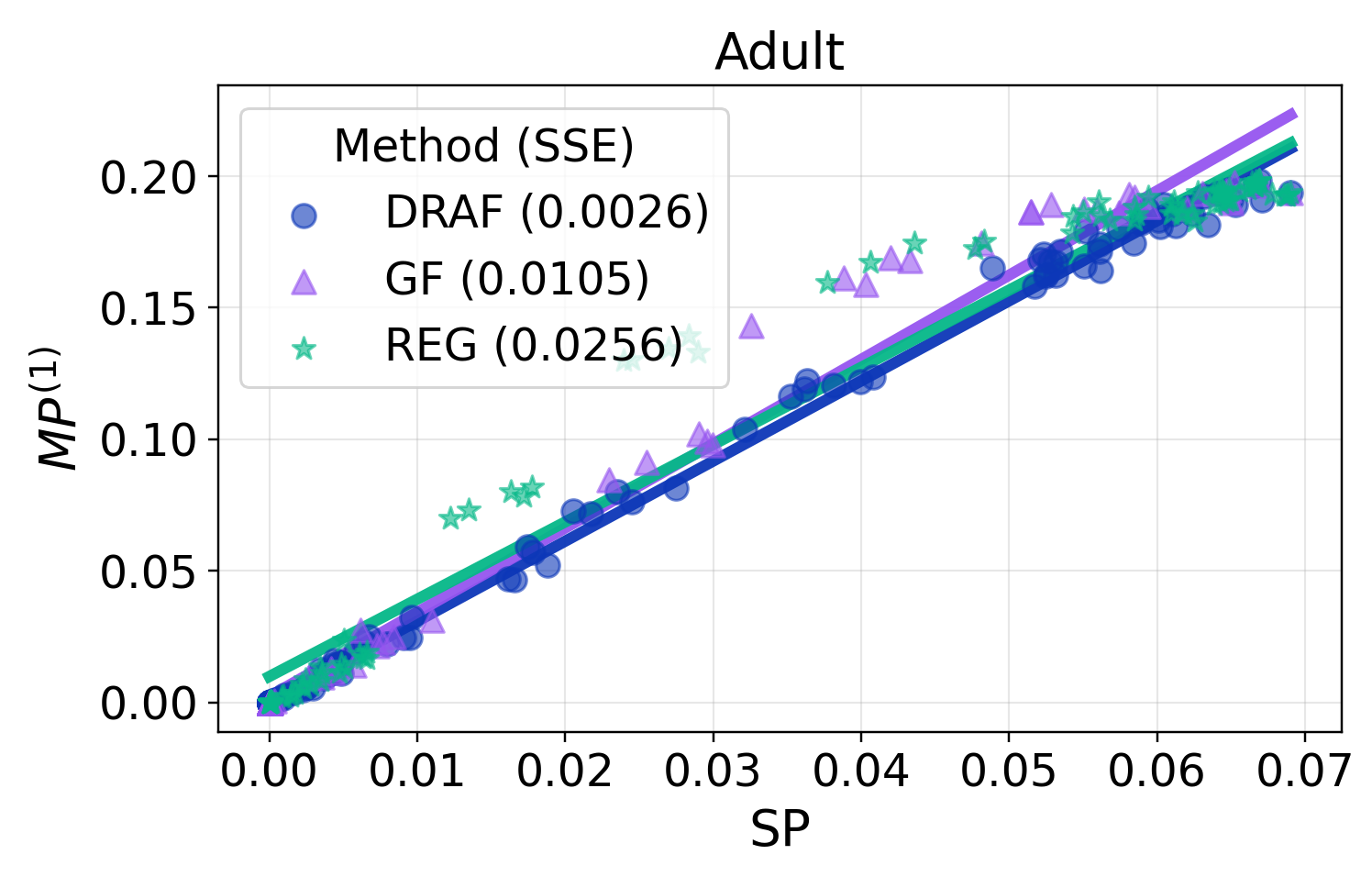}
    \includegraphics[width=0.24\linewidth]{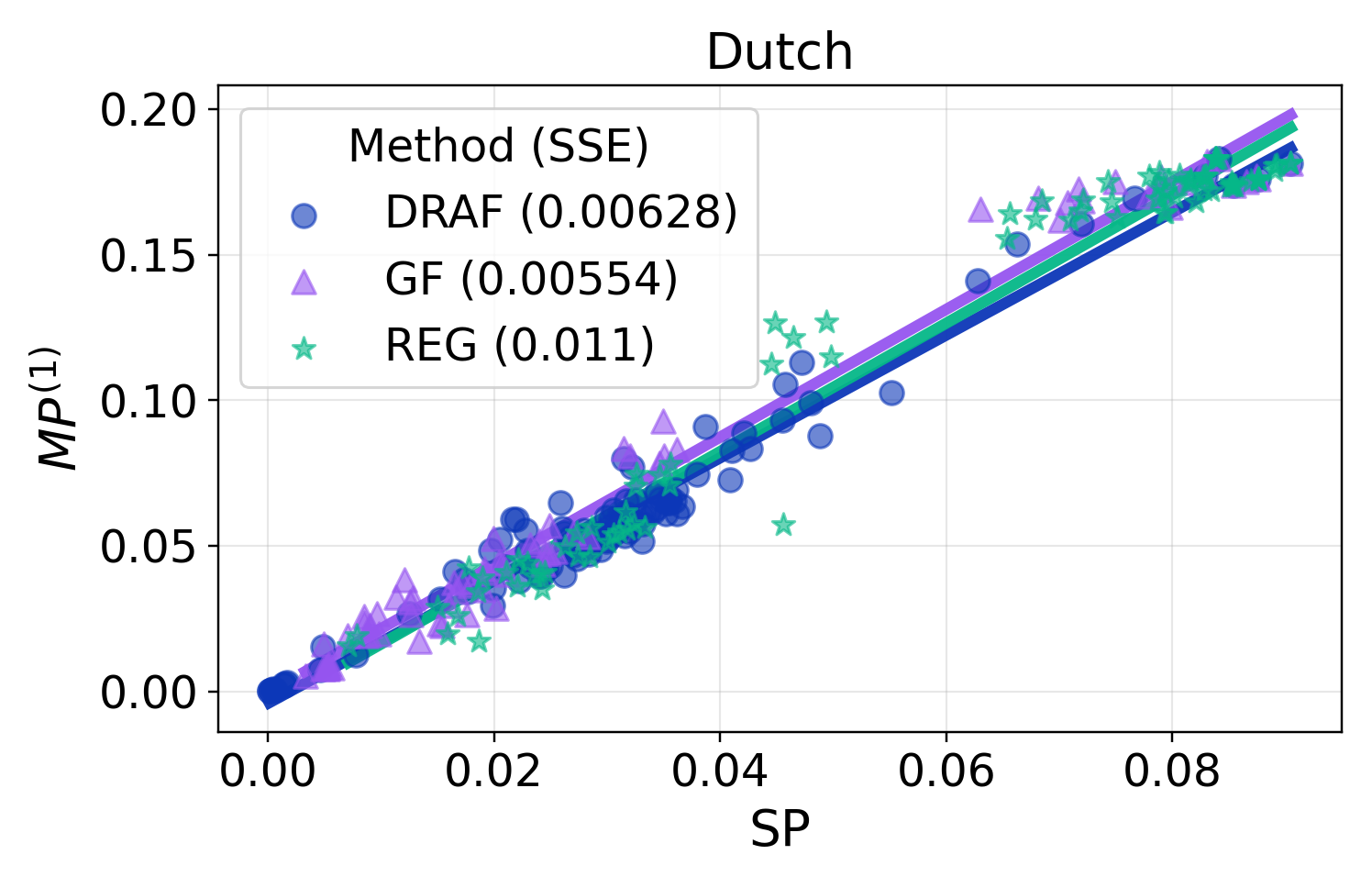}
    \includegraphics[width=0.24\linewidth]{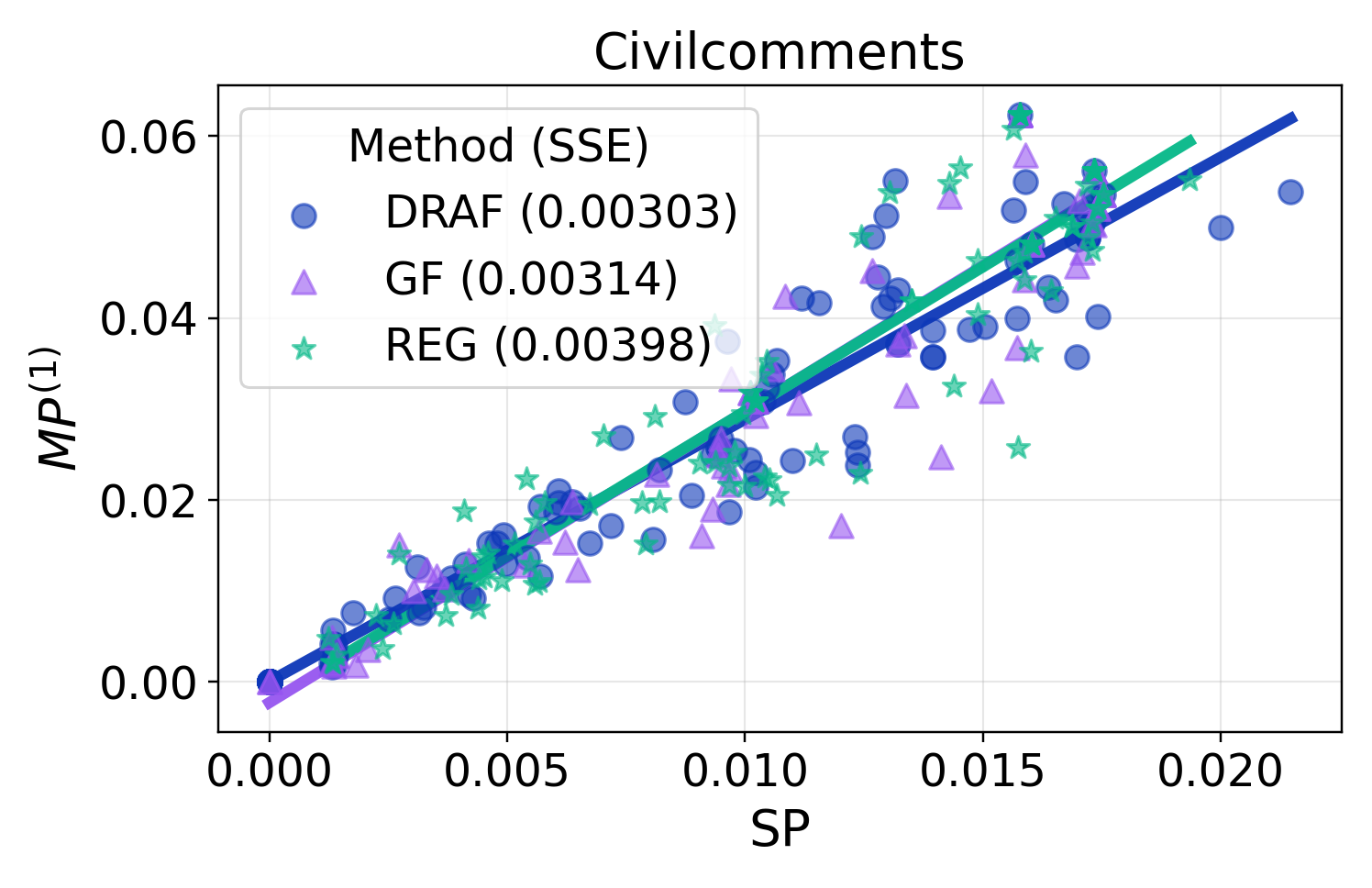}
    \includegraphics[width=0.24\linewidth]{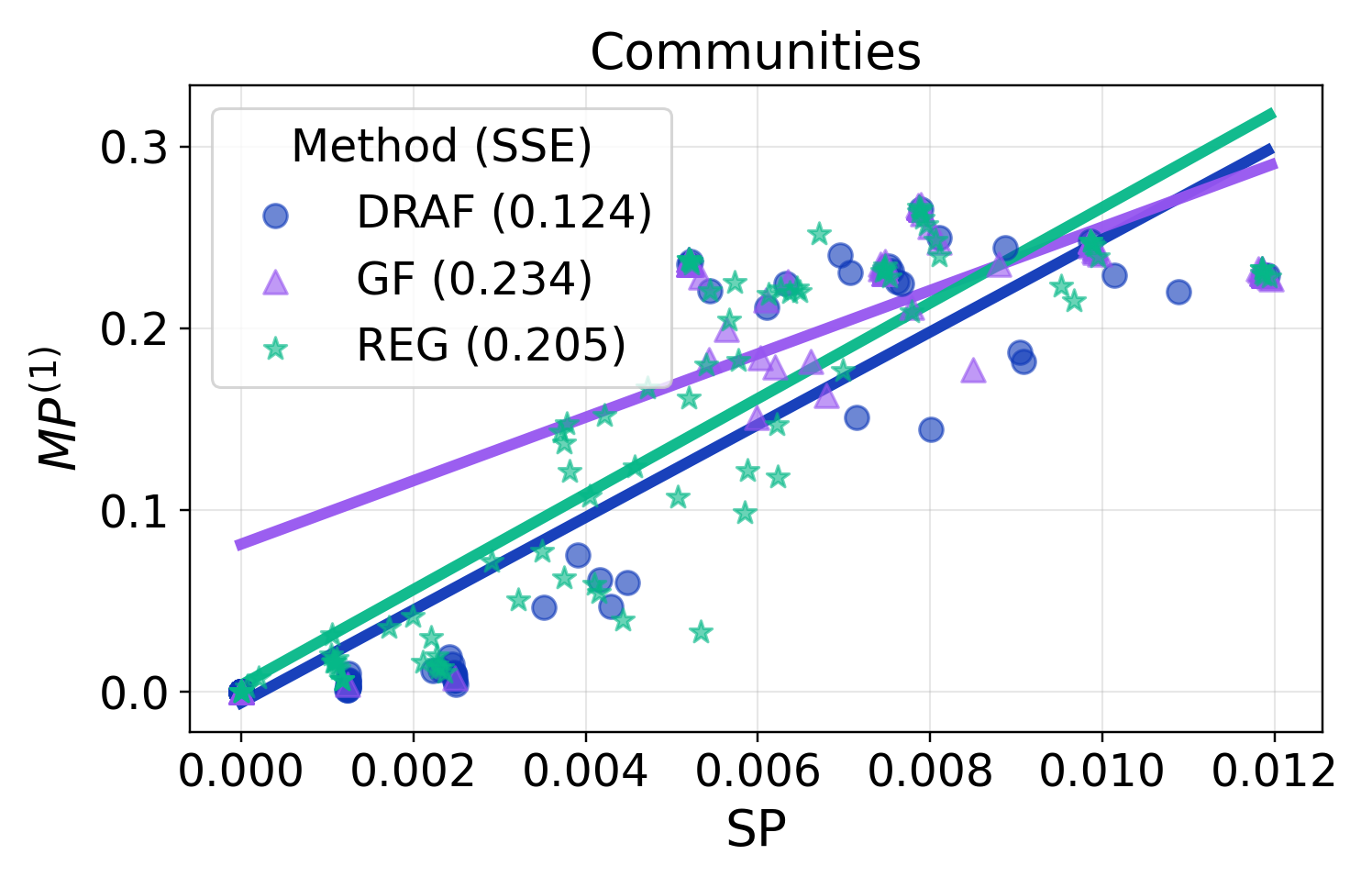}
    \caption{
    Scatter plots between SP and $\textup{MP}^{(1)}$ with linear regression lines, and SSE on \textsc{Adult, Dutch, CivilComments}, and \textsc{Communities} datasets.
    }
    \label{fig:trade_off_sp_mp1}
    \vskip -0.1in
\end{figure}


\subsection{Extensions of DRAF}

\paragraph{Multi-class classification}
We empirically discuss that DRAF is not limited to binary classification but can be extended to multi-class classification problem.
Following \citet{10.5555/3722577.3722707}, we use \textsc{Communities} dataset with five label classes and apply DRAF to the dataset.
As shown in \cref{table:multi_class} in \cref{sec:appen-ext_DRAF}, DRAF achieves competitive performance compared to the baseline method of \citet{10.5555/3722577.3722707}.


\paragraph{Equalized Odds (EO)}
We also show that DRAF can be extended to EO, by a slight modification in $\textup{DR}^{2}(f, \mathbf{v}, g).$
We compare this modified DRAF for EO (DRAF-EO) with FairICP \citep{Lai2025fairicp}, which is an adversarial learning-based method for EO.
\cref{tab:adult_eo_fairness_marginal,tab:adult_eo_fairness_subgroup} in \cref{sec:appen-ext_DRAF} suggest that the DRAF-EO performs competitive to FairICP, in terms of both marginal and subgroup fairness.


\subsection{Additional studies}\label{sec:abl}

\paragraph{Excluding the marginal subgroups from $\mathcal{W}$}

We investigate how the marginal fairness is affected when $\mathcal{W}$ excludes the marginal subgroups.
First, \cref{fig:dr_only_1} in \cref{sec-appen:add} shows that excluding first-order marginal subgroups could harm first-order marginal fairness even if subgroup fairness is satisfied.
This result emphasizes the need to include the marginal subgroups in $\mathcal{W},$ to obtain socially acceptable subgroup fair models (i.e., as well as marginally fair).
Similarly, we consider $\mathcal{W}$ without the second-order marginal subgroups.
\cref{fig:dr_only_2} in \cref{sec-appen:add} shows that the second-order marginal fairness can be slightly worsen under such exclusion.
Hence, we recommend including the second-order marginal subgroups in $\mathcal{W}$ as well, unless the optimization is numerical unstable.



\paragraph{Impact of $\gamma$}

Another simple way to manage $\mathcal{W}$ is to control the minimum sample size of $W \in \mathcal{W}$ (i.e., $\gamma$).
As $\gamma$ increases, the sizes of subgroup-subsets become larger, hence $\mathcal{W}$ excludes higher-order marginal subgroups as well as more subgroups.
Suppose that we choose $\gamma$ to be larger than the sizes of higher-order marginal subgroups but smaller than those of first-order marginal subgroups.
Such a choice would achieve marginal fairness, but it may hamper higher-order marginal fairness.
\cref{sec-appen:add} empirically supports the claim by comparing performance with various $\gamma$s:
\cref{fig:gamma,fig:gamma_others} show that a too large $\gamma$ could degrade second-order marginal as well as subgroup fairness.

We further analyze the effect of $\gamma$ by categorizing subgroups into large, medium, and small scales (using quantiles 0.3 and 0.6).
\cref{fig:gamma_fine_analysis} in \cref{sec-appen:add} implies that 
(i) while large subgroups remain consistently fair,
(ii) medium subgroups require a moderate $\gamma$ to maintain fairness, 
(iii) and the fairness of small subgroups is not guaranteed even $\gamma$ is small.
Note that these results also support the implications of \cref{thm:main-1}.

In conclusion, since guaranteeing fairness on tiny subgroups is difficult, and we advise selecting a moderate $\gamma$.
This allows the model to focus on enforcing fairness for subgroups capable of generalization (i.e., those where fairness on the training data implies fairness on the test data).


\paragraph{Choice of $\mathcal{G}$}

In the experiments, we consider the discriminator used in sIPM, which is used for fair representation learning \citep{pmlr-v162-kim22b}.
We also consider sIPM with ReLU IPM (RIPM, \citet{park2025reluintegralprobabilitymetric}) where discriminator functions are a composition of ReLU and linear functions,
and H\"older IPM (HIPM, \citet{10.1093/imaiai/iaad018}) which uses DNN discriminators.
\cref{fig:main_trade_offs_ipm_abl} in \cref{sec-appen:add} shows that sIPM is generally the best and  most stable.

\paragraph{Robustness under noisy sensitive attributes}

To investigate the robustness of DRAF under noisy sensitive attribute information, we conduct a controlled experiment on the \textsc{Adult} dataset by synthetically introducing missing values into the sensitive attribute at a certain rate (e.g., 1\%).
Existing methods (e.g., GF) typically require complete subgroup information and therefore discard samples with missing sensitive attributes.
In contrast, DRAF can still be applied partially to such samples: for instances with missing sensitive attributes, we impose fairness constraints only on subgroup-subsets that can be formed using the observed attributes, and ignore subgroup-subsets that involve any missing attributes.

The results in \cref{fig:noisy_s} in \cref{sec-appen:add} show that this partial DRAF approach is more robust than the baseline: it attains a better fairness–accuracy trade-off than GF, demonstrating the robustness of DRAF in the presence of noisy (missing) sensitive attributes.


\section{Concluding Remarks}

In this paper, we introduced a new notion of fairness called subgroup-subset fairness, and proposed a new adversarial learning algorithm for subgroup fairness.
We empirically showed that the proposed algorithm works well in scenarios where the data contain sparse subgroups.

A possible future work is to decompose subgroup fairness into low-order marginal fairness (similar to ANOVA decomposition) and control fairness via these components.
This approach would improve stability under sparse subgroups and interpretability.
One could theoretically derive an upper bound of subgroup fairness in terms of low-order marginal fairness.

\paragraph{Ethics Statement}

We do not collect new human-subject datasets; all the datasets used in this paper are publicly available.
The fairness notions we employ in this paper (i.e., subgroup fairness and marginal group fairness) are widely and popularly investigated in recent literature.
Through these efforts, we believe this research helps mitigate potential discriminatory impacts, rather than introduce new ones, and can positively influence the responsible use of AI in practice.

\paragraph{Reproducibility Statement}

We made efforts to ensure the reproducibility of our main findings:
(i) We provide full proofs and mathematical definitions used in the theorems in Appendix.
(ii) We include implementation details throughout the paper (the main body and Appendix), and add source code in the supplementary materials.

\bibliography{bibs}
\bibliographystyle{iclr2026_conference}

\clearpage
\appendix

\begin{center}
    {\LARGE \textsc{Appendix}}
\end{center}

\section{Theoretical Studies}

\subsection{Omitted Definitions and Notations}

\begin{definition}\label{def:rademacher}
    Let $(\sigma_i)_{i=1}^m$ be the Rademacher random variables such that $\mathbb{P}(\sigma_i=+1)=\mathbb{P}(\sigma_i=-1)=1/2$, independently.
    Let $\mathcal{H}$ be a class of real-valued functions on a domain $\mathcal{Z}$,
    and let $S=(z_1,\ldots,z_m)\in\mathcal{Z}^m$ be a fixed sample.
    The empirical Rademacher complexity of $\mathcal{H}$ on $S$ is
    \begin{equation}\label{eq:empRad}
        \widehat{\mathcal{R}}_{S}(\mathcal{H})
        :=
        \mathbb{E}_{\sigma}\!\left[
        \sup_{h\in\mathcal{H}}
        \frac{1}{m}\sum_{i=1}^{m}\sigma_i\, h(z_i)
        \right],
    \end{equation}
    where the expectation is with respect to the Rademacher variables $(\sigma_i)_{i=1}^m$.

    Given a distribution $P$ on $\mathcal{Z}$, the \textit{Rademacher complexity} of $\mathcal{H}$ with sample size $m$ is
    \begin{equation}\label{eq:distRad}
        \mathcal{R}_{m}(\mathcal{H})
        :=
        \mathbb{E}_{S\sim P^{m}}\!\left[\,
        \widehat{\mathcal{R}}_{S}(\mathcal{H})
        \right]
        = 
        \mathbb{E}_{z_1,\ldots,z_m\sim P}\,
        \mathbb{E}_{\sigma}\!\left[
        \sup_{h\in\mathcal{H}}
        \frac{1}{m}\sum_{i=1}^{m}\sigma_i\, h(z_i)
        \right].
    \end{equation}

\end{definition}

\subsection{Proofs}\label{sec:appen-proofs}

\begin{proof}[Proof of \cref{thm:main-1}]
    Fix $f\in\mathcal{F}$. By the definition of the supremum and the triangle inequality of IPMs,
    \begin{equation}
        \begin{split}
            \Delta_{\psi,\mathcal{W}}(f)-\Delta_{n,\psi,\mathcal{W}}(f)
            & \le 
            \sup_{W \in \mathcal{W}} \psi(\mathbb{P}_{f,W},\mathbb{P}_{f,W}^n)
            - \sup_{W \in \mathcal{W}} \psi(\mathbb{P}_{f,W^c},\mathbb{P}_{f,W^c}^n)
            \\
            & \le
            \sup_{W\in\mathcal{W}} \Big\{ \psi(\mathbb{P}_{f,W},\mathbb{P}_{f,W}^n) + \psi(\mathbb{P}_{f,W^c},\mathbb{P}_{f,W^c}^n) \Big\}.
        \end{split}
    \end{equation}
    The first term in the right-hand-side can be re-written as
    \begin{equation}
        \begin{split}
            \psi(\mathbb{P}_{f,W},\mathbb{P}_{f,W}^n)
            & =\sup_{g\in\mathcal{G}}
            \Big(\mathbb{E}_{\mathbb{P}_{f,W}}[g]-\mathbb{E}_{\mathbb{P}_{f,W}^n}[g]\Big)
            \\
            & =\sup_{g\in\mathcal{G}}
            \left(
            \mathbb{E} \left[g \circ f(X, S)\vert S\in W\right]
            -\frac{1}{n_{W}} \sum_{i: s_{i} \in W} g \circ f(x_i, s_i)
            \right),
        \end{split}
    \end{equation}
    where $n_{W} = \vert \{i: s_{i} \in W \} \vert.$
    Taking the supremum over $f\in\mathcal{F}$ and by Hoeffding’s inequality combined with Rademacher symmetrization,
    we have with probability at least $1-\delta_W$,
    $$
    \sup_{f\in\mathcal{F}}\ \psi(\mathbb{P}_{f,W},\mathbb{P}_{f,W}^n)
    \le 
    2 \mathcal{R}_{n_W} \big(\mathcal{G}\circ\mathcal{F}\big)
    + 
    B \sqrt{\frac{2\log(1/\delta_W)}{n_W}}
    $$
    for any $\delta_W > 0.$
    An exactly same bound holds for $W^c$ with $n_{W^c}$ in place of $n_W.$
    Applying the union bound over all pairs $\{W,W^c\}$ with $\delta_W=\delta/(2|\mathcal{W}|)$ and using $n_W, n_{W^c} \ge n_{\mathcal{W}} = \min_{W\in \mathcal{W}} \{ n_W, n_{W^c} \} = \min_{W\in \mathcal{W}} \{ n_W, n - n_{W} \}, $
    we have
    $$
    \sup_{f\in\mathcal{F}}
    \Big\{
    \psi(\mathbb{P}_{f,W},\mathbb{P}_{f,W}^n)+\psi(\mathbb{P}_{f,W^c},\mathbb{P}_{f,W^c}^n)
    \Big\}
    \le
    4 \mathcal{R}_{n_{\mathcal{W}}} \big(\mathcal{G}\circ\mathcal{F}\big)
     + 
    2B \sqrt{\frac{2\log\big(2|\mathcal{W}|/\delta\big)}{ n_{\mathcal{W}} }}
    $$
    for all $W\in\mathcal{W}.$
    Taking $\delta= 1 / n$ concludes the proof.
\end{proof}


\begin{proof}[Proof of \cref{thm:dr_to_ipm_log_sphere_noBayes}]
    Let $f_{i}:=f(x_i,s_i).$
    Recall that $y_{W, i} = 2 \mathbb{I}(s_i \in W) - 1 \in \{ -1, 1 \}, i \in [n].$
    Then, we can rewrite
    $$
    \textup{IPM}_{\mathcal G}( \mathbb{P}_{f, W}, \mathbb{P}_{f, W^{c}} )
    := \sup_{g\in\mathcal G}
    \left\vert \frac{1}{|W|}\sum_{i: y_{W, i}=1} g(f_i) -\frac{1}{|W^c|}\sum_{i: y_{W, i}=-1} g(f_i) \right\vert
    $$
    and \cref{lem:sample_cov_identity} in the next subsection
    concludes
    $$
    \textup{IPM}_{\mathcal G}( \mathbb{P}_{f, W}, \mathbb{P}_{f, W^{c}} )
    =
    \sup_{g \in \mathcal{G}} \left\vert \frac{\sum_{i=1}^n (y_{W, i} - \bar{y}_{W}) g(f_i) }
    {\sum_{i=1}^n (y_{W, i} - \bar{y}_{W})^2} \right\vert 
    = \sup_{g \in \mathcal{G}} \vert \tilde{R}^{2}(f, W, g) \vert.    $$
    
\end{proof}

\subsection{Technical Lemmas}\label{sec:appen-lemmas}

\begin{lemma}\label{lem:sample_cov_identity}
    Fix $\mathcal{G} \subset \{g:  \mathbb{R} \to \mathbb{R} \}.$
    Let $W$ be a given subset of $\{0, 1\}^{q}$ and $f_{i} = f(x_{i}, s_{i}), i \in [n].$
    For a binary indicator $y_{W, i} = 2 \mathbb{I}(s_{i} \in W) - 1 \in \{-1 ,1\}, i \in [n]$, we have
    \begin{equation}\label{eq:delta_cov_over_var}
        \frac{1}{|W|}\sum_{i: y_{W, i}=1} g(f_i)
        -\frac{1}{|W^c|}\sum_{i: y_{W, i}=-1} g(f_i)
        =
        \frac{\sum_{i=1}^n (y_{W, i} - \bar{y}_{W}) g(f_i) }
             {\sum_{i=1}^n (y_{W, i} - \bar{y}_{W})^2},
    \end{equation}
    for any $g\in\mathcal G,$
    where $\bar{y}_{W} :=\frac1n\sum_{i=1}^n y_{W, i}$ and $\bar g:=\frac1n\sum_{i=1}^n g(f_i).$
\end{lemma}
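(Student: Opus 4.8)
The statement is a purely algebraic identity, so the plan is a direct verification with careful bookkeeping. First I would fix notation: write $n_{+}:=\lvert\{i:y_{W,i}=1\}\rvert$ for the number of observations with $s_i\in W$ (this is the ``$\lvert W\rvert$'' on the left-hand side, in the usual abuse of notation), $n_{-}:=n-n_{+}=\lvert W^{c}\rvert$, and $S_{+}:=\sum_{i:\,y_{W,i}=1}g(f_i)$, $S_{-}:=\sum_{i:\,y_{W,i}=-1}g(f_i)$, so that the left-hand side of \eqref{eq:delta_cov_over_var} equals $S_{+}/n_{+}-S_{-}/n_{-}$ and $\bar y_{W}=(n_{+}-n_{-})/n$.

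The computation then splits into two short pieces. For the denominator I would use that $y_{W,i}\in\{-1,1\}$, hence $y_{W,i}^{2}=1$, to obtain $\sum_{i}(y_{W,i}-\bar y_{W})^{2}=n-n\bar y_{W}^{2}=n(1-\bar y_{W}^{2})$, and then substitute $\bar y_{W}=(n_{+}-n_{-})/n$ and use $n_{+}+n_{-}=n$ to express this purely in terms of $n_{+}$ and $n_{-}$. For the numerator I would write $\sum_{i}(y_{W,i}-\bar y_{W})g(f_i)=\sum_{i}y_{W,i}g(f_i)-\bar y_{W}\sum_{i}g(f_i)=(S_{+}-S_{-})-\tfrac{n_{+}-n_{-}}{n}(S_{+}+S_{-})$ and collect the coefficients of $S_{+}$ and $S_{-}$, again using $n_{+}+n_{-}=n$. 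Dividing the resulting numerator by the resulting denominator, the factors of $n$ and the group counts cancel and one is left with the difference of the two group averages $S_{+}/n_{+}-S_{-}/n_{-}$, establishing \eqref{eq:delta_cov_over_var}. Conceptually, the right-hand side is, up to the $\pm1$ coding of the group indicator, the ordinary-least-squares slope of regressing $g(f_i)$ on $y_{W,i}$, which is exactly why it reproduces the difference of the two group averages.

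I do not expect a genuine obstacle here: the whole argument is elementary, and the only two facts that do any work are $y_{W,i}^{2}=1$, which linearizes the denominator, and $n_{+}+n_{-}=n$, which is what makes the cross terms collapse cleanly. Once \eqref{eq:delta_cov_over_var} is established it plugs directly into the proof of \ref{thm:dr_to_ipm_log_sphere_noBayes} as indicated there.
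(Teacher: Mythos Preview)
Your plan matches the paper's proof step for step: evaluate the denominator via $y_{W,i}^{2}=1$, expand the numerator by splitting the sum over the two groups, and divide. The one place your sketch slips is the assertion that ``the factors of $n$ and the group counts cancel and one is left with $S_{+}/n_{+}-S_{-}/n_{-}$.'' Carrying out your own computation, the denominator is $4n_{+}n_{-}/n$ while the numerator is $\tfrac{2}{n}(n_{-}S_{+}-n_{+}S_{-})$, so the quotient is
\[
\frac{\tfrac{2}{n}(n_{-}S_{+}-n_{+}S_{-})}{4n_{+}n_{-}/n}
=\frac{1}{2}\Bigl(\frac{S_{+}}{n_{+}}-\frac{S_{-}}{n_{-}}\Bigr),
\]
and a factor $\tfrac12$ survives. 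A two-point sanity check ($n_{+}=n_{-}=1$, $g(f_{1})=a$, $g(f_{2})=b$) already gives LHS $=a-b$ but RHS $=(a-b)/2$.

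This is not a defect in your method but a misstatement of the lemma: the paper's own proof arrives at exactly this $\tfrac12$ and then writes ``we conclude the proof'' without reconciling the discrepancy. The application to Theorem~\ref{thm:dr_to_ipm_log_sphere_noBayes} is unaffected, because a direct expansion shows
\[
\tilde R^{2}(f,W,g)=\frac{2\sum_{i}(y_{W,i}-\bar y_{W})\,g(f_{i})}{\sum_{i}(y_{W,i}-\bar y_{W})^{2}},
\]
so the missing factor of $2$ in the lemma is exactly compensated by the extra factor of $2$ dropped in the paper's chain of equalities for $\tilde R^{2}$. When you write it up, either state the identity with the correct constant or flag the discrepancy explicitly.
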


\begin{proof}
    We begin by rewriting
    $
    \bar g
    =
    \frac{1+\bar{y}_{W}}{2}\left(\frac1{|W|}\sum_{i: y_{W, i}=1}g(f_i) \right)
    +
    \frac{1-\bar{y}_{W}}{2}\left( \frac1{|W^c|}\sum_{i: y_{W, i}=-1}g(f_i) \right)
    $
    since 
    $
    \vert W \vert = \sum_{i: y_{W, i} = 1} 1 = \frac{n+\sum_{i=1}^{n} y_{W, i}}{2}=\frac{n(1+\bar{y}_{W})}{2}
    $
    and
    $
    \vert W^{c} \vert = \sum_{i: y_{W, i} = -1} 1 = \frac{n-\sum_{i=1}^{n} y_{W, i}}{2}=\frac{n(1-\bar{y}_{W})}{2}.
    $    
    Note that
    \begin{equation}\label{eq:thm-pf1}
        \begin{split}
            \sum_{i=1}^n \bigl(y_{W, i}-\bar{y}_{W} \bigr)^2
            &= \sum_{i=1}^n y_{W, i}^2 - 2\bar{y}_{W} \sum_{i=1}^n y_{W, i} + n\bar{y}_{W}^{\,2} \\
            &= n - 2\bar{y}_{W}\,(|W|-|W^c|) + n\bar{y}_{W}^{\,2}
            \qquad \bigl(\because y_{W, i}^2=1,\ \sum_i y_{W, i}=|W|-|W^c|\bigr) \\
            &= n - \frac{(|W|-|W^c|)^{2}}{n}
            \qquad \bigl(\because \bar{y}_{W}=\frac{|W|-|W^c|}{n}\bigr) \\
            &= \frac{4\,|W|\,|W^c|}{n} \qquad \bigl(\because 1 + \bar{y}_{W}=\frac{2 \vert W \vert}{n}, 1 - \bar{y}_{W} = \frac{2 \vert W^{c} \vert}{n} \bigr).
        \end{split}
    \end{equation}

    Then, we expand
    \begin{equation}\label{eq:thm-pf2}
        \begin{split}
            & \sum_{i=1}^n (y_{W, i}-\bar{y}_{W}) \bigl(g(f_i)-\bar g\bigr)
            = \sum_{i=1}^n y_{W, i}g(f_i) - \bar g\sum_{i=1}^n y_{W, i} - \bar{y}_{W}\sum_{i=1}^n g(f_i) + n\bar{y}_{W}\bar g \\
            &= \sum_{i:\,y_{W, i}=1} g(f_i) - \sum_{i:\,y_{W, i}=-1} g(f_i) - \bar g\,(|W|-|W^c|) - \bar{y}_{W}\,(n\bar g) + n\bar{y}_{W}\bar g \\
            &= \sum_{i:\,y_{W, i}=1} g(f_i) - \sum_{i:\,y_{W, i}=-1} g(f_i) - \bar g\,(|W|-|W^c|) \\
            &= \sum_{i:\,y_{W, i}=1} g(f_i) - \sum_{i:\,y_{W, i}=-1} g(f_i) \\
            &\quad - (|W|-|W^c|) \left(
            \frac{|W|}{n}\cdot \frac{1}{|W|}\sum_{i: y_{W, i}=1} g(f_i)
             + \frac{|W^c|}{n}\cdot \frac{1}{|W^c|}\sum_{i: y_{W, i}=-1} g(f_i)
            \right) \\
            &= \left(1-\frac{|W|-|W^c|}{n}\right) \sum_{i:\,y_{W, i}=1} g(f_i)
             - \left(1+\frac{|W|-|W^c|}{n}\right) \sum_{i: y_{W, i}=-1} g(f_i) \\
            &= \frac{2|W^c|}{n}\sum_{i: y_{W, i}=1} g(f_i)
             - \frac{2|W|}{n}\sum_{i: y_{W, i}=-1} g(f_i) \\
            &= \frac{2\,|W|\,|W^c|}{n}\left(
            \frac{1}{|W|}\sum_{i: y_{W, i}=1} g(f_i)
             - \frac{1}{|W^c|}\sum_{i:\,y_{W, i}=-1} g(f_i)
            \right)
            \\
            & = \frac12 \sum_{i=1}^n \bigl(y_{W, i}-\bar{y}_{W}\bigr)^2
            \left(
            \frac{1}{|W|}\sum_{i: y_{W, i}=1} g(f_i)
             - \frac{1}{|W^c|}\sum_{i: y_{W, i}=-1} g(f_i)
            \right),
        \end{split}
    \end{equation}
    where the last equality holds by \cref{eq:thm-pf1}.
    Dividing by $\sum_i (y_{W, i}-\bar{y}_{W})^2,$ we get
    \begin{equation}
        \frac{ \sum_{i=1}^{n} (y_{W, i}-\bar{y}_{W})\,(g(f_i)-\bar g) }{ \sum_i (y_{W, i}-\bar{y}_{W})^2 } 
        = \frac12\left(\frac1{|W|}\sum_{i: y_{W, i} = 1}g(f_i) -\frac1{|W^c|}\sum_{i: y_{W, i} = -1}g(f_i)\right).
    \end{equation}
    Using the fact that
    $$
    \frac{ \sum_{i=1}^{n} (y_{W, i}-\bar{y}_{W})\,(g(f_i)-\bar g) }{ \sum_i (y_{W, i}-\bar{y}_{W})^2 } 
    = \frac{ \sum_{i=1}^{n} (y_{W, i}-\bar{y}_{W}) g(f_i) }{ \sum_i (y_{W, i}-\bar{y}_{W})^2 },
    $$
    we conclude the proof.
\end{proof}

\subsection{Examples of \texorpdfstring{$\mathcal{F}$}{F} and \texorpdfstring{$\mathcal{G}$}{G} in \cref{thm:main-1}}\label{sec:appen-rademacher_egs}

We introduce two examples that yields small Rademacher complexities
$\mathcal{R}_{n_{\mathcal W}}(\mathcal{G}\circ\mathcal{F})=\mathcal{O}(1/\sqrt{n_{\mathcal W}})$ so the uniform population–empirical gap in \cref{thm:main-1} shrinks at a rate $\mathcal{O}(1/\sqrt{n_{\mathcal W}})$ up to a logarithm factor of $n.$

\begin{example}[Linear functions]\label{eg1-main}
    Let $ \mathcal{G}=\{g_u(z)=\langle u,z\rangle:\|u\|_2\le 1\} $ and
    $ \mathcal{F} = \{f_W(x,s)=W z(x,s): \|W\|_2 \le M\}, $
    where $ z(x,s)\in\mathbb{R}^d $ are fixed features with $ \|z(x,s)\|_2\le B_z $ for all $(x,s).$
    Then for all $f_W\in\mathcal{F}$ and $g_u\in\mathcal{G}$,
    $
    |(g_u \circ f_W)(x,s)| = |\langle u, W z(x,s)\rangle|
    \le\|u\|_2 \|W\|_2 \|z(x,s)\|_2
    \le M B_z,
    $
    so the class is uniformly bounded by $R:=M B_z$.    
    Moreover,
    $$
    \mathcal{R}_{n_{\mathcal W}}(\mathcal{G}\circ\mathcal{F})
    =\frac{1}{n_{\mathcal W}}
    \mathbb{E}_{\sigma}\sup_{\|u\|\le 1, \|W\| \le M}
    \sum_{i=1}^{n_{\mathcal W}}\sigma_i\langle u,W z_i\rangle
    =\frac{1}{n_{\mathcal W}}
    \mathbb{E}_{\sigma}\sup_{\|W\|\le M}\Big\|\sum_{i=1}^{n_{\mathcal W}}\sigma_i W z_i\Big\|_2
    $$
    $$
    \le\ \frac{1}{n_{\mathcal W}}
    \mathbb{E}_{\sigma} \sup_{\|W\|\le M}\ \|W\|_2\ \Big\|\sum_{i=1}^{n_{\mathcal W}}\sigma_i z_i\Big\|_2
    \le \frac{M}{n_{\mathcal W}} 
    \mathbb{E}_{\sigma}\Big\|\sum_{i=1}^{n_{\mathcal W}}\sigma_i z_i\Big\|_2
    \le \frac{M B_z}{\sqrt{n_{\mathcal W}}},
    $$
    since $\|z_i\|_2\le B_z.$
    Consequently, for all $f\in\mathcal{F}$,
    $$
    \Delta_{\psi,\mathcal{W}}(f)-\Delta_{n,\psi,\mathcal{W}}(f)
    \lesssim \frac{1}{\sqrt{n_{\mathcal W}}} \Big\{4 M B_z + 2 M B_z \sqrt{2\log(2n|\mathcal W|)}\Big\}.
    $$
\end{example}

\begin{example}[Deep Neural Networks]\label{eg2-main}
    Suppose $\mathcal{G}\circ\mathcal{F}$ is a ReLU Deep Neural Network (e.g., $\mathcal{G}$ and $\mathcal{F}$ are both ReLU DNNs) with $L$-many layers and weight matrices $A_\ell$ of spectral norms $s_\ell=\|A_\ell\|_2$ and Frobenius norms $\|A_\ell\|_F$ for $\ell \in [L].$
    Then, we have
    $
    \mathcal{R}_{n_{\mathcal W}}(\mathcal{G}\circ\mathcal{F})
    \lesssim
    \frac{1}{\sqrt{n_{\mathcal W}}}
    ( \prod_{\ell=1}^{L}s_\ell )
    ( \sum_{\ell=1}^{L} \|A_\ell\|_F^2 / s_\ell^2 )^{\!1/2}
    $
    \citep{bartlett2017spectrally}.
    Further, if $g \circ f$ is uniformly bounded, then we have
    $$
    \Delta_{\psi,\mathcal{W}}(f)-\Delta_{n,\psi,\mathcal{W}}(f)
    \lesssim \frac{C}{\sqrt{n_{\mathcal W}}} + \frac{1}{\sqrt{n_{\mathcal W}}}2R\sqrt{2\log(2n|\mathcal W|)}
    $$
    for all $f \in \mathcal{F}$ and a constant $C$ depending on the network parameters $L$ and $A_{\ell}.$
\end{example}

\clearpage

\section{Experiments}

\subsection{Datasets}\label{sec:appen-datasets}

\begin{itemize}
    \item \textsc{Adult} (Tabular) \citep{misc_adult_2}:
    We predict income ($\geq\!50$K) from census features.
    For the sensitive attributes, we consider sex, race, age, and marital-status, so that $q = 4.$
    \item \textsc{Communities} (Tabular) \citep{redmond2002data}: 
    The class label is binary, indicating whether the violent crime rate is above a threshold.
    For the sensitive attributes, we consider 
    4 variables regarding race (racepctwhite, racepctblack, racepctasian, racepcthisp), 
    6 racial per-capita variables (whitepercap, blackpercap, indianpercap, asianpercap, otherpercap, hisppercap),
    8 language/immigration related-variables (pctnotspeakenglwell, pctforeignborn, pctimmigrecent, pctimmigrec5, pctimmigrec8, pctimmigrec10, pctrecentimmig, pctrecimmig5) so that $q = 18.$
    \item \textsc{Dutch} (Tabular) \citep{van2000integrating}:
    We predict occupation from socio-economic features.
    For the sensitive attributes, we consider sex and age, so that $q = 2$.
    \item \textsc{CivilComments} (Text) \citep{borkan2019nuanced}: We predict toxicity from user-generated comments. For the sensitive attributes, we consider sex (male/female/other), race (black/white/asian/other), and religion (christian/other) so that $q = 3$.
    \item \textsc{ACSIncome} (Tabular) \citep{ding2022retiringadultnewdatasets}:
We predict income level using nationwide census features.
Sensitive attributes include sex, race (White, Black, Asian, Other), and marital-status, yielding $q = 3$.
\end{itemize}

\subsection{Implementation Details}\label{sec:appen-expdetail}

We run all algorithms over five random seeds and report the average performance.

\paragraph{DRAF algorithm}

To control the fairness level, the Lagrangian multiplier $\lambda$ is swept over $\{0.00, 0.10, 0.20, 0.30, 0.40, 0.50, 0.60, 0.70, 0.80, 0.90, 1.00, 2.00, 3.00, 4.00, 5.00, 10.00, 20.00\}$. 
The candidate set of $\gamma$ is $\{0.001, 0.005, 0.01, 0.05, 0.10, 0.20, 0.30\},$ and we choose an optimal one using the Pareto-front lines, as mentioned in the main body.
We run DRAF with a maximum of $200$ epochs, and select the best model whose validation accuracy is the highest among the 200 epochs.
Algorithm \ref{alg:fair_classify} outlines the the DRAF algorithm.

\begin{algorithm}[h]
    \caption{DRAF algorithm}
    \label{alg:fair_classify}
    
    \SetKwInOut{Input}{Input}
    \SetKwInOut{Output}{Output}
    \SetKwRepeat{Repeat}{do}{until}
    \SetKw{Return}{Return}
    
    \Input{Training data $\{(x_i, s_i, y_i)\}_{i=1}^n$, Learning rates $(\eta_{\text{cls}}, \eta_{g}, \eta_{\textbf{v}})$, Number of iterations $T$, and Fairness Lagrangian multiplier $\lambda$}
    \Output{Classifier parameters $\theta$ of $f = f_{\theta}$, Discriminator parameters $\phi$ of $g = g_{\phi}$, and Weight vector $\textbf{v}$}
    
    \textbf{Initialize:} $\theta \leftarrow \theta_0$, $\phi \leftarrow \phi_0$, $\textbf{v} \leftarrow \textbf{v}_0$ \\
    
    \Repeat{convergence or $T$ iterations}{
      \For{$i=1,\dots,n$}{
        $\hat y_i \leftarrow f_\theta(x_i, s_i)$
      }
      Compute the classification loss: 
      $
        L_{\text{cls}} = \frac{1}{n}\sum_{i=1}^n \textup{CE}(\hat y_i, y_i)
      $
      \\
      Compute the fairness loss:  $$
        \widehat{\textup{DR}} = \textup{DR}_{n,\mathcal{W},\mathcal{G}}(f)
        := \sup_{g \in \mathcal{G}, \mathbf{v} \in \mathcal{S}^M} z\textup{-DR}^2 (f, \mathbf{v},g)= \sup_{g \in \mathcal{G}, \mathbf{v} \in \mathcal{S}^M} 
        \log \left( \frac{1 + |\textup{DR}^2(f, \mathbf{v},g)|/2}{1 - |\textup{DR}^2(f, \mathbf{v},g)|/2} \right)
        $$
        \\
      Update the discriminator and the subgroup weight by gradient ascending: 
      $$
      \phi \leftarrow \phi + \eta_g \nabla_\phi \widehat{\textup{DR}},
        \tilde{ \textbf{v}} \leftarrow \textbf{v} + \eta_{\textbf{v}} \nabla_{\textbf{v} }\widehat{\textup{DR}}, 
        \textbf{v} \leftarrow \textup{Proj}_{\mathcal{S}^{M}}(\tilde{\textbf{v}}),(\textup{Proj}_{\mathcal{S}^{M}}=\textup{unit sphere projection})
      $$
      \\
      Update the classifier:
      $$
        \theta \leftarrow \theta - \eta_{\text{cls}}\nabla_\theta L_{\text{cls}} 
        - \lambda \eta_{\text{cls}} \nabla_\theta \widehat{\textup{DR}}
      $$
    }
    \Return $\theta, \phi, \textbf{v}$
    
\end{algorithm}

\paragraph{Baselines}

The fairness penalty of REG is the sum of group disparities:
$
\textup{DP}_{\textup{marg}}(f) := \sum_{l \in [q]} \vert \frac{1}{n} \sum_{i=1}^{n} f_{i} - \frac{1}{n_{l}} \sum_{i: s_{i, l} = 1} f_{i} \vert,
$
where $s_{i, l}$ denotes the $l^{\textup{th}}$ component of $s_{i}$ and $n_{l} = \sum_{i=1}^{n} \mathbb{I}(s_{i, l} = 1) $ for $l \in [q].$
The final objective is defined as $ \frac{1}{n} \sum_{i=1}^{n} l(y_{i}, f(x_{i}, s_{i})) + C_{\textup{REG}} \textup{DP}_{\textup{marg}}(f) $ for some $C_{\textup{REG}} \ge 0.$
We sweep the regularization parameter $C_{\textup{REG}}$ over $\{ 0.001, 0.002, 0.005, 0.01, 0.05, 0.1, 0.2, 0.3, 0.5, 0.7, 1.0, 2.0, 5.0, 10.0, 20.0, 50.0, 100.0 \}$ to control the fairness level. 
Similar to DRAF, we run REG with a maximum of $200$ epochs, and select the best model whose validation accuracy is the highest among the epochs.

For GF, since the official code and the released AIF360 package\footnote{\url{https://aif360.readthedocs.io/en/v0.4.0/modules/generated/aif360.algorithms.inprocessing.GerryFairClassifier.html}} support only FP and FN (false positives and false negatives), we re-implement GF for the demographic parity setting targeted in this paper.
For stable and fast optimization, we use a gradient-descent–based approach.
The fairness penalty of GF is the (weighted) worst-group disparity:
$ \textup{DP}_{\textup{max}}(f) := \max_{s\in\{0,1\}^q} \frac{n_{s}}{n} \textup{DP}_{s}(f), $
where $ \textup{DP}_{s}(f) := \big|\hat{p}_s-\hat{p}\big|, \hat{p} := \frac{1}{n}\sum_{i=1}^n \mathbb{I}\{\hat{y}_i=1\}, $ and $ \hat{p}_s := \frac{1}{n_s}\sum_{i:\,s_i=s}\mathbb{I}\{\hat{y}_i=1\}.$
The final objective is then defined as $ \frac{1}{n} \sum_{i=1}^{n} l(y_{i}, f(x_{i}, s_{i})) + C_{\textup{GF}} \textup{DP}_{\textup{max}}(f). $
Here, we sweep the regularization parameter $C_{\textup{GF}}$ over $\{0.1, 0.5, 1.0, 5.0, 20.0, 50.0, 200.0, 500.0, 1000.0, 5000.0\}$ to control the fairness level.
Note that, rather than taking maximum over $s,$ we apply the softmax function to $ \{\frac{n_{s}}{n} \textup{DP}_{s}(f)\}_{s\in\{0,1\}^q}$ to make the optimization stable.
Similar to DRAF, we run GF with a maximum of $200$ epochs, and select the best model whose validation accuracy is the highest among the epochs.

For SEQ, we re-implement the algorithm in the original paper \citep{Hu_2024}.
That is, we first learn a classifier without fairness constraints, then sequentially post-process the prediction scores from each subgroups to a common barycenter.
The learning rate used to learn the classifier is swept over 
$\{0.001, 0.005, 0.01, 0.05, 0.10\}$.



\subsection{Relationship between DR gap and supIPM}\label{sec:appen-dr_supipm}

As \cref{eq:delta_dr_bound} shows that the supIPM is upper bounded by $\textup{DR}^2$, and that $\textup{DR}^2$ coincides with supIPM when the vector $\mathbf{v}$ lies on a vertex of the simplex (i.e., $\mathbf{v} = \mathbf{e}_k$ for some $k$).
Thus, minimizing $\textup{DR}^2$ reduces the upper bound on supIPM and recovers the supIPM when $\mathbf{v}$ is (approximately) a vertex (\cref{thm:dr_to_ipm_log_sphere_noBayes}).

Furthermore, we empirically show that DR is almost equal to supIPM, by checking whether the learned $\mathbf{v}$ is near vertex.
To quantify how close the learned weight vector $\mathbf{v}$ is to a simplex vertex, we compute the Euclidean distance
$
d(\mathbf{v}) := \min_{1 \le k \le m} \bigl\|\mathbf{v} - e_k\bigr\|_2,
$
where $e_k$ denotes the $k$-th vertex of the $m$-dimensional probability simplex.
We collect $\{d(\mathbf{v})\}$ over all runs and models, and plot their empirical distribution as a histogram.
The resulting plots in \cref{fig:dr_vertex} show that $d(\mathbf{v})$ is highly concentrated near zero as training proceeds, indicating that $\mathbf{v}$ empirically converges to (or very close to) a simplex vertex well.

\begin{figure}[h]
    \vskip -0.1in
    \centering
    \includegraphics[width=0.23\linewidth]{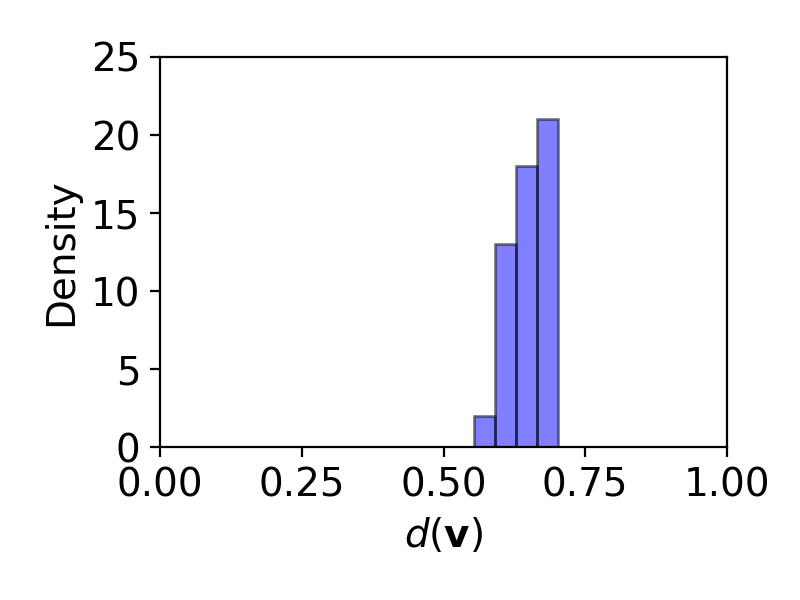}
    \includegraphics[width=0.23\linewidth]{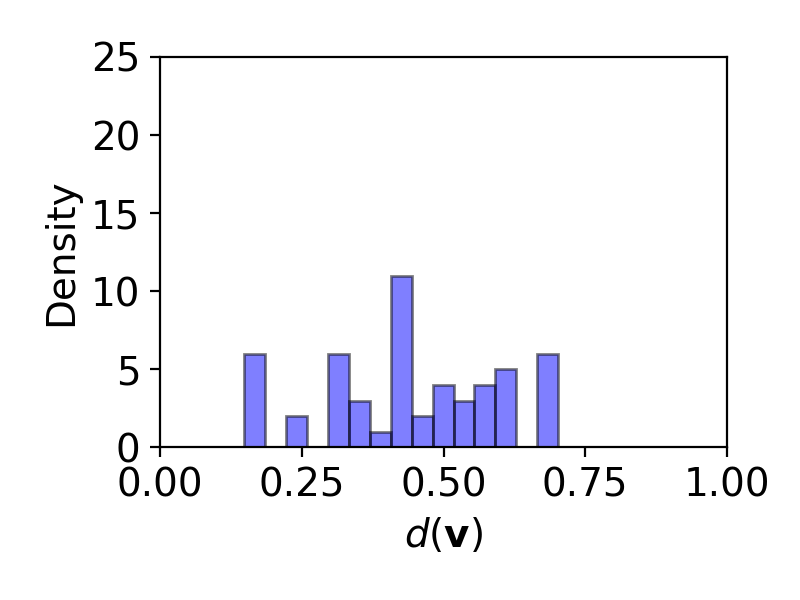}
    \includegraphics[width=0.23\linewidth]{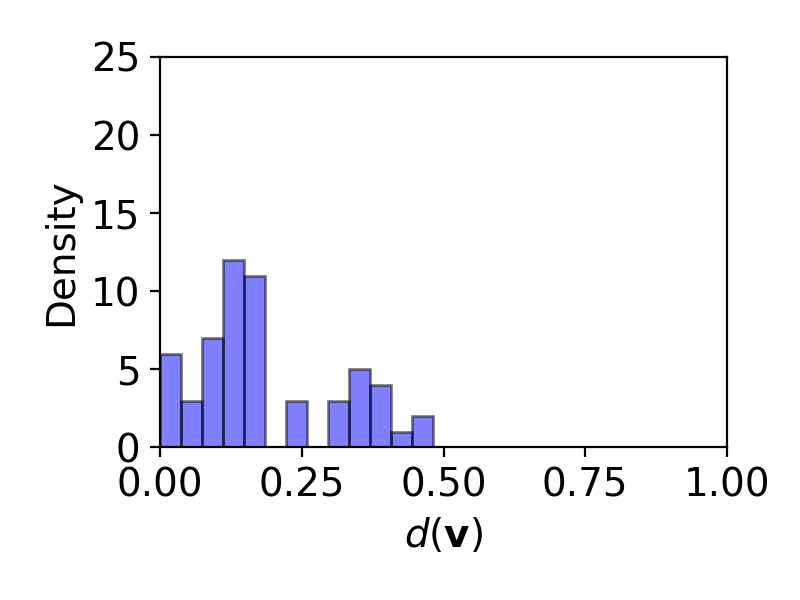}
    \includegraphics[width=0.23\linewidth]{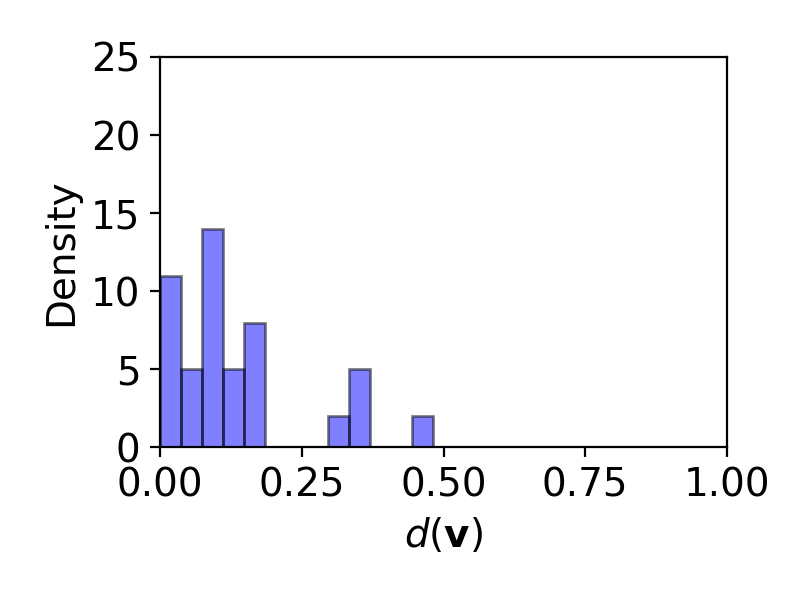}
    \vskip -0.1in
    \caption{
Histogram of $d(\mathbf{v})$ on \textsc{CivilComments} dataset for four time steps (left to right: 1st epoch, 5th epoch, 20th epoch, and 200th epoch), where the distribution becomes concentrated near $0$ as training proceeds.
}

    \label{fig:dr_vertex}
\end{figure}

\subsection{Performance comparison}\label{sec:appen-realdata}

\paragraph{Trade-off between accuracy and fairness}

\cref{fig:append-main_trade_offs} compares the trade-off between the distributional first-order marginal and the second-order marginal fairness levels (i.e., WMP and $\textup{MP}^{(2)}$) and accuracy.
The results give the similar implications that we observe from \cref{fig:main_trade_offs} in \cref{sec:realdata} of the main body.
That is, compared to the baseline methods (GF, REG, and SEQ),
DRAF performs comparable on \textsc{Adult, Dutch}, shows a slightly better performance on \textsc{CivilComments}, and outperforms on \textsc{Communities}.

\begin{figure}[h]
    \centering
    \begin{subfigure}{0.24\linewidth}
        \includegraphics[width=\linewidth]{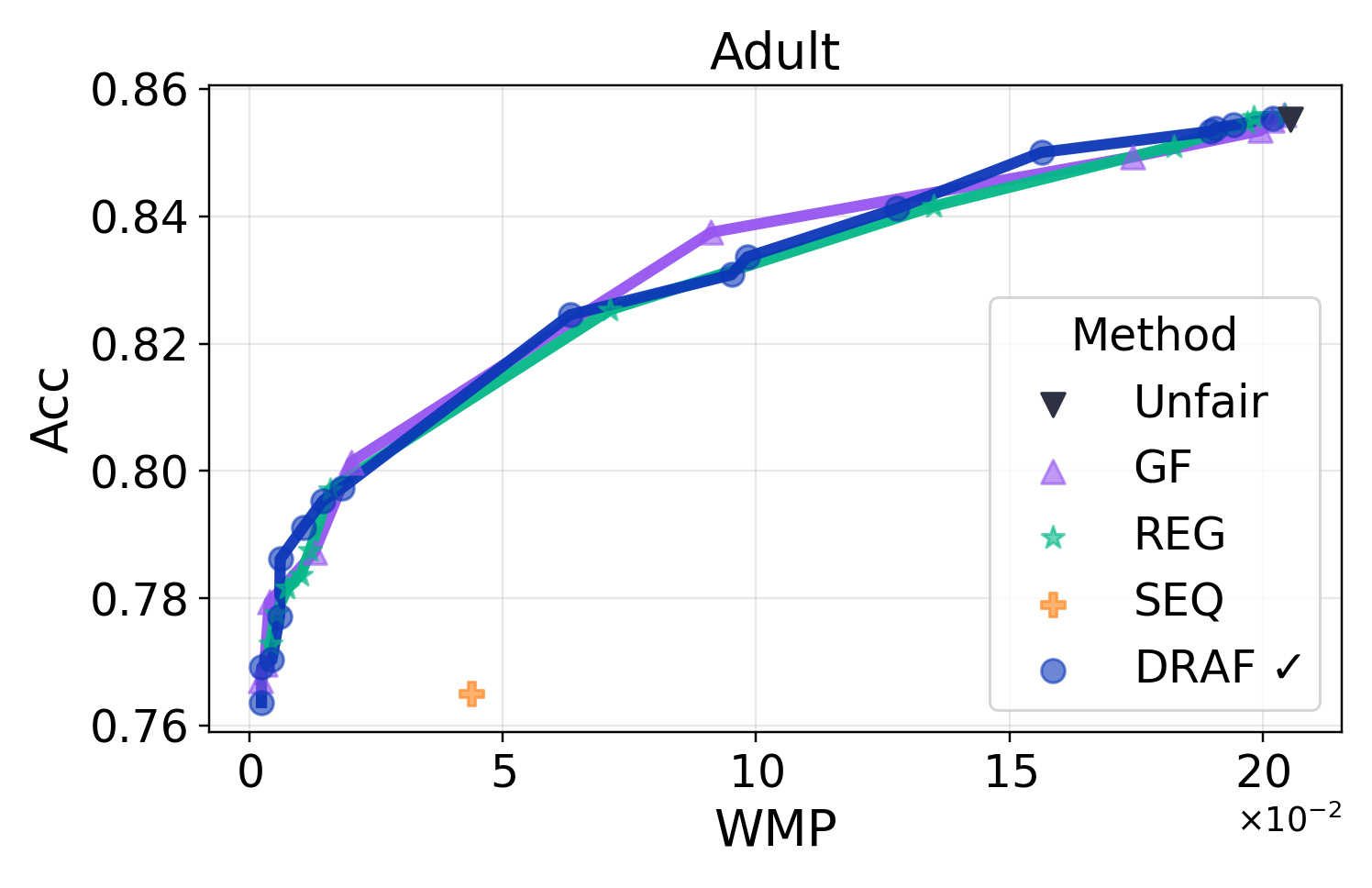}
    \end{subfigure}
    \begin{subfigure}{0.24\linewidth}
        \includegraphics[width=\linewidth]{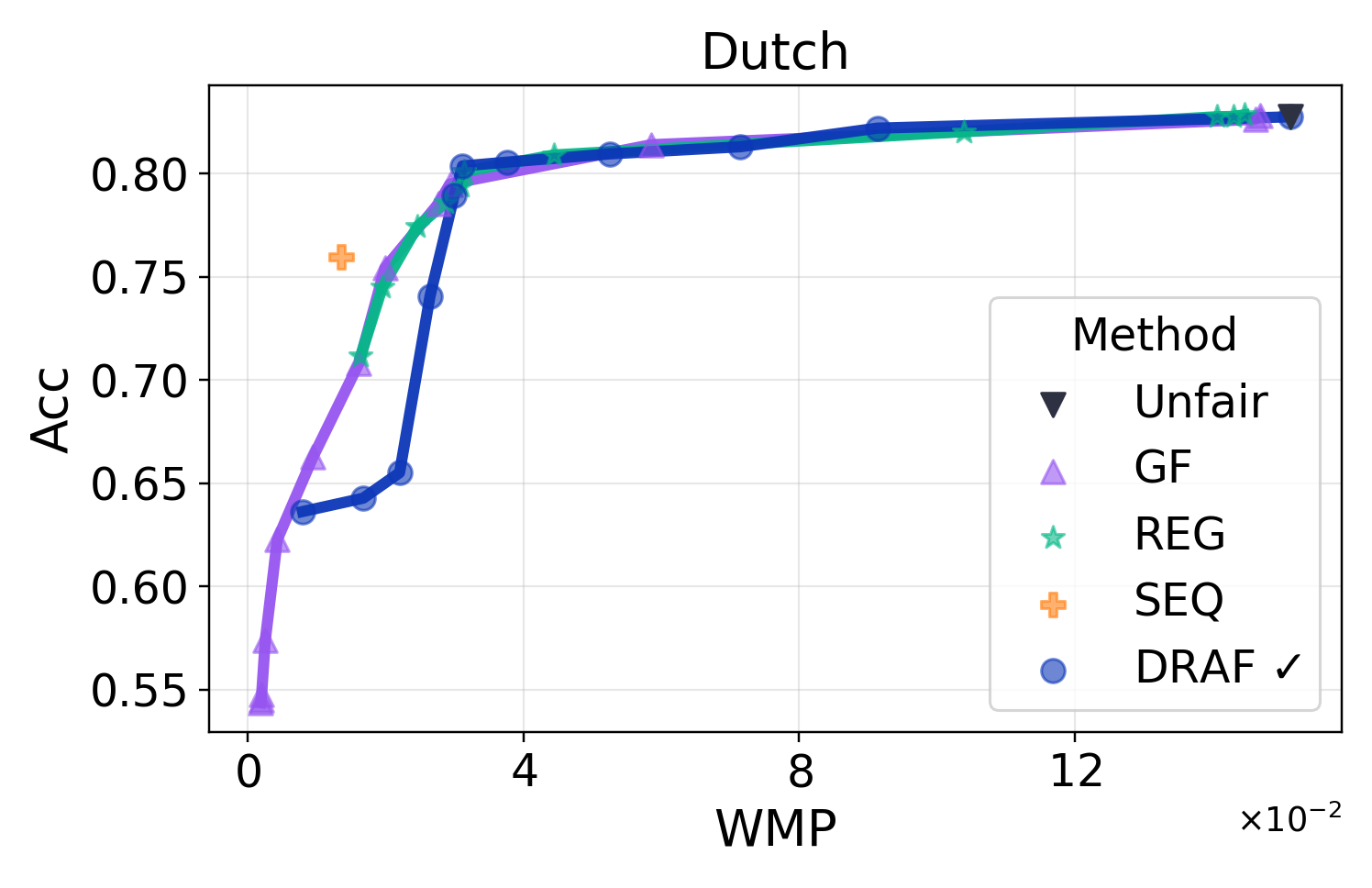}
    \end{subfigure}
    \begin{subfigure}{0.24\linewidth}
        \includegraphics[width=\linewidth]{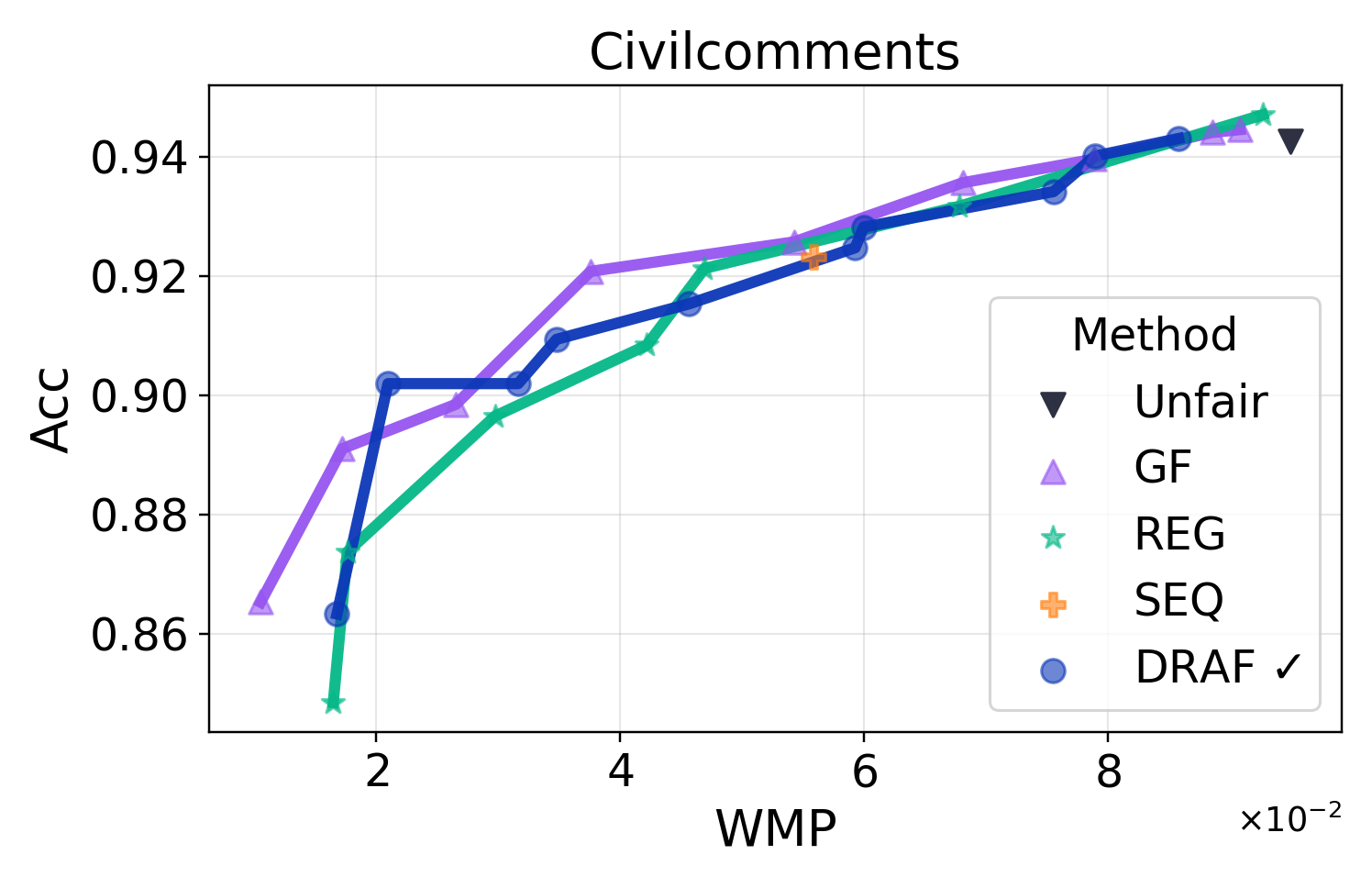}
    \end{subfigure}
    \begin{subfigure}{0.24\linewidth}
        \includegraphics[width=\linewidth]{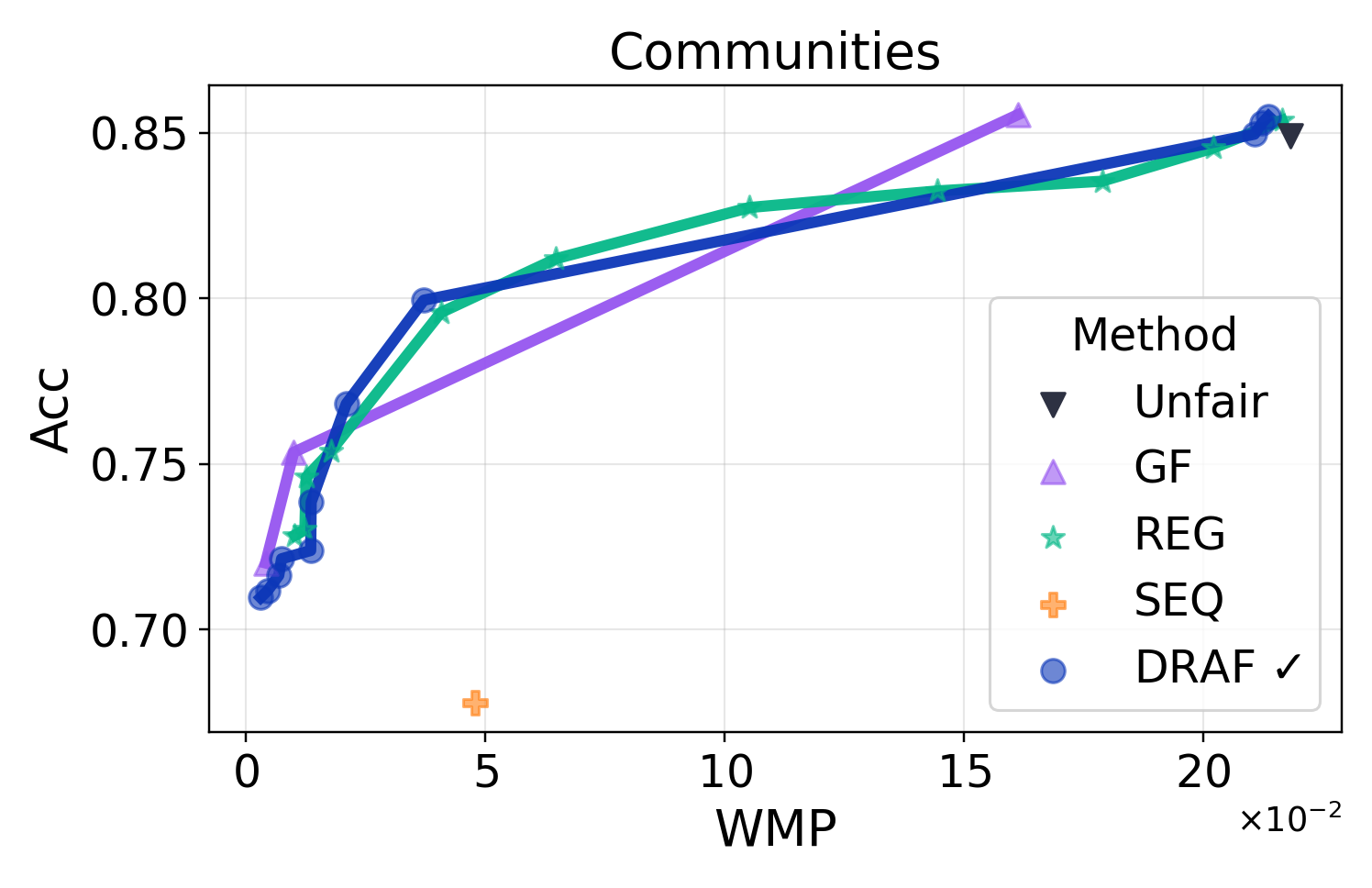}
    \end{subfigure}
    \\
    \begin{subfigure}{0.24\linewidth}
        \includegraphics[width=\linewidth]{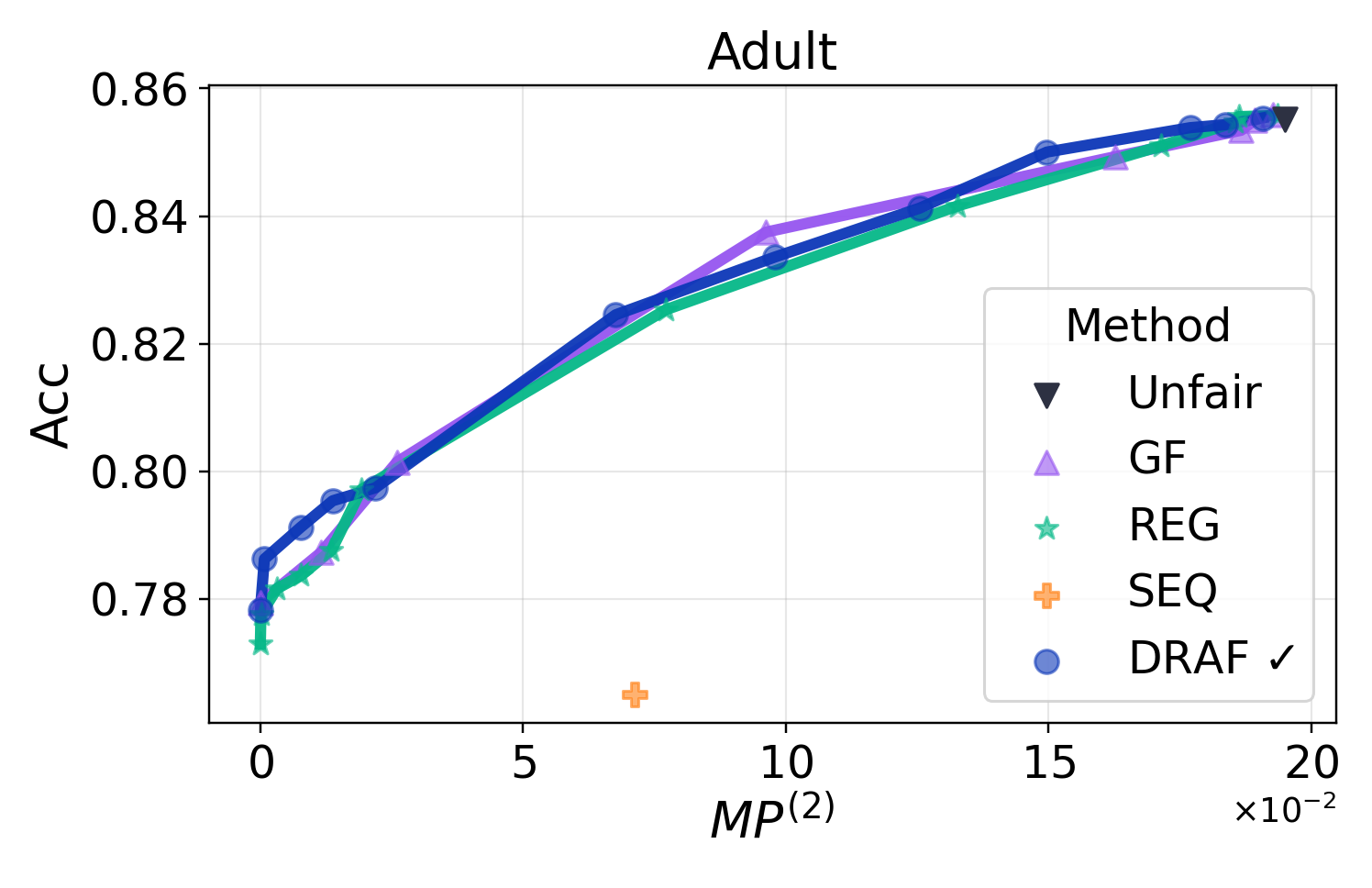}
        \caption{\textsc{Adult}}
        \label{fig:adult_main_trade_offs1}
    \end{subfigure}
    \begin{subfigure}{0.24\linewidth}
        \includegraphics[width=\linewidth]{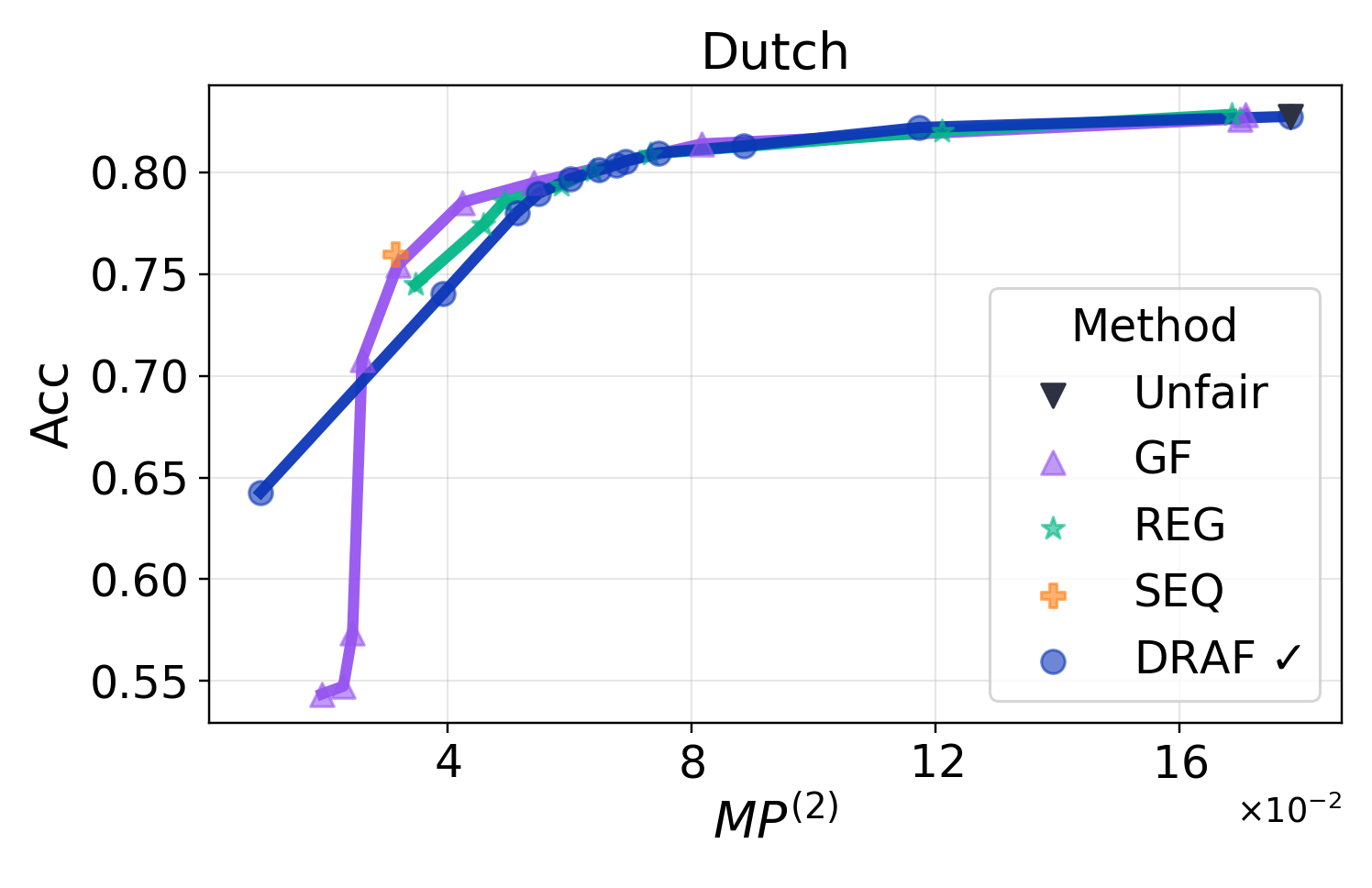}
        \caption{\textsc{Dutch}}
    \end{subfigure}
    \begin{subfigure}{0.24\linewidth}
        \includegraphics[width=\linewidth]{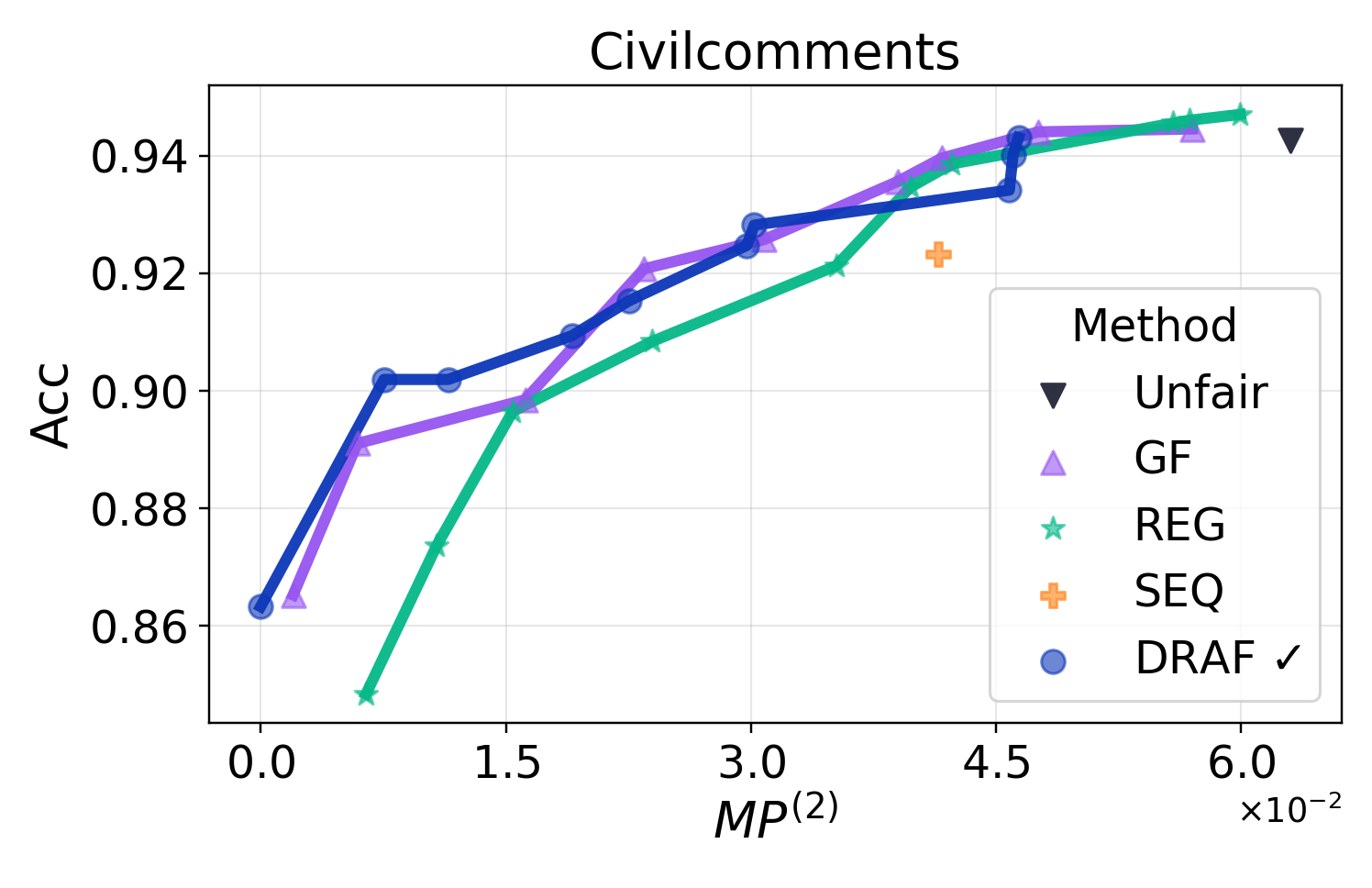}
        \caption{\textsc{CivilComments}}
    \end{subfigure}
    \begin{subfigure}{0.24\linewidth}
        \includegraphics[width=\linewidth]{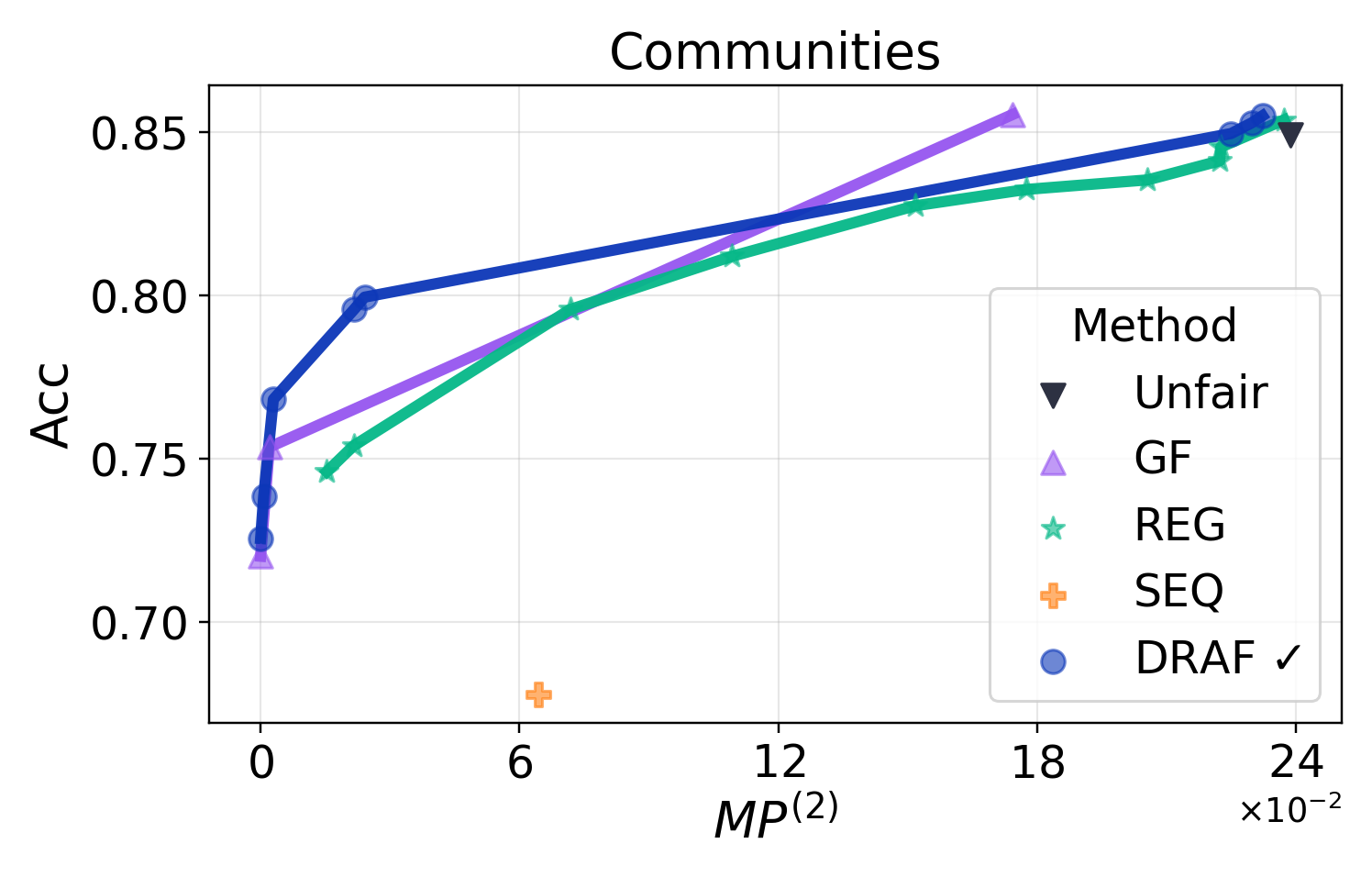}
        \caption{\textsc{Communities}}
        \label{fig:communities_main_trade_offs1}
    \end{subfigure}
    \caption{
    Trade-off between fairness level and accuracy.
    (Top, Bottom) = $\textup{WMP}$ vs. \texttt{Acc}, $\textup{MP}^{(2)}$ vs. \texttt{Acc}.
    We set $\gamma$ to $0.2$ for \textsc{Adult}, $0.001$ for \textsc{Communities}, $0.2$ for \textsc{Dutch}, and $0.05$ for \textsc{CivilComments}, reflecting the sparsity of each dataset to determine the optimal value.
    }
    \label{fig:append-main_trade_offs}
\end{figure}

\paragraph{Analysis on additional datasets}


\begin{enumerate}[leftmargin=1.2em, itemsep=1pt, topsep=1pt]
    \item \textbf{Synthetic \textsc{Adult} dataset}:
In addition to \textsc{Communities} dataset, we conduct an additional study using a synthetic variant of \textsc{Adult} dataset with sparse subgroups.
We construct \textsc{SparseAdult} by selecting the five smallest subgroups (whose sizes are at least 192) from \textsc{Adult} and randomly down-sampling them to smaller samples with sizes in $[40, 60]$ (see \cref{table:appen-subadult}).
We then evaluate five algorithms on \textsc{SparseAdult} and report the trade-off results in \cref{fig:main_trade_offs_subadult}.
Similar to the case for \textsc{Communities}, it shows that DRAF preserves superior subgroup and marginal fairness performance, specifically for higher fairness range (e.g., small $\textup{MP}^{(1)},$ WMP, and $\textup{MP}^{(2)}$), on \textsc{SparseAdult}.

\begin{figure}[h]
    \centering
    \includegraphics[width=0.24\linewidth]{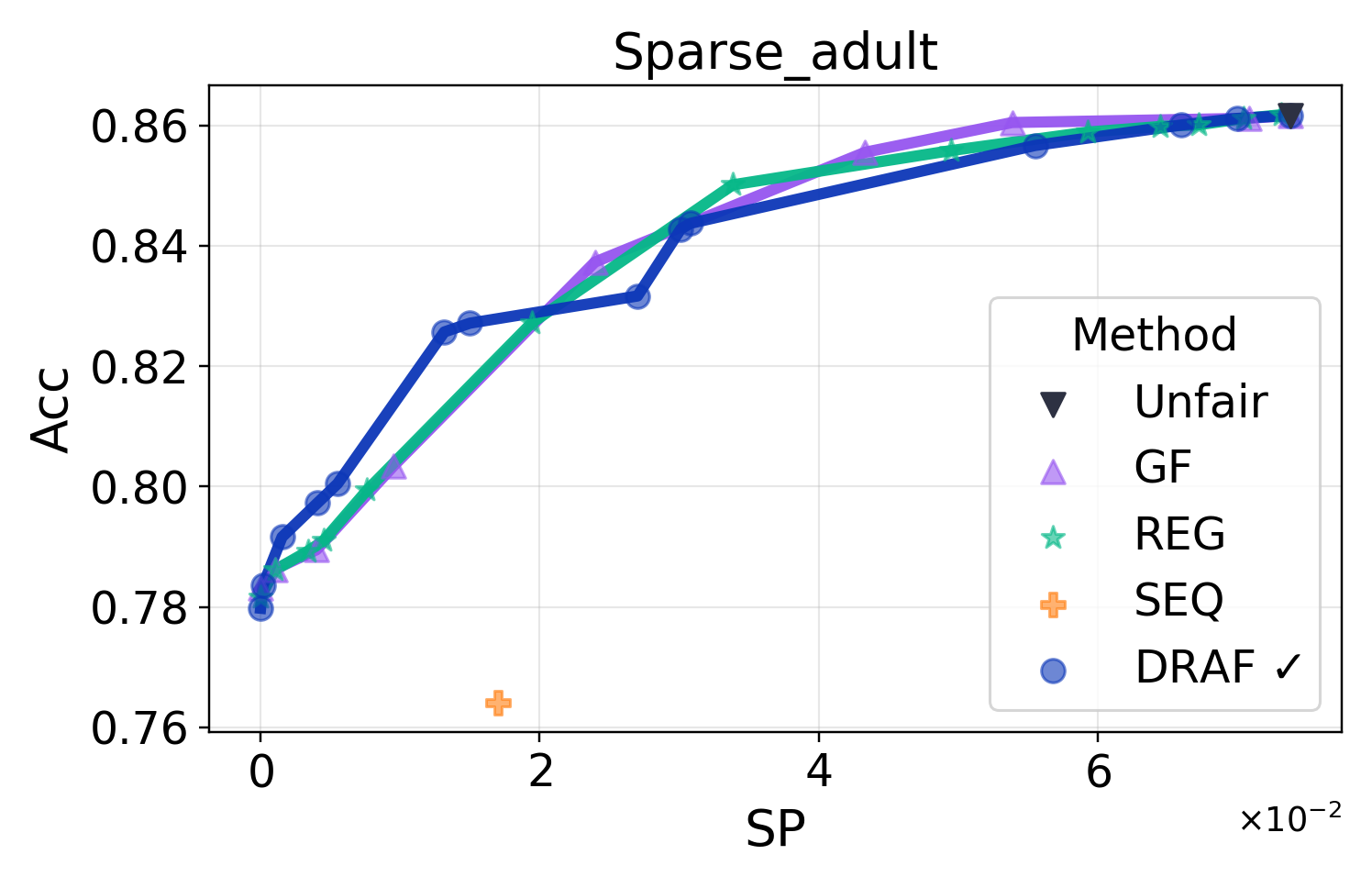}
    \includegraphics[width=0.24\linewidth]{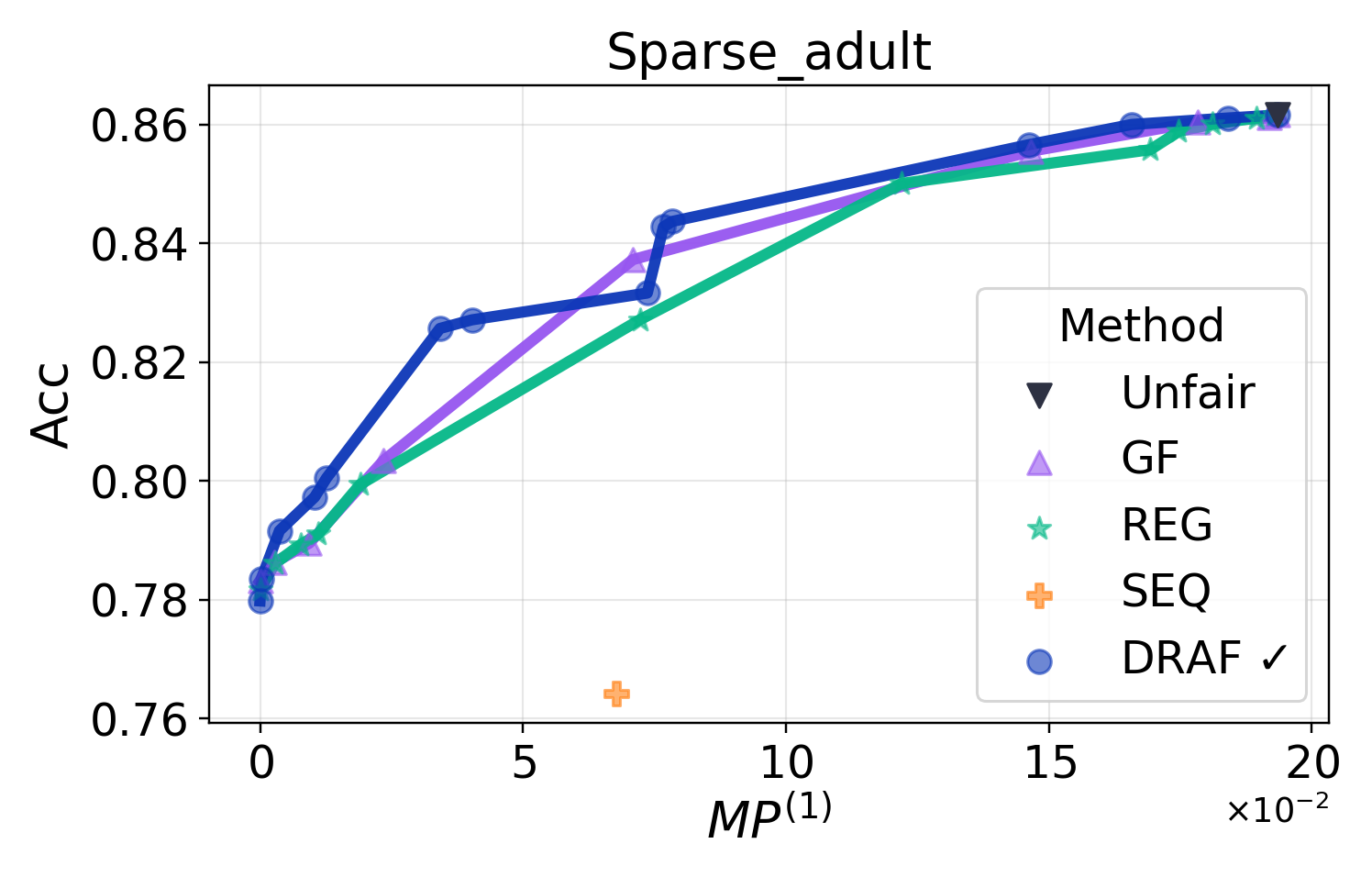}
    \includegraphics[width=0.24\linewidth]{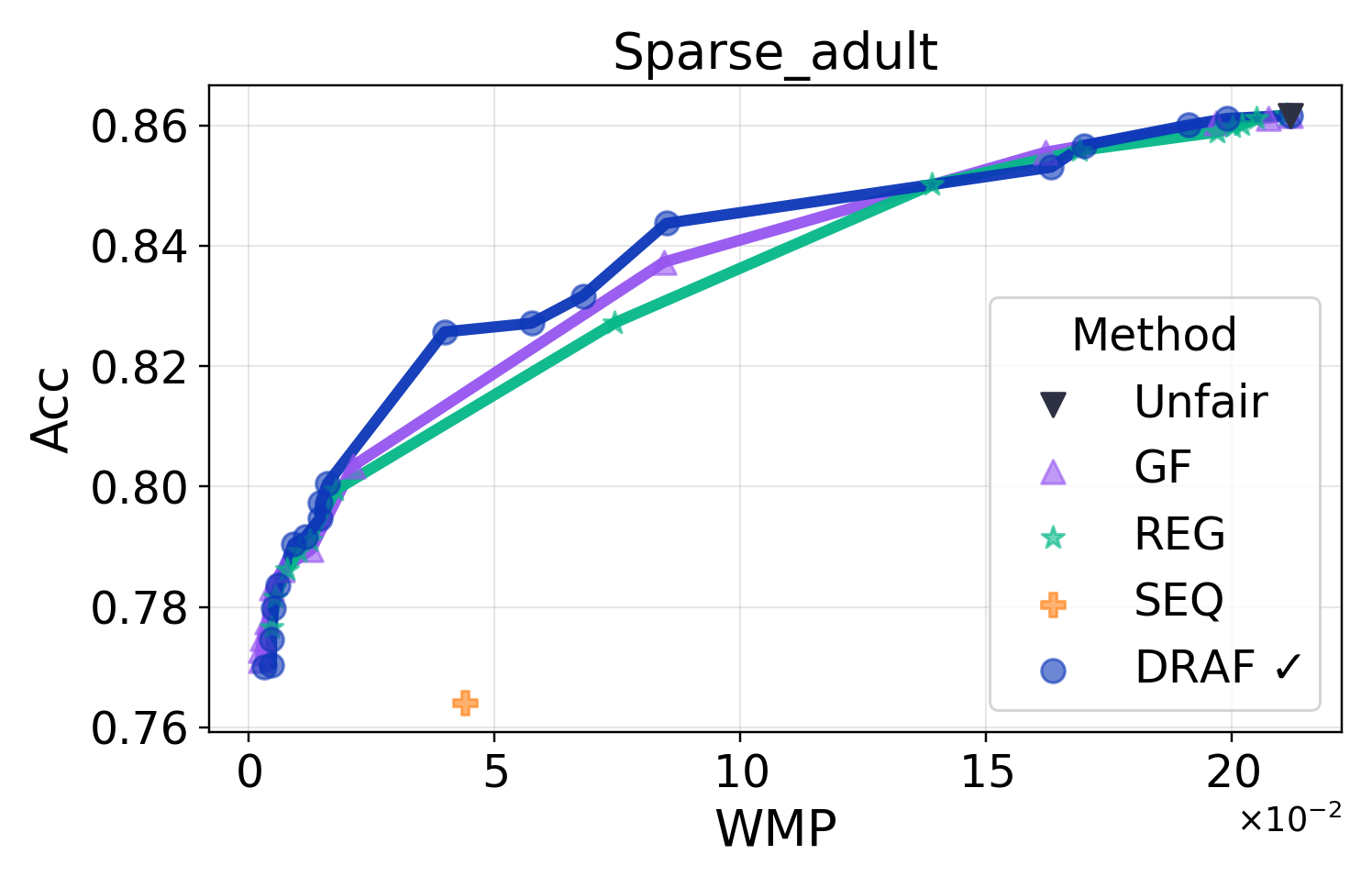}
    \includegraphics[width=0.24\linewidth]{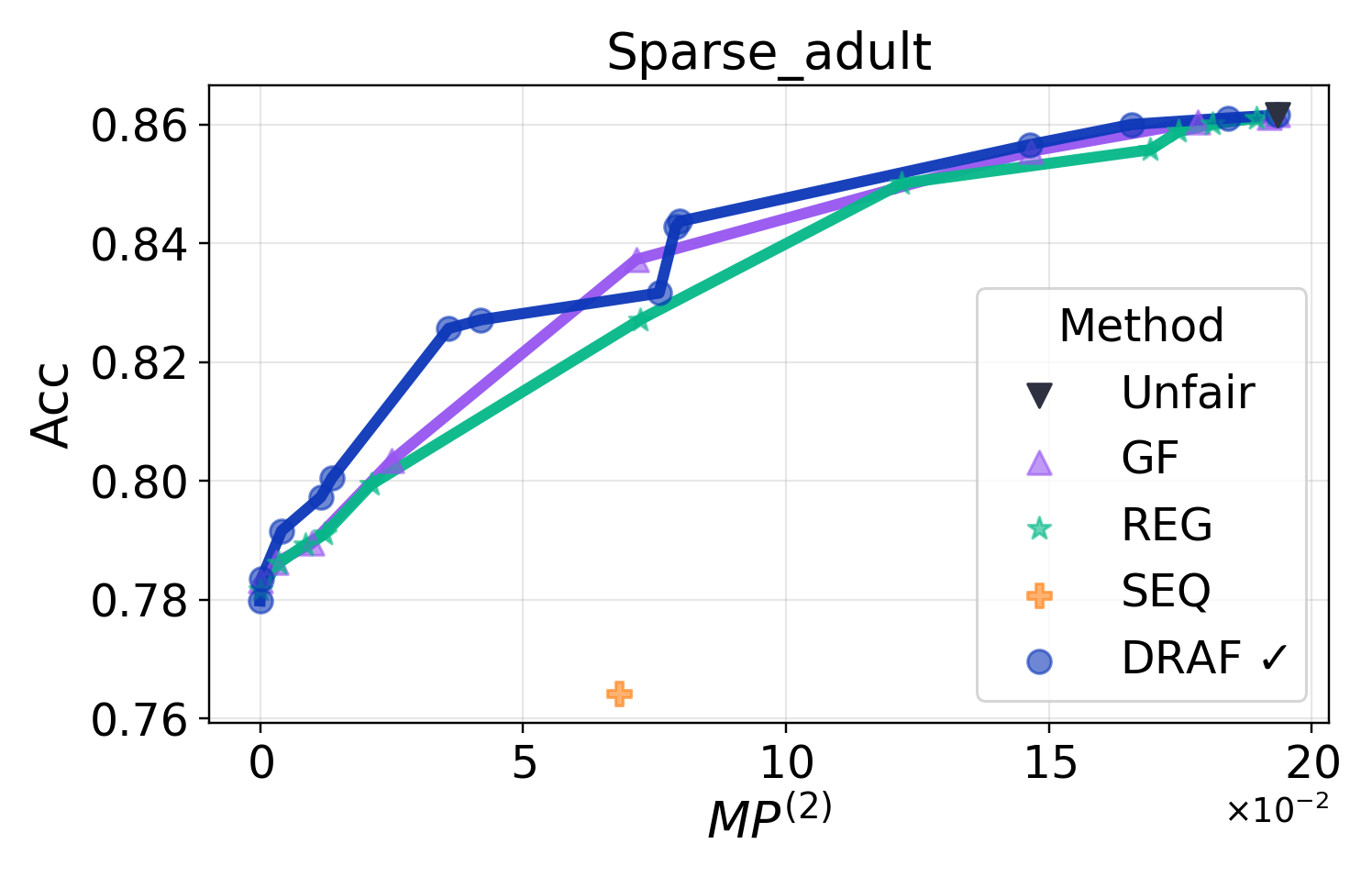}
    \caption{
    Trade-off between fairness level and accuracy on synthetic \textsc{Adult} dataset.
    (Left to Right)
    $\{ \textup{SP}, \textup{MP}^{(1)}, \textup{WMP}, \textup{MP}^{(2)} \}$ vs. \texttt{Acc} on \textsc{SparseAdult} dataset.
    We set $\gamma$ to $0.001.$
    }
    \label{fig:main_trade_offs_subadult}
    \vskip -0.1in
\end{figure}
\begin{table}[h]
    \footnotesize
    \centering
    \caption{
    Subgroup sample counts of the original \textsc{Adult} and \textsc{SparseAdult} datasets.
    Subgroup index starts at 1 with the smallest subgroup.
    The sizes for subgroups of index over 6 are the same.
    }
    \label{table:appen-subadult}
    \begin{tabular}{@{}crrr@{}}
    \toprule
    Subgroup index & \textsc{Adult} & \textsc{SparseAdult} \\
    \midrule
    1 & 192 & 46 \\
    2 & 233 & 54 \\
    3 & 500 & 57 \\
    4 & 789 & 59 \\
    5 & 964 & 60 \\
    \bottomrule
    \end{tabular}
\end{table}

\item 
\textbf{\textsc{ACSIncome} dataset:}
We also conduct experiments on an additional tabular dataset (\textsc{ACS Income}, see \cref{sec:appen-datasets} for the details of this dataset).
The results reported in \cref{fig:acsincome} show that DRAF exhibits similar behavior on these datasets as on the four datasets currently used, which further supports the empirical outperformance of DRAF.
\begin{figure}[h]
    \centering
    \includegraphics[width=0.24\linewidth]{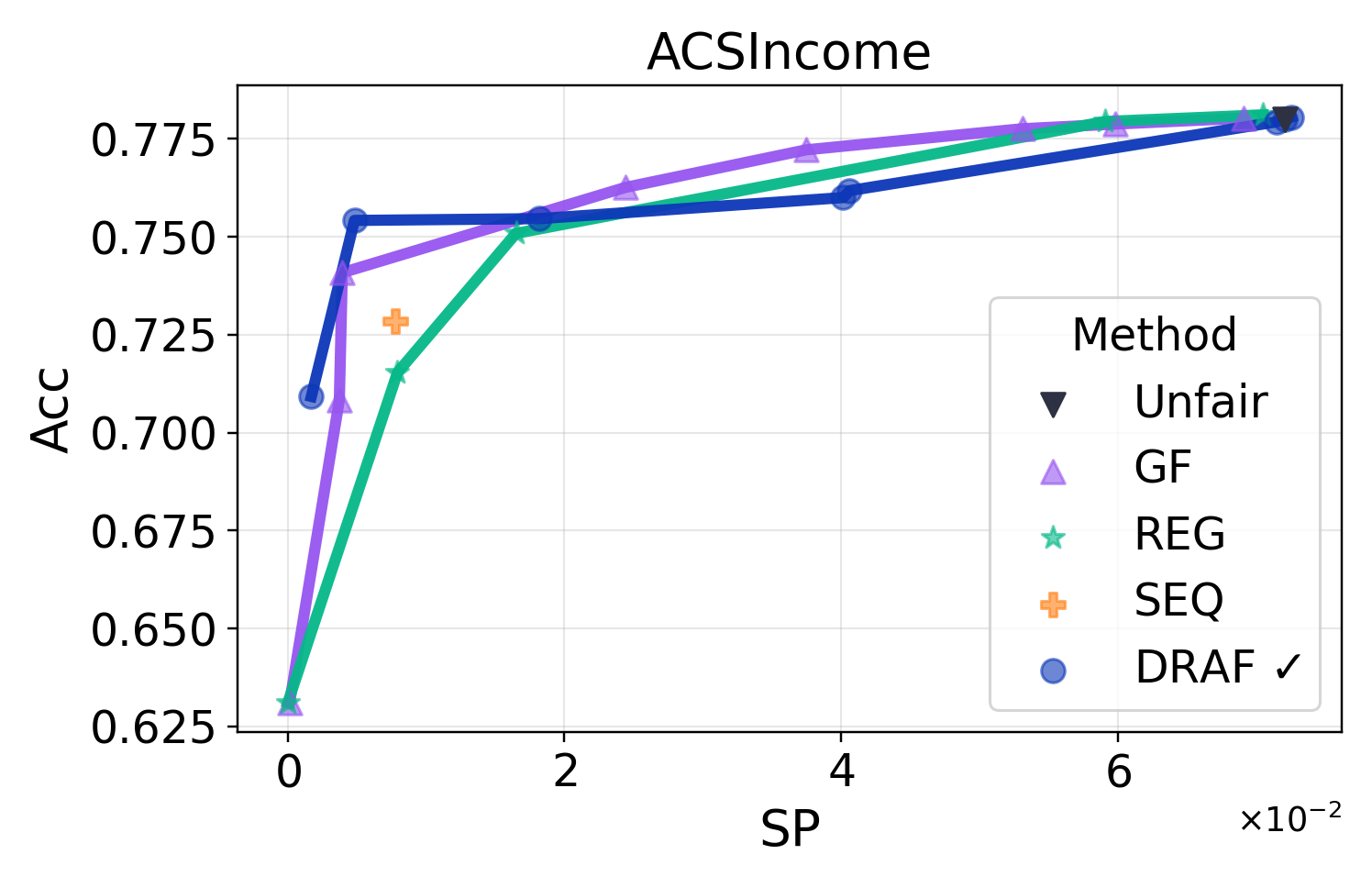}
    \includegraphics[width=0.24\linewidth]{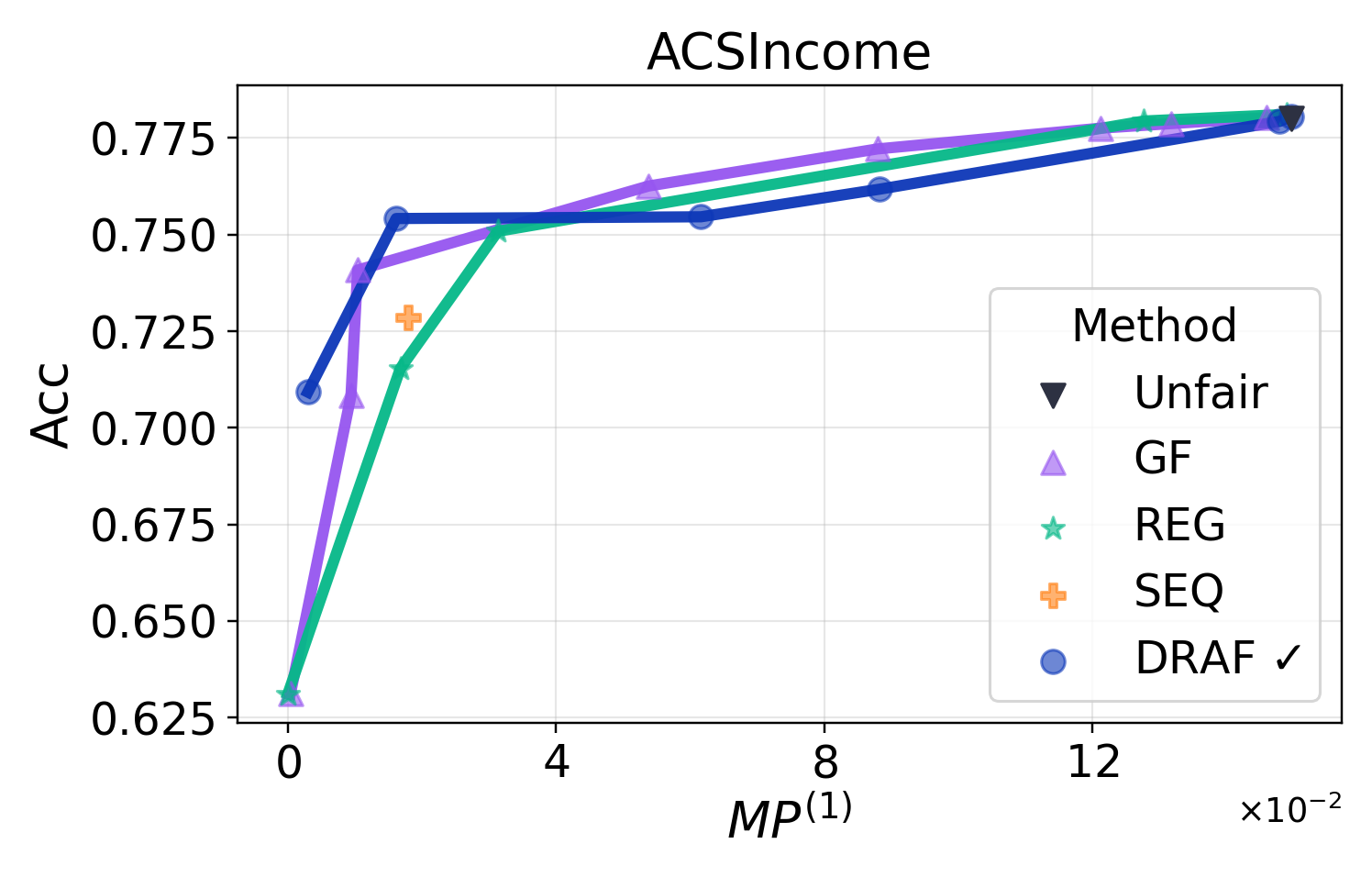}
    \includegraphics[width=0.24\linewidth]{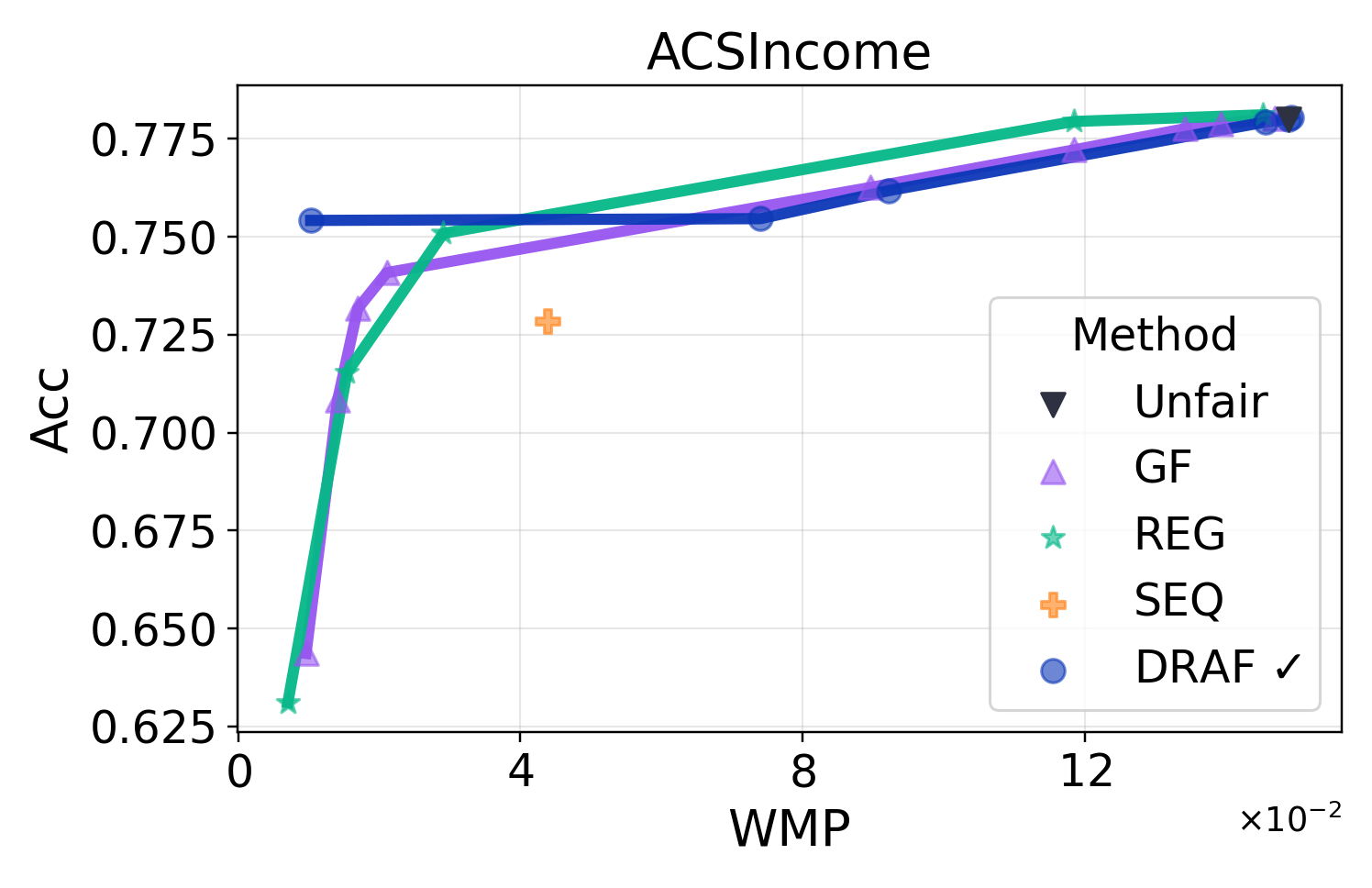}
    \includegraphics[width=0.24\linewidth]{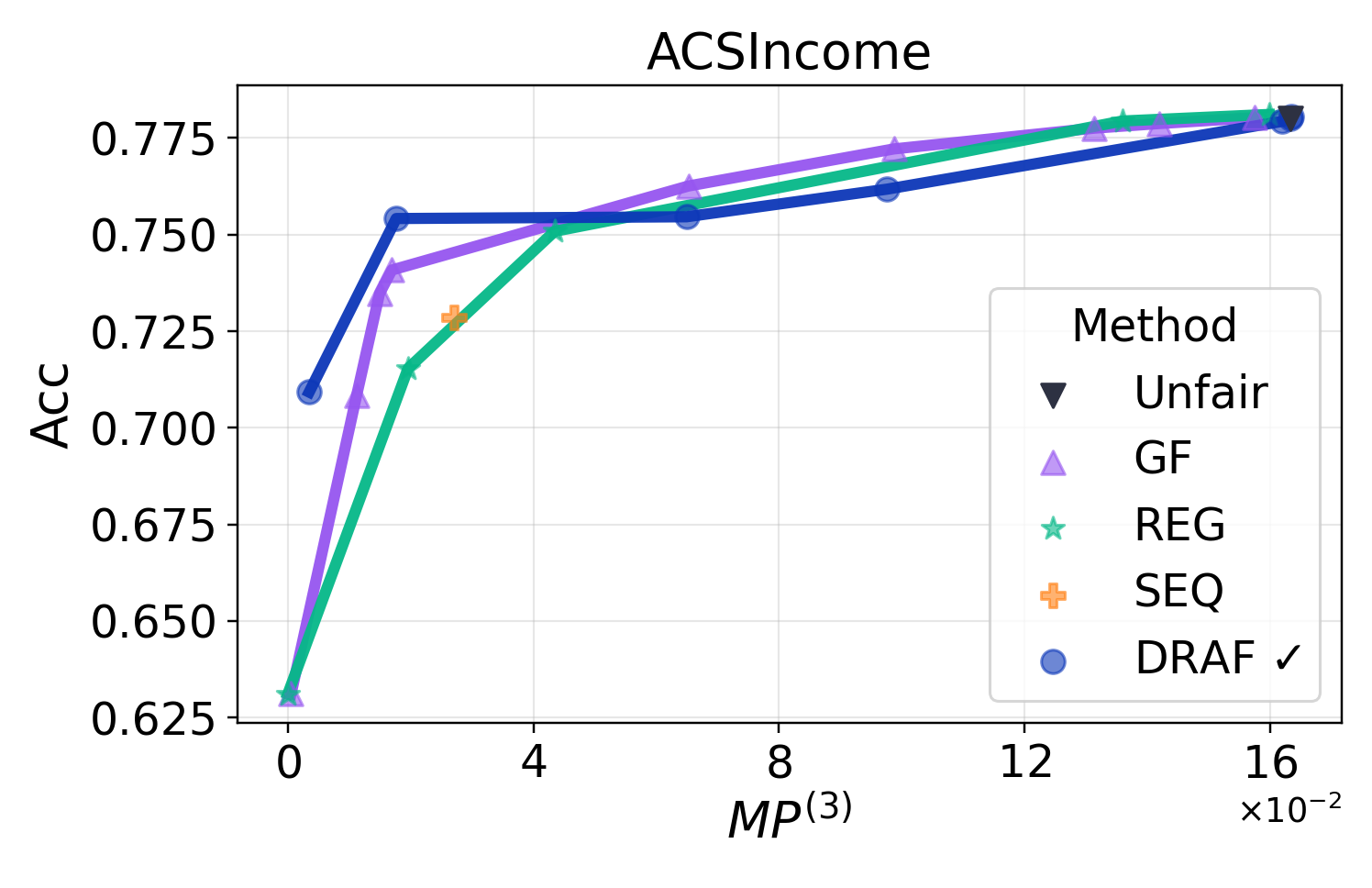}
    \caption{
    Trade-off between fairness level and accuracy on \textsc{ACSIncome} dataset.
    (Left to Right)
    $\{ \textup{SP}, \textup{MP}^{(1)}, \textup{WMP}, \textup{MP}^{(2)} \}$ vs. \texttt{Acc} on \textsc{ACSIncome} dataset.
    We set $\gamma$ to $0.001$.
    }
    \label{fig:acsincome}
    \vskip -0.1in
\end{figure}

\end{enumerate}


\textbf{Scalability}
To assess the scalability of DRAF with respect to the number of subgroups ($\vert \mathcal{S} \vert = 2^{q}$), we measure its running time as $|\mathcal{S}|$ increases.
To do so, we construct a toy dataset with $n = 2^{16}$ and randomly assign sensitive attributes to the data.
As shown in \cref{table:runtime}, the runtime of DRAF increases only moderately even though the number of subgroup-subsets grows exponentially (by $27\%$ when $|\mathcal{S}|$ grows from $2^{4}$ to $2^{14}$).
This suggests that DRAF remains scalable in terms of computation time.

\begin{table}[!h]
    \footnotesize
    \caption{Comparison of runtime of DRAF for various cases of $\vert \mathcal{S} \vert$ (100\% for $q = 4$).}
    \label{table:runtime}
    \vskip -0.1in
    \centering
    \begin{tabular}{l|cccccc}
        \toprule
        $|\mathcal{S}| = 2^{q}$ & $2^4$ & $2^6$ & $2^8$ & $2^{10}$ & $2^{12}$ & $2^{14}$ \\
        \midrule
        Runtime  & 100\% & 101\% & 106\% & 105\% & 105\% & 127\% \\
        \bottomrule
    \end{tabular}
\end{table}


\subsection{Extensions of DRAF}\label{sec:appen-ext_DRAF}

\paragraph{Extension to multi-class classification}

Here, we show that DRAF is not limited to binary classification but can be extended to multi-class classification problem.
We define the fairness measure in this setting as the maximum of the parity gaps across all classes, following \citet{10.5555/3722577.3722707}, and compare DRAF with the existing baseline method of \citet{10.5555/3722577.3722707}.
Following \citet{10.5555/3722577.3722707}, we use \textsc{Communities} dataset with five classes and binary sensitive attribute (percent of  not speaking english well).

\begin{table}[h]
    \footnotesize
    \caption{Comparison of performance for multi-class classification problem.}
    \label{table:multi_class}
    \centering
    \begin{tabular}{l|cc}
        \toprule
         & Acc & Max. of $\textup{MP}^{(1)}$ over classes \\
         \midrule
        Unfair & $0.457$ & $0.108$ \\
        \citep{10.5555/3722577.3722707}  & $0.365$ & $0.056$ \\
        DRAF $\checkmark$ & $\textbf{0.380}$ & $\textbf{0.054}$ \\
        \bottomrule
    \end{tabular}
\end{table}

As shown in results in \cref{table:multi_class}, DRAF achieves a competitive fairness level compared to the baseline, while attaining a higher accuracy.

\paragraph{Extension to equalized odds}

In addition to demographic parity notion, we further verify that DRAF can be extended to other group fairness notions, e.g., equalized odds (EO).
We modify the DR penalty for EO as:
$
\textup{DR}_{\textup{TPR}}^2(f, \mathbf{v},g) 
:= 1-\frac{\left\{ \sum_{i=1}^n(\mathbf{v}^\top c_i-g(f_{i}))^2-\sum_{i=1}^n (g(f_{i})- \mu_{\mathbf{v}})^2\right\}}{
\sum_{i=1}^n (\mathbf{v}^\top c_i- \mu_{\mathbf{v}})^2},
c_{im} = 2\mathbb{I}(s_i \in W_m, y_{i} = 1) - 1
$
and
$
\textup{DR}_{\textup{FPR}}^2(f, \mathbf{v},g) 
:= 1-\frac{\left\{ \sum_{i=1}^n(\mathbf{v}^\top c_i-g(f_{i}))^2-\sum_{i=1}^n (g(f_{i})- \mu_{\mathbf{v}})^2\right\}}{
\sum_{i=1}^n (\mathbf{v}^\top c_i- \mu_{\mathbf{v}})^2},
c_{im} = 2\mathbb{I}(s_i \in W_m, y_{i} = 0) - 1,
$
where the former is for TPR (True Positive Rate) and the latter is for FPR (False Positive Rate).
Then, we minimize
$ \frac{1}{n} \sum_{i=1}^{n} l(y_{i}, f(x_{i}, s_{i})) + \frac{\lambda}{2} \left( z\textup{-DR}_{\textup{TPR}}^2 (f, \mathbf{v},g) + z\textup{-DR}_{\textup{FPR}}^2 (f, \mathbf{v},g) \right), $
where $z\textup{-DR}_{\textup{TPR}}^2 $ and $ z\textup{-DR}_{\textup{FPR}}^2$ are the corresponding $z$-transformations of $\textup{DR}_{\textup{TPR}}^2(f, \mathbf{v},g) $ and $\textup{DR}_{\textup{FPR}}^2(f, \mathbf{v},g),$ respectively.
We call this modified DRAF algorithm specially tailored for EO as DRAF-EO.

For fairness performance, we consider marginal fairness and subgroup fairness measures, similar to the main analysis on demographic parity.
Let $n_{+} = \sum_{i=1}^{n} \mathbb{I}(y_{i} = 1), n_{-} = \sum_{i=1}^{n} \mathbb{I}(y_{i} = 0), n_{+, s} = \sum_{i=1}^{n} \mathbb{I}(y_{i} = 1, s_{i} = s),$ and $n_{-, s} = \sum_{i=1}^{n} \mathbb{I}(y_{i} = 0, s_{i} = s).$
We also define the positive prediction ratios as
$ \hat{p}_{+} := \frac{1}{n_{+}} \sum_{i: y_{i} = 1} \mathbb{I} (\hat{y}_{i}=1), 
\hat{p}_{+, s} := \frac{1}{n_{+, s}}\sum_{i: y_{i}=1, s_i=s}\mathbb{I}(\hat{y}_{i}=1),
\hat{p}_{-} := \frac{1}{n_{-}} \sum_{i: y_{i} = 0} \mathbb{I} (\hat{y}_{i}=1),
$
and
$
\hat{p}_{-, s} := \frac{1}{n_{-, s}}\sum_{i: y_{i}=0, s_i=s}\mathbb{I}(\hat{y}_{i}=1).$
Furthermore, 
$n_{+, L}^{(a)}, \hat p_{+, L}^{(a)}, n_{-, L}^{(a)}, \hat p_{-, L}^{(a)} $ and $\widehat{\mathbb{P}}_{+, f}, \widehat{\mathbb{P}}_{-, f}, \widehat{\mathbb{P}}_{+, f, j \vert a}, \widehat{\mathbb{P}}_{-, f, j \vert a}$ are defined similarly.
\cref{tab:f_measures_eo} describes the fairness performance measures used in the experiments for EO.

\begin{table}[h]
    \centering
    \caption{
        Fairness performance measures for EO.
    }
    \label{tab:f_measures_eo}
    \footnotesize
    \begin{tabular}{c|c|c}
        \toprule
        Name & Meaning & Formula
        \\
        \midrule
        \midrule
        $\textup{TPR}^{(l)}$ & $l^{\textup{th}}$-order TPR
        & $ 
        \max_{L \subseteq [q], \vert L \vert = l} \sum_{a \in \{0,1\}^{l}} 
        \frac{n_{+, L}^{(a)}}{n_{+}} \big|\hat p_{+, L}^{(a)}-\hat p_{+} \big| $
        \\
        $\textup{FPR}^{(l)}$ & $l^{\textup{th}}$-order FPR
        & $ 
        \max_{L \subseteq [q], \vert L \vert = l} \sum_{a \in \{0,1\}^{l}} 
        \frac{n_{-, L}^{(a)}}{n_{-}} \big|\hat p_{-, L}^{(a)}-\hat p_{-} \big| $
        \\
        $\textup{WTPR}$ & Distributional TPR 
        & $
        \max_{j \in [q]} 
        \max \left\{
        \frac{n_{+, j}^{(0)}}{n_{+}} \textup{W}_{1} (\widehat{\mathbb{P}}_{+, f, j \vert 0}, \widehat{\mathbb{P}}_{+, f}),
        \frac{n_{+, j}^{(1)}}{n_{+}} \textup{W}_{1} (\widehat{\mathbb{P}}_{+, f, j \vert 1}, \widehat{\mathbb{P}}_{+, f})
        \right\} $
        \\
        $\textup{WFPR}$ & Distributional FPR 
        & $
        \max_{j \in [q]} 
        \max \left\{
        \frac{n_{-, j}^{(0)}}{n_{-}} \textup{W}_{1} (\widehat{\mathbb{P}}_{-, f, j \vert 0}, \widehat{\mathbb{P}}_{-, f}),
        \frac{n_{-, j}^{(1)}}{n_{-}} \textup{W}_{1} (\widehat{\mathbb{P}}_{-, f, j \vert 1}, \widehat{\mathbb{P}}_{-, f})
        \right\} $
        \\
        $\textup{STPR}$ & Subgroup TPR
        & $ 
        \max_{s\in\{0,1\}^q} \frac{n_{+, s}}{n_{+}} \big|\hat{p}_{+, s}-\hat{p}_{+}\big|$
        \\
        $\textup{SFPR}$ & Subgroup FPR
        & $ 
        \max_{s\in\{0,1\}^q} \frac{n_{-, s}}{n_{-}} \big|\hat{p}_{-, s}-\hat{p}_{-}\big|$
        \\
        \bottomrule
    \end{tabular}
\end{table}

For a baseline method, we consider FairICP \citep{Lai2025fairicp}, which is a specially designed adversarial learning algorithm for EO in presence of subgroups.
The results are given in \cref{tab:adult_eo_fairness_marginal,tab:adult_eo_fairness_subgroup}, which show that DRAF-EO performs competitive to FairICP, enhancing the empirical superiority and flexibility of our proposed doubly regressing approach.

\begin{table}[h!]
    \centering
    \footnotesize
    \caption{Comparison of marginal fairness on \textsc{Adult} dataset.}
    \label{tab:adult_eo_fairness_marginal}
    \begin{tabular}{l|ccccccc}
    \toprule
    & &
    \multicolumn{3}{c}{Prediction-based} &
    \multicolumn{3}{c}{Distribution-based}  \\
    \cmidrule(lr){3-5}
    \cmidrule(lr){6-8}    
    Method  & Acc &
    $\textup{TPR}^{(1)}$ & $\textup{FPR}^{(1)}$ & $ \frac{\textup{TPR}^{(1)} + \textup{FPR}^{(1)}}{2} $ &
    $\textup{WTPR}$ & $\textup{WFPR}$ & $ \frac{\textup{WTPR} + \textup{WFPR}}{2} $ \\
    \midrule
    Unfair & 0.855 & 0.0993     & 0.0886     & 0.0940     & 0.0687     & 0.1183     & 0.0928    \\
    FairICP & 0.804 & 0.0342 & 0.0150 & 0.0238 & 0.0216 & 0.0214 & 0.0207  \\
    DRAF-EO $\checkmark$  & 0.804 & \textbf{0.0155} & \textbf{0.0010} & \textbf{0.0082} & \textbf{0.0113} & \textbf{0.0087} & \textbf{0.0100} \\
    \bottomrule
    \end{tabular}
\end{table}

\begin{table}[h!]
    \centering
    \footnotesize
    \caption{Comparison of subgroup fairness on \textsc{Adult} dataset.}
    \label{tab:adult_eo_fairness_subgroup}
    \begin{tabular}{l|cccc}
        \toprule
        \cmidrule(lr){3-5} 
        Method  & Acc & STPR & SFPR & $\frac{\textup{STPR} + \textup{SFPR}}{2}$ \\
        \midrule
        Unfair & 0.855  & 0.0565     & 0.0284     & 0.0422     \\
        FairICP  & 0.804 & 0.0154 & 0.0043 & 0.0091 \\
        DRAF-EO $\checkmark$ & 0.804 & \textbf{0.0056} & \textbf{0.0004} & \textbf{0.0030} \\
        \bottomrule
    \end{tabular}
\end{table}

\clearpage
\subsection{Additional studies}\label{sec-appen:add}

\paragraph{Excluding the marginal subgroups from $\mathcal{W}$}

Let $\textup{DRAF}_{\textup{-m}}$ denotes the DRAF variant whose $\mathcal{W}$ does not include the first-order marginal subgroups.
\cref{fig:dr_only_1} shows that, excluding first-order marginal subgroups from $\mathcal{W}$ (i.e., $\textup{DRAF}_{\textup{-m}}$) can harm first-order marginal fairness, even subgroup fairness is satisfied.
Moreover, on \textsc{CivilComments} dataset, $\textup{DRAF}_{\textup{-m}}$ and DRAF perform comparable in terms of $\textup{MP}^{(1)},$ when $\textup{MP}^{(1)}$ is not small, but DRAF significantly outperforms $\textup{DRAF}_{\textup{-m}}$ in view of WMP.
This observation suggests that achieving prediction-based fairness (e.g., $\textup{MP}^{(1)}$) does not necessarily guarantee distributional fairness (e.g., WMP), and it highlights the need to control distributional fairness as well, which DRAF aims at.

\begin{figure}[h]
    \vskip -0.1in
    \centering
    \includegraphics[width=0.22\linewidth]{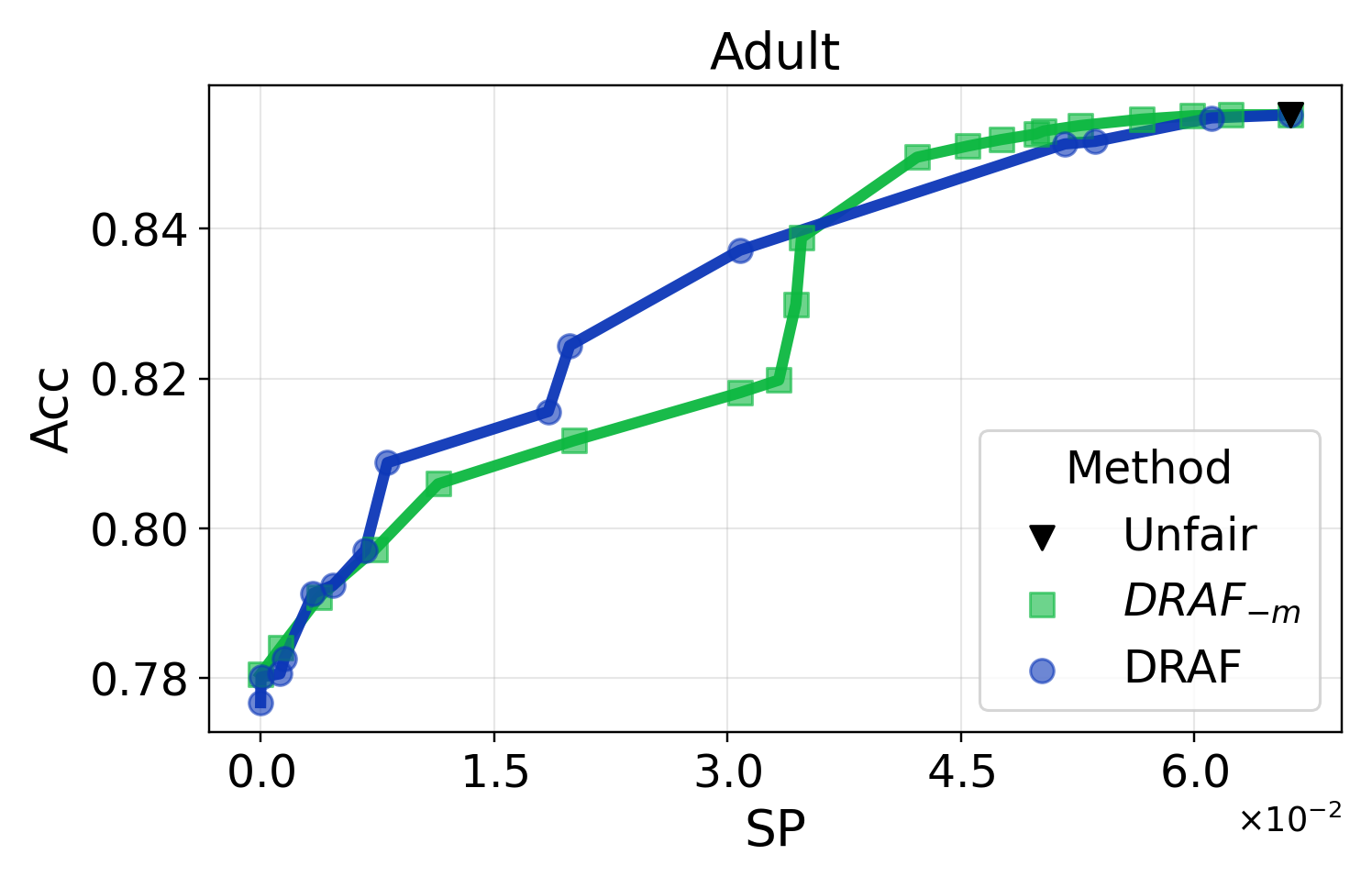}
    \includegraphics[width=0.22\linewidth]{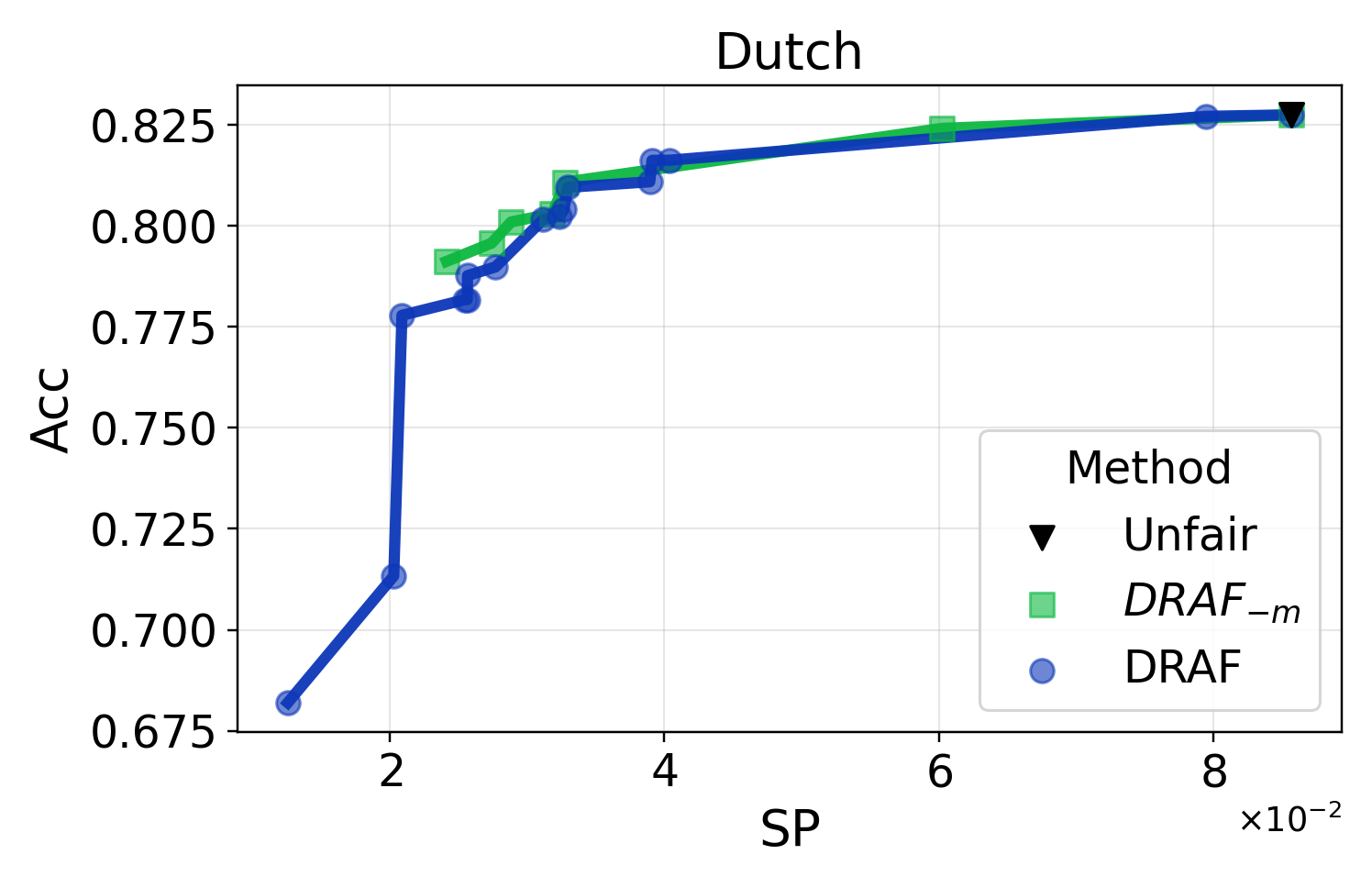}
    \includegraphics[width=0.22\linewidth]{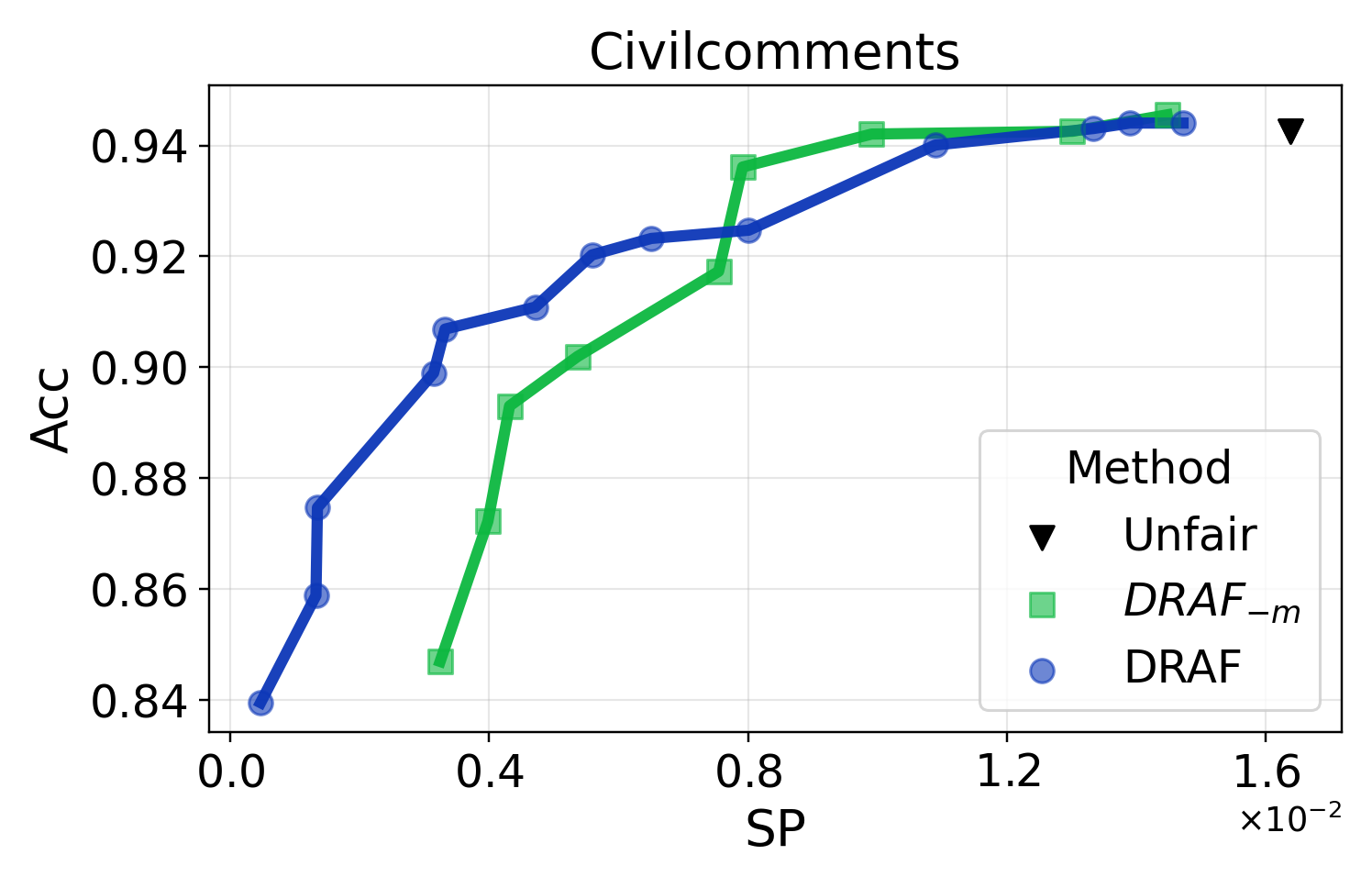}
    \includegraphics[width=0.22\linewidth]{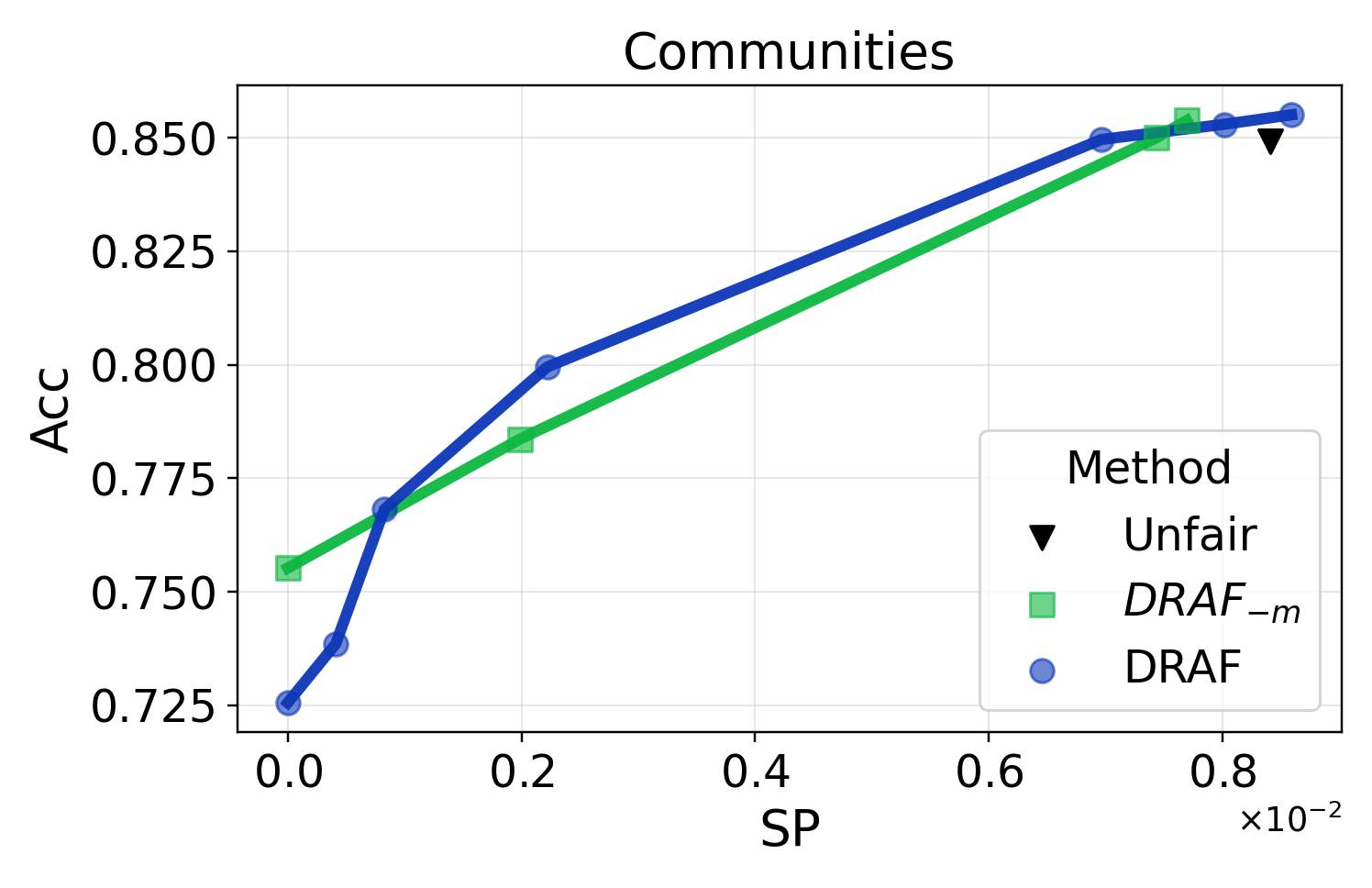}    
    \\
    \includegraphics[width=0.22\linewidth]{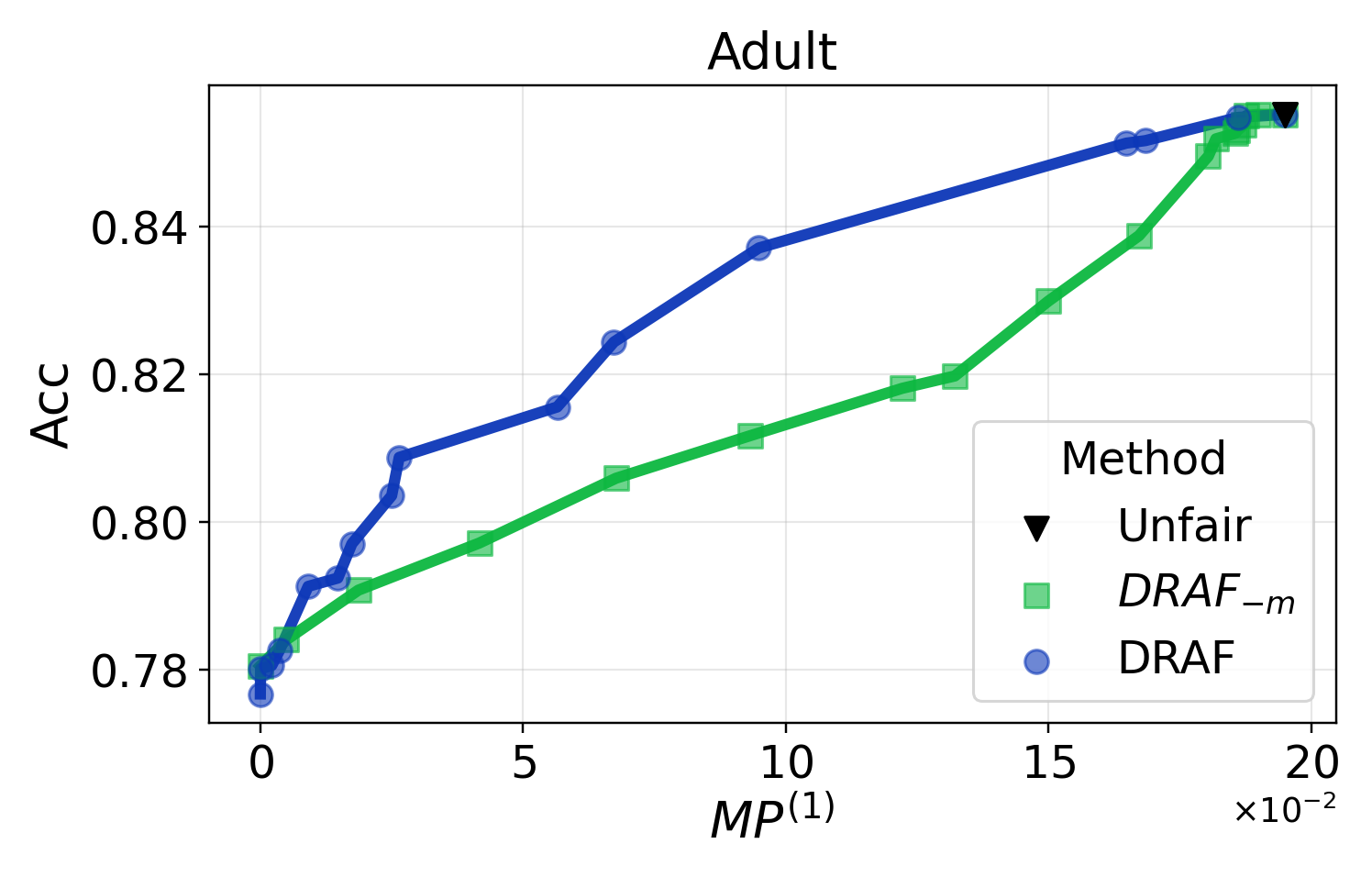}
    \includegraphics[width=0.22\linewidth]{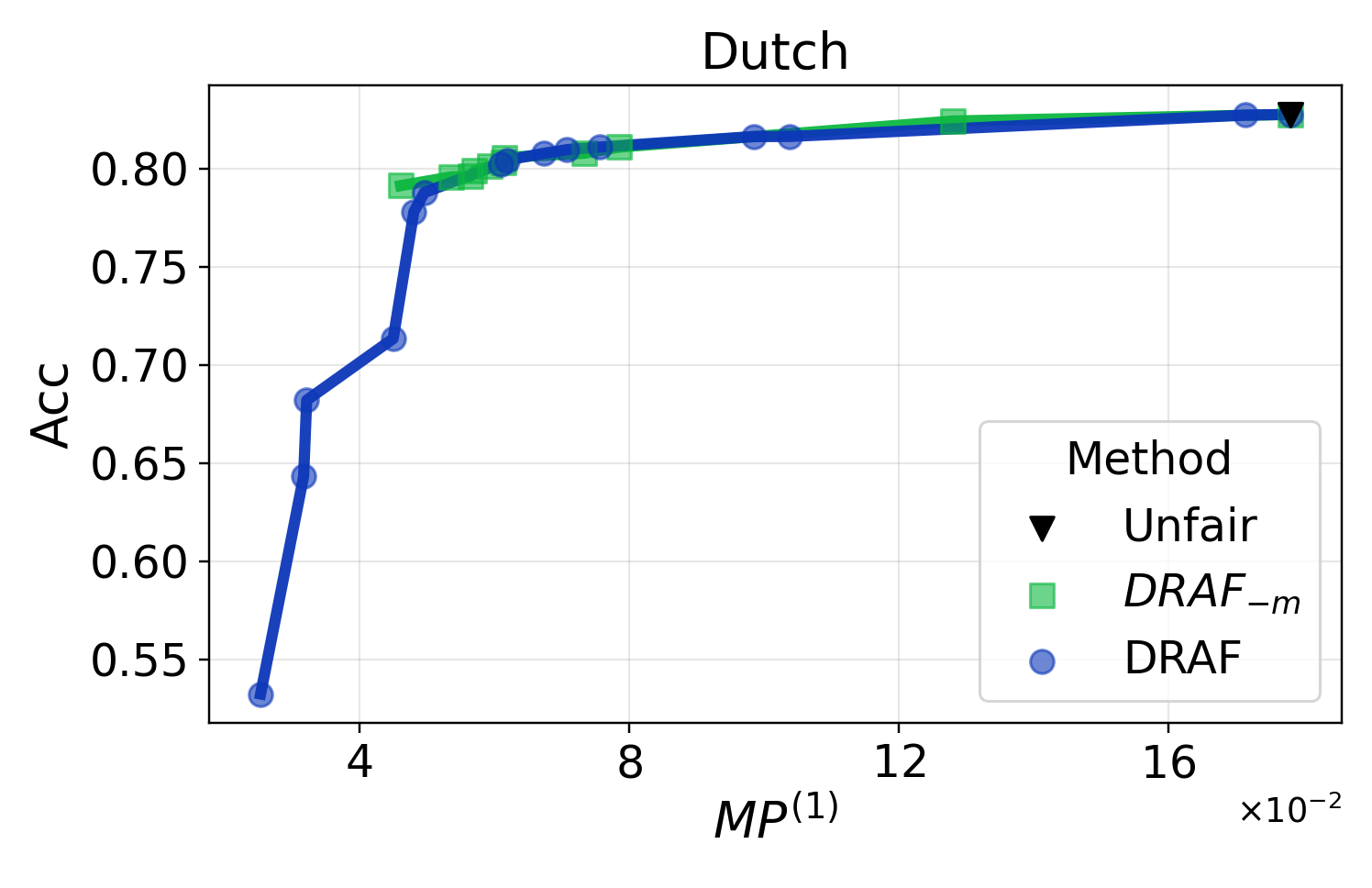}
    \includegraphics[width=0.22\linewidth]{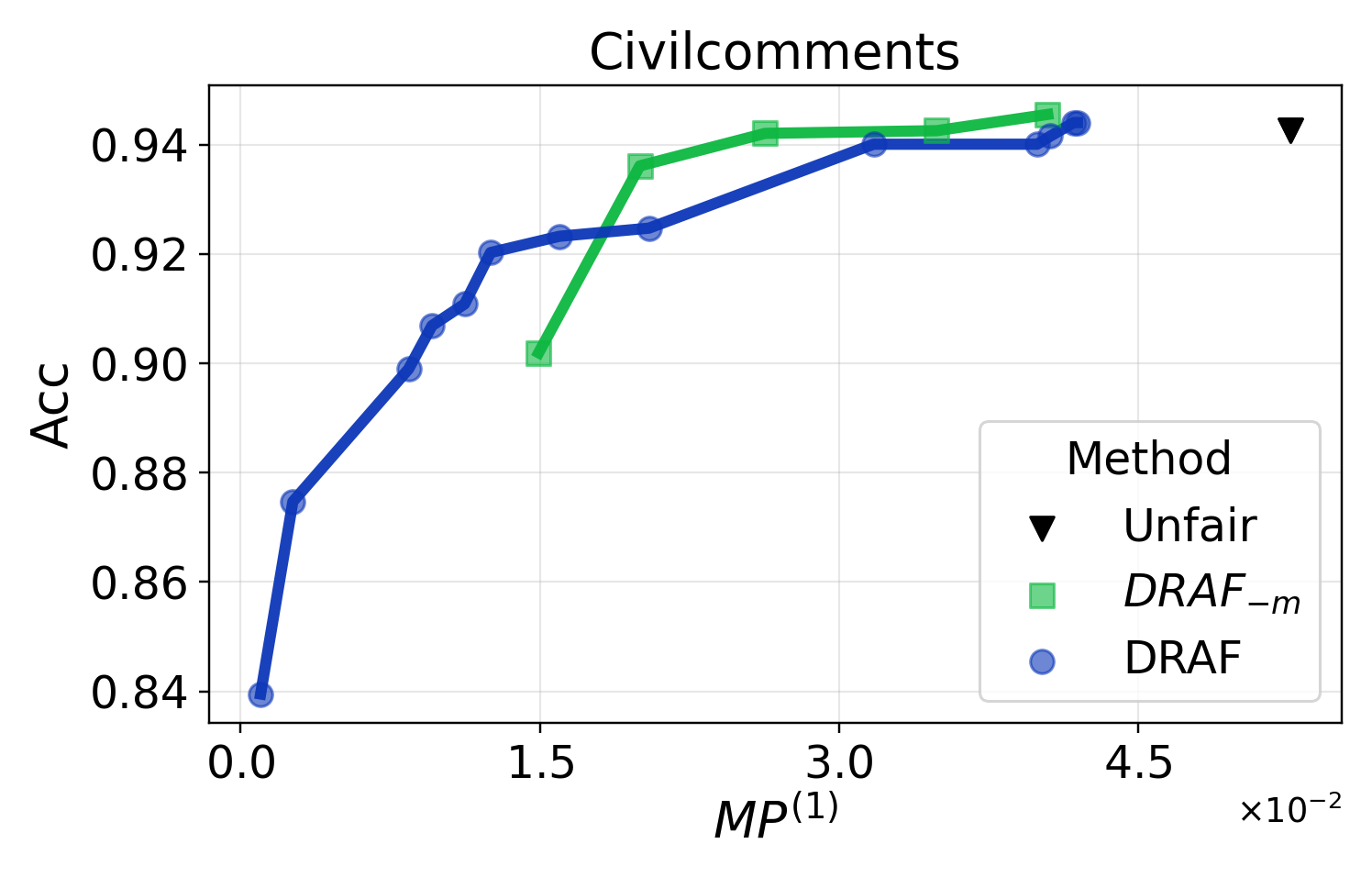}
    \includegraphics[width=0.22\linewidth]{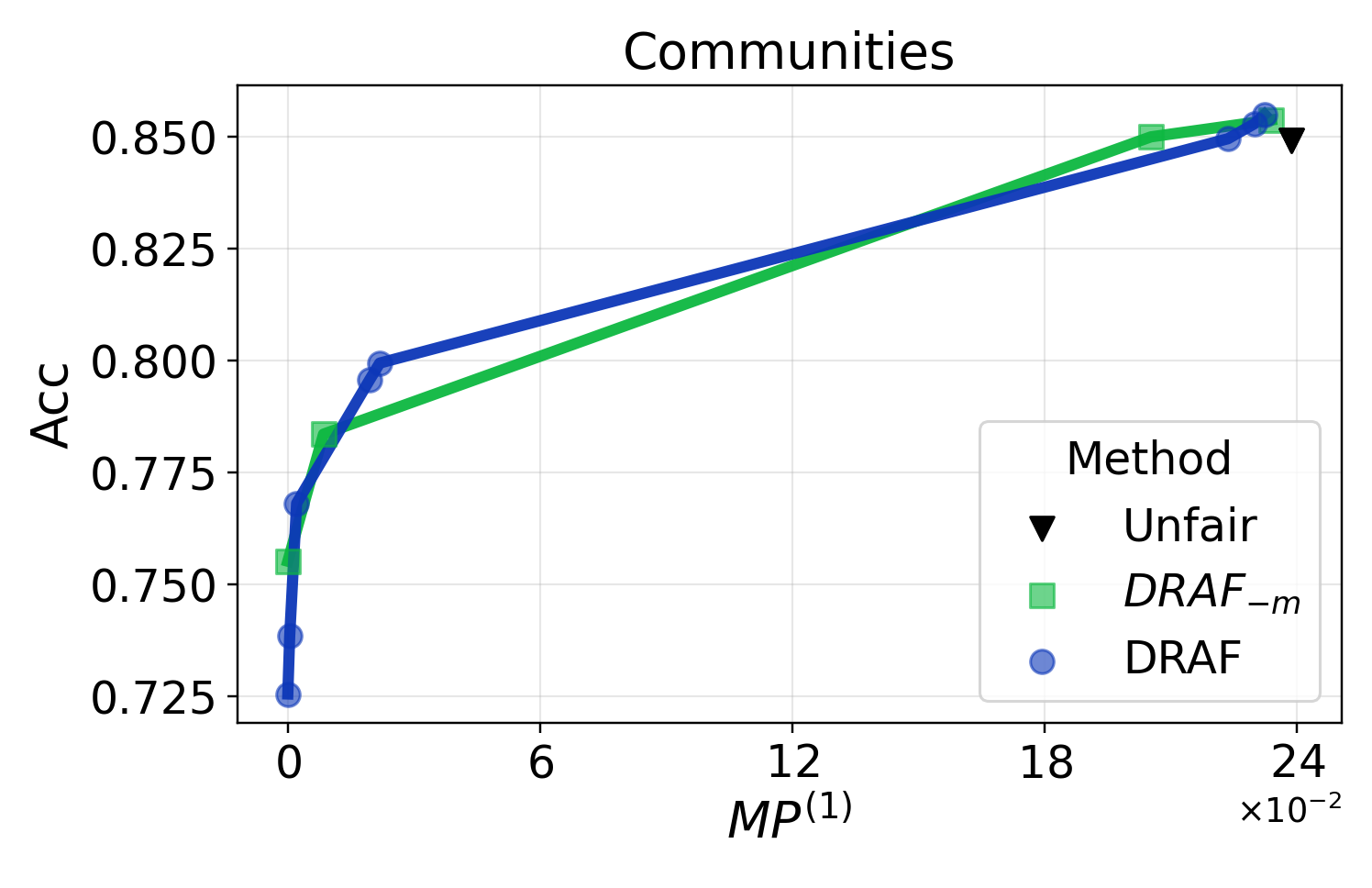}
    \\
    \includegraphics[width=0.22\linewidth]{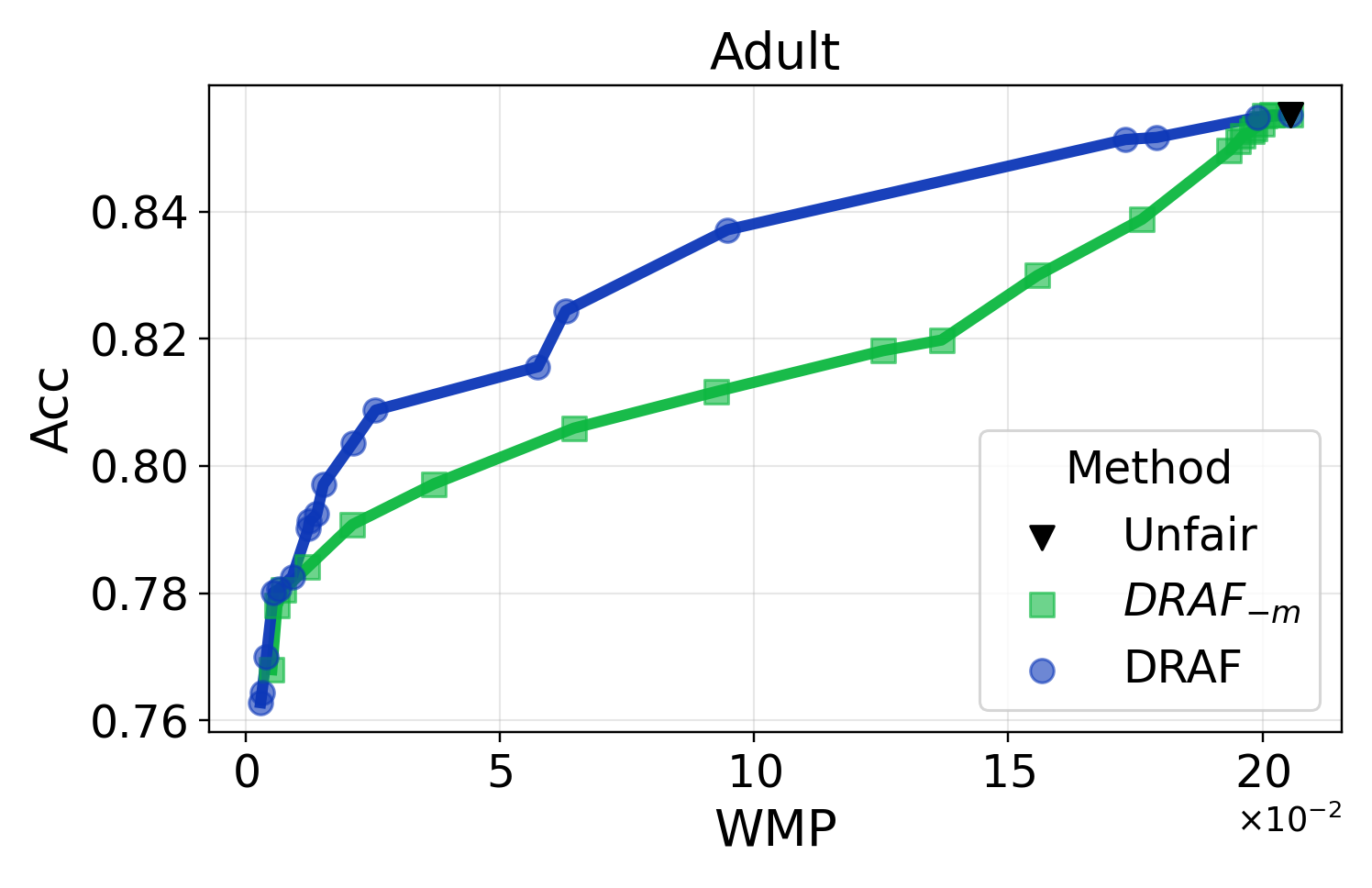}
    \includegraphics[width=0.22\linewidth]{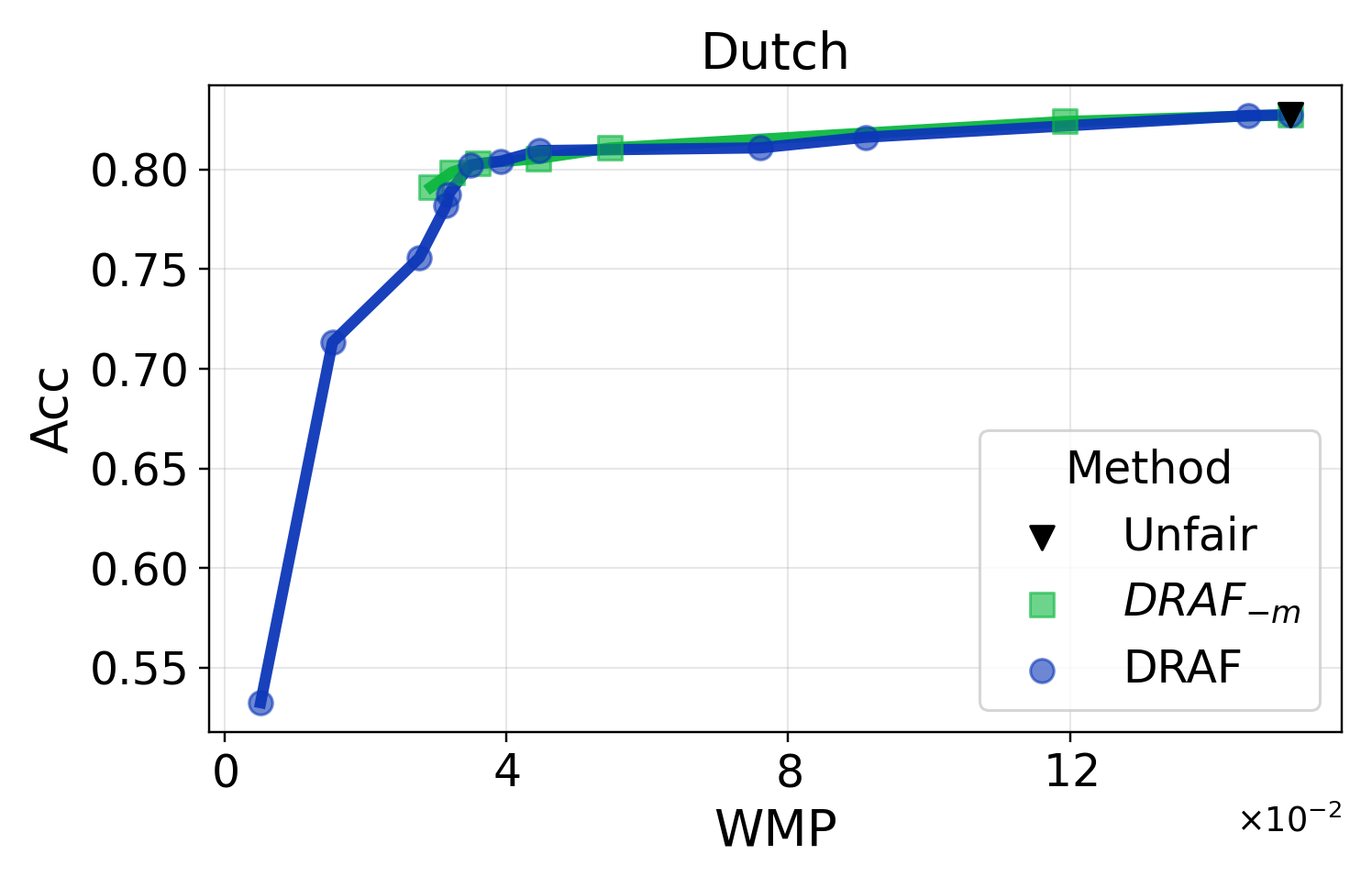}
    \includegraphics[width=0.22\linewidth]{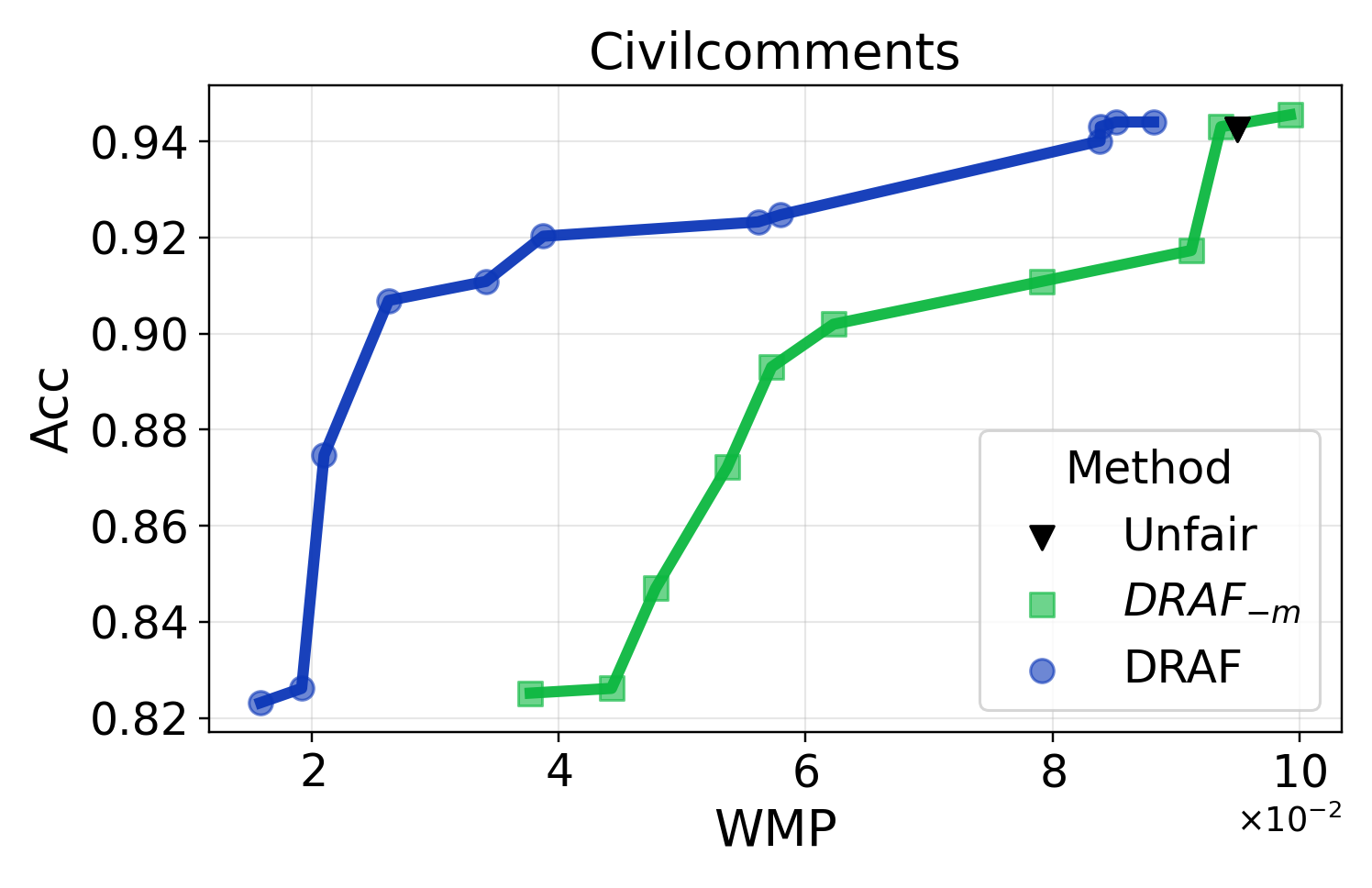}
    \includegraphics[width=0.22\linewidth]{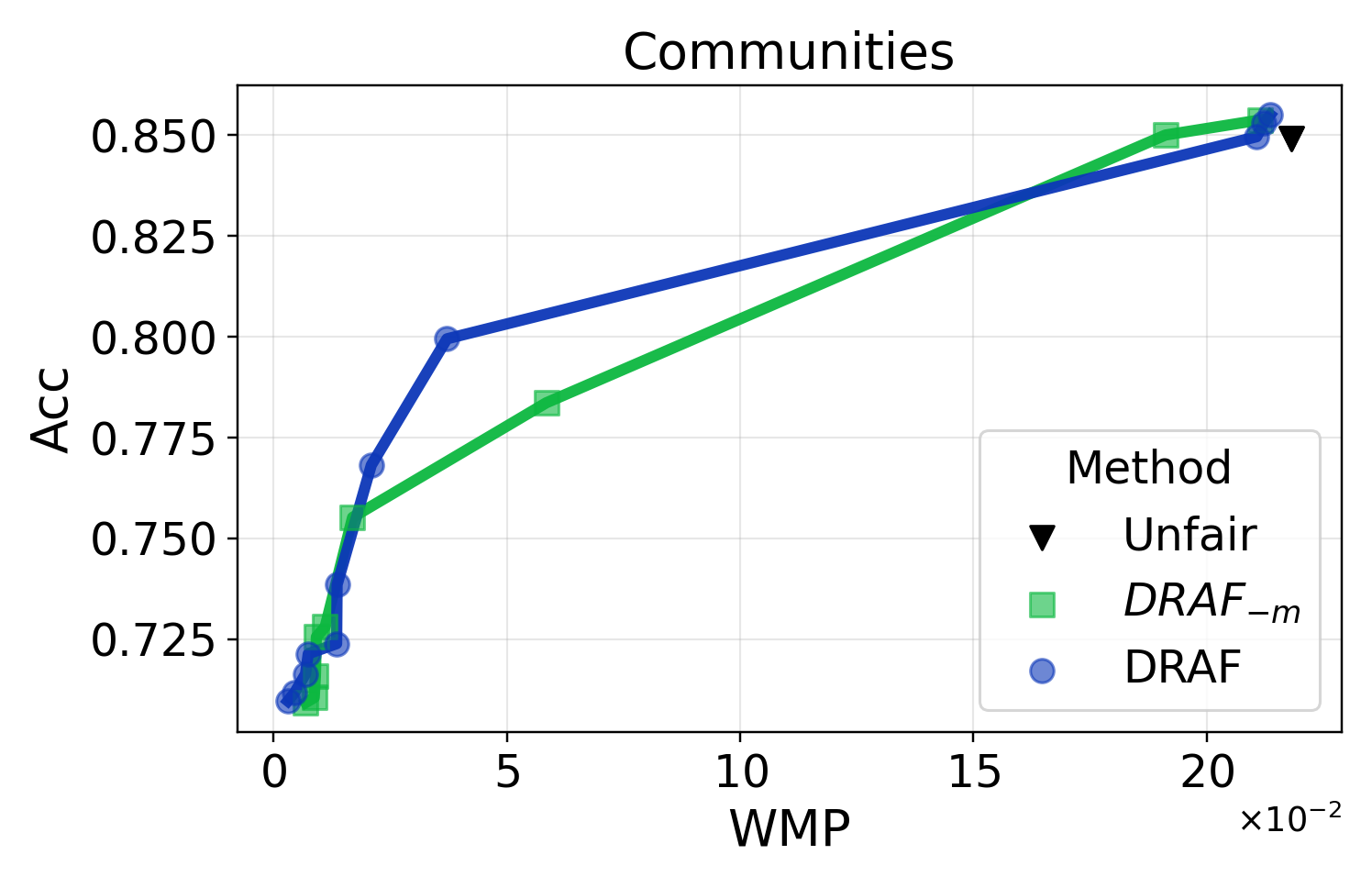}
    \caption{
    Comparison of $\textup{DRAF}_{\textup{-m}}$ and DRAF in terms of
    SP (top),
    $\textup{MP}^{(1)}$ (center),
    and WMP (bottom).
    We set $\gamma$ to $0.2, 0.001, 0.2,$ and $0.05$ for \textsc{Adult, Dutch, CivilComments}, and \textsc{Communities} dataset, respectively.
    }
    \label{fig:dr_only_1}
    \vskip -0.1in
\end{figure}



\begin{wrapfigure}{r}{0.3\linewidth}
    \centering
    \includegraphics[width=0.85\linewidth]{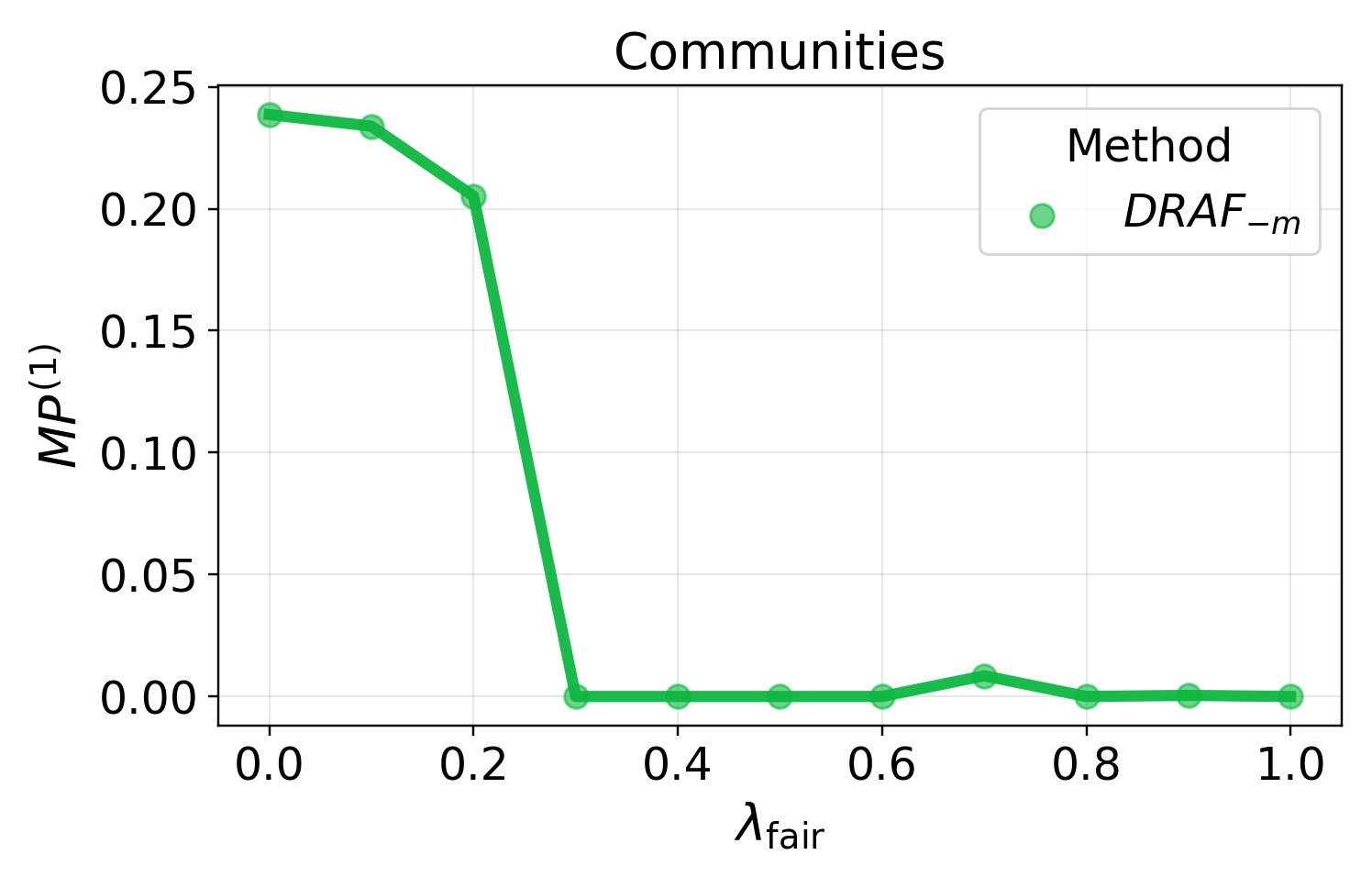}
    \caption{
    A plot between $\lambda$ and $\textup{MP}^{(1)}$ for $\textup{DRAF}_{\textup{-m}}$ on \textsc{Communities} dataset.
    We vary $\lambda \in [0.0, 0.1, \dots ,1.0]$.
    }
    \label{fig:lmda_comm}
    \vskip -0.1in
\end{wrapfigure}

Note that, on \textsc{Communities} dataset, $\textup{DRAF}_{\textup{-m}}$ and DRAF may appear similar in terms of $\textup{MP}^{(1)},$ however, it is because $\textup{DRAF}_{\textup{-m}}$ fails to achieve moderate fairness levels (e.g., $[0.02, 0.2]$), leaving no point on the Pareto-front line.
See \cref{fig:lmda_comm} for evidence that controlling $\textup{MP}^{(1)}$ is not numerically easy for $\textup{DRAF}_{\textup{-m}}.$
That is, a large drop in $\textup{MP}^{(1)}$ is occurred at $\lambda = 0.2$ and we observe that using $\lambda \in [0.2, 0.3]$ does not provide intermediate fairness levels.

Similarly, we also consider $\mathcal{W}$ that excludes the second-order marginal subgroups.
Let $\textup{DRAF}_{\textup{-m}^{2}}$ denotes the DRAF algorithm whose $\mathcal{W}$ does not include the second-order marginal subgroups.
\cref{fig:dr_only_2} shows that the second-order marginal fairness can be slightly harmed when excluding the second-order marginal subgroups in $\mathcal{W}.$
On the other hand, including the second-order marginal subgroups in $\mathcal{W}$ does not sacrifice first-order marginal or subgroup fairness, while can contribute to improving the second-order marginal fairness.
Hence, we basically recommend building $\mathcal{W}$ to include all the first-order, the second-order, and subgroups.

\begin{figure}[h]
    \vskip -0.1in
    \centering
    \includegraphics[width=0.3\linewidth]{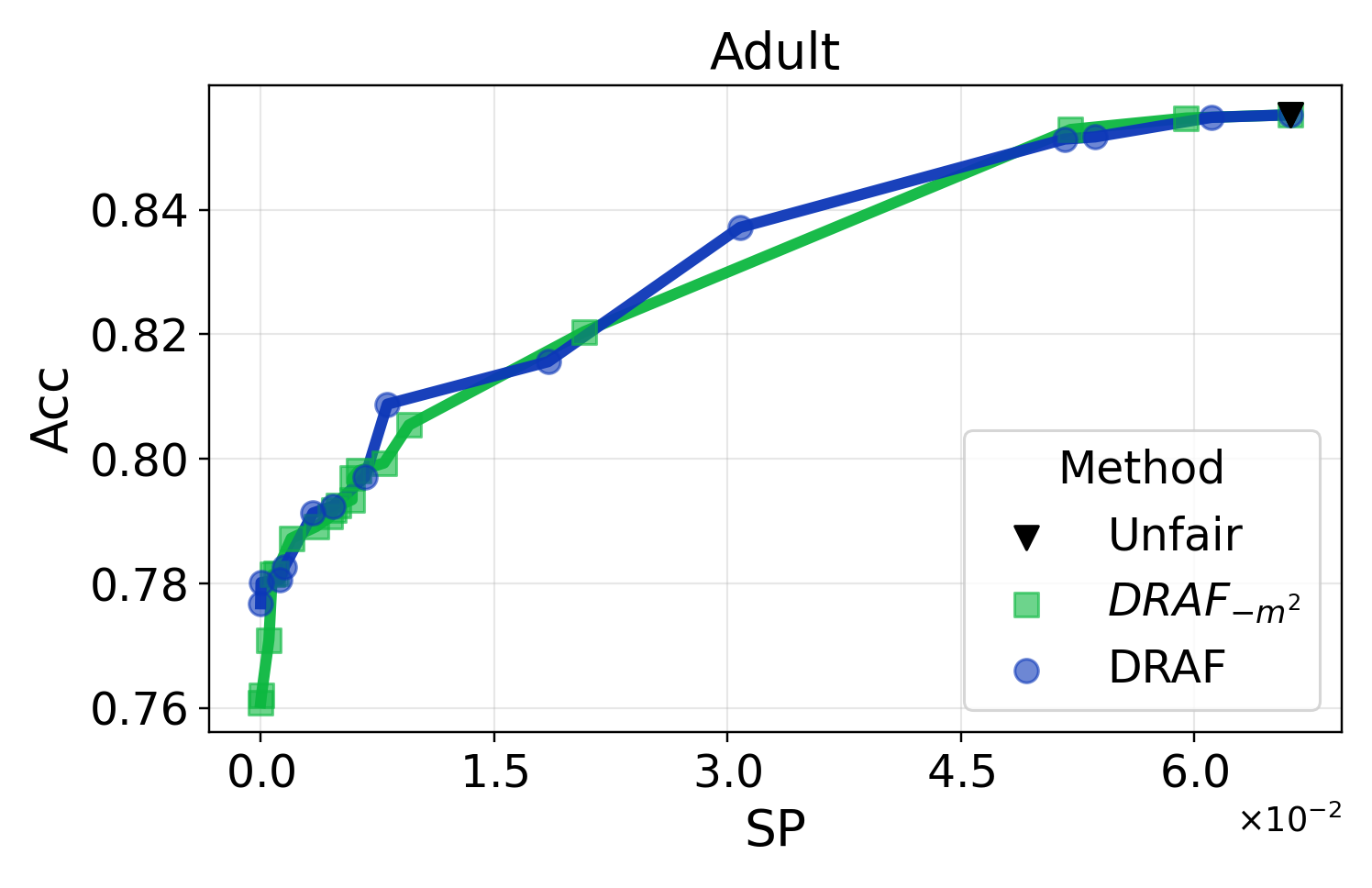}
    \includegraphics[width=0.3\linewidth]{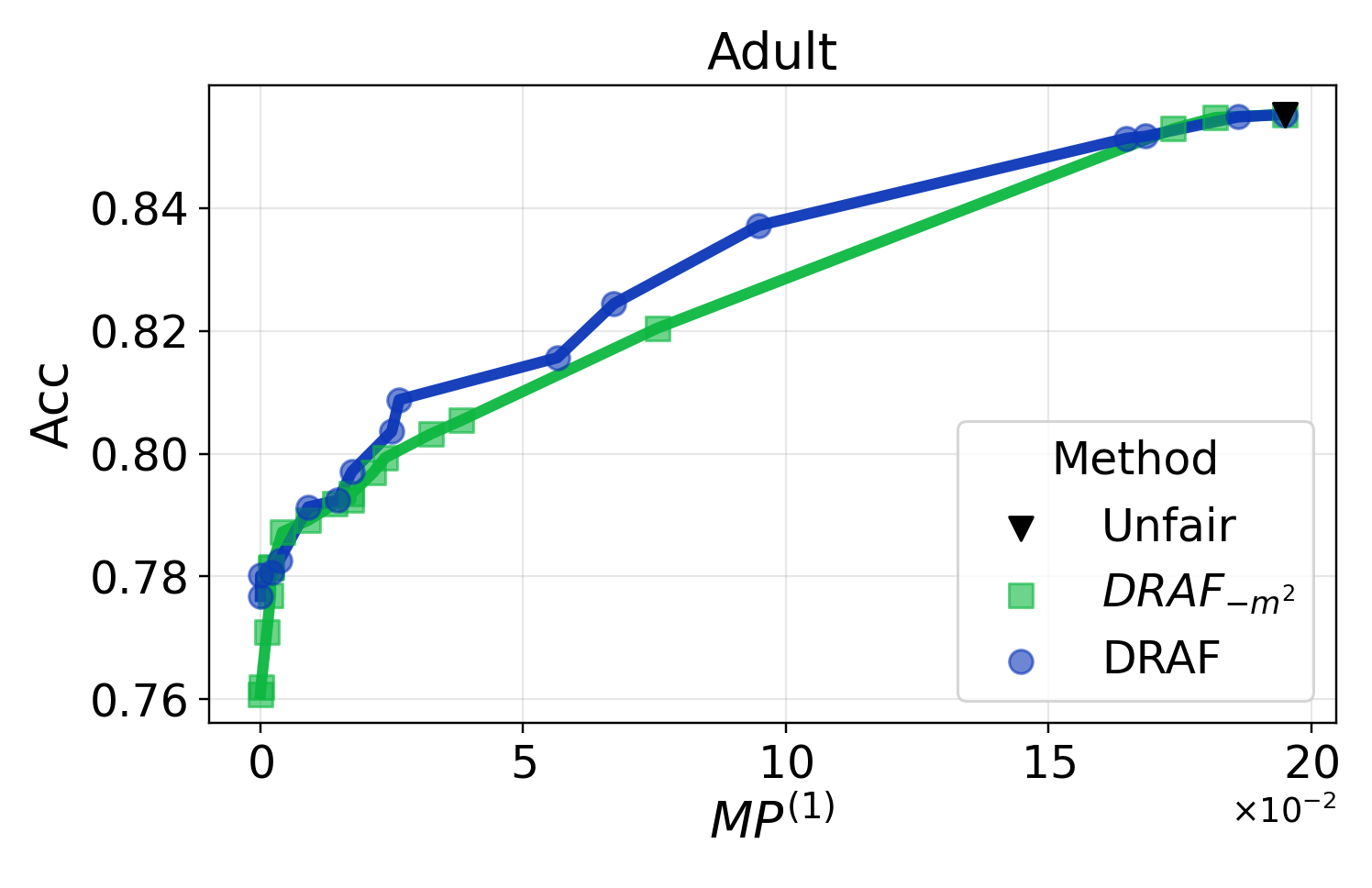}
    \includegraphics[width=0.3\linewidth]{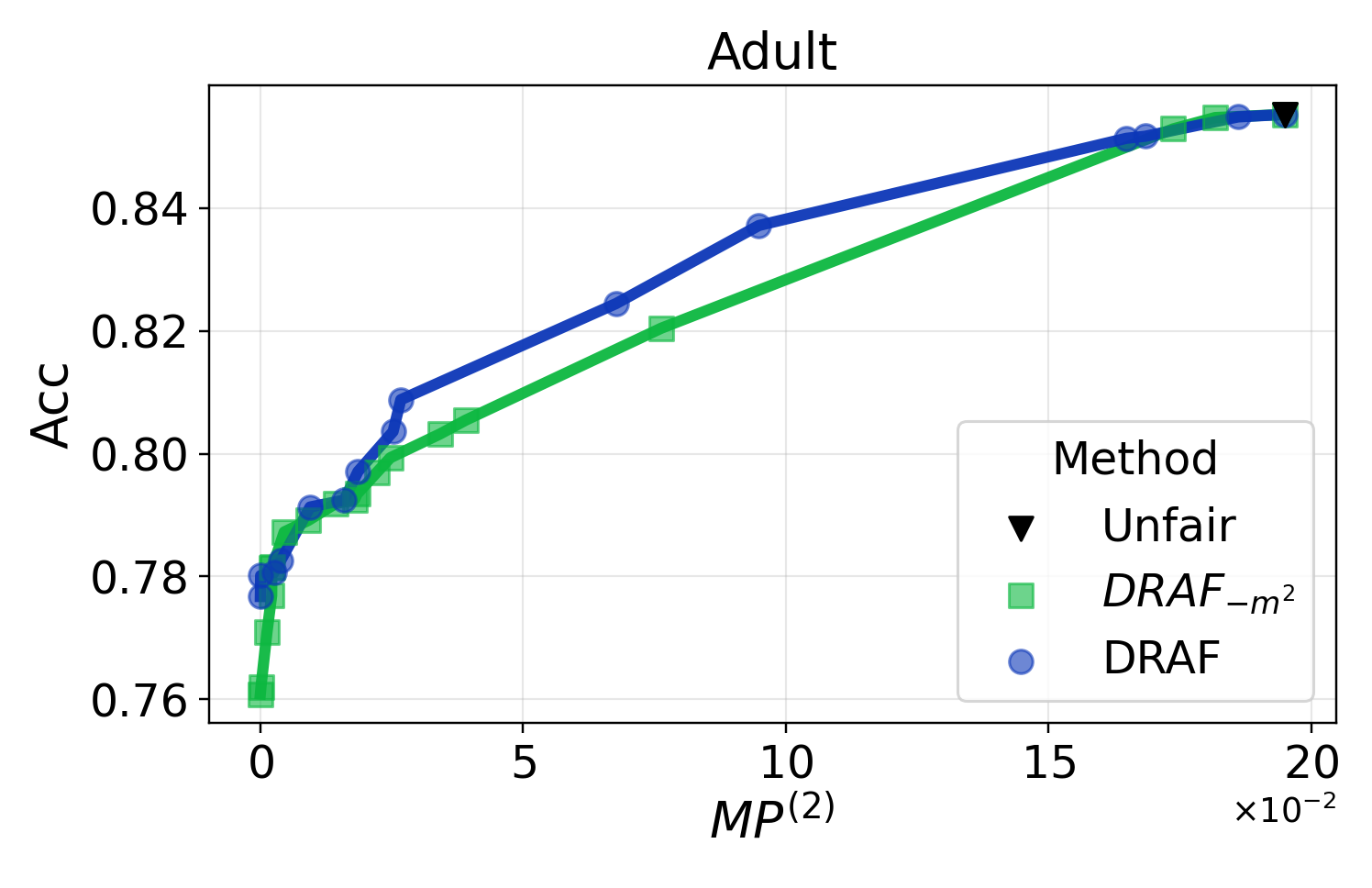}
    \caption{
    Comparison of $\textup{DRAF}_{\textup{-m}^{2}}$ and DRAF in terms of 
    subgroup fairness (left: SP),
    first-order marginal fairness (center: $\textup{MP}^{(1)}$),
    and the second-order marginal fairness (right: $\textup{MP}^{(2)}$) on \textsc{Adult} dataset.
    }
    \label{fig:dr_only_2}
    \vskip -0.1in
\end{figure}

\paragraph{Impact of $\gamma$}

To support the claim in \cref{sec:abl}, we vary $\gamma \in \{  0.001, 0.01, 0.1, 0.2, 0.3 \}$ and compare the performance.
The results in \cref{fig:gamma} show that a larger $\gamma$ (e.g., 0.3) degrades subgroup fairness performance compared to a small $\gamma$ (e.g., 0.01).
Conversely, since DRAF minimizes the worst disparity over subgroup-subsets in $\mathcal{W},$
a small $\gamma$ may lead to slightly worse first-order marginal fairness than a large $\gamma$ (e.g., 0.001 for \textsc{CivilComments} dataset), as it could focus on higher-order or subgroups rather than first-order marginal fairness for some cases.

\begin{figure}[h]
    \centering
    \includegraphics[width=0.24\linewidth]{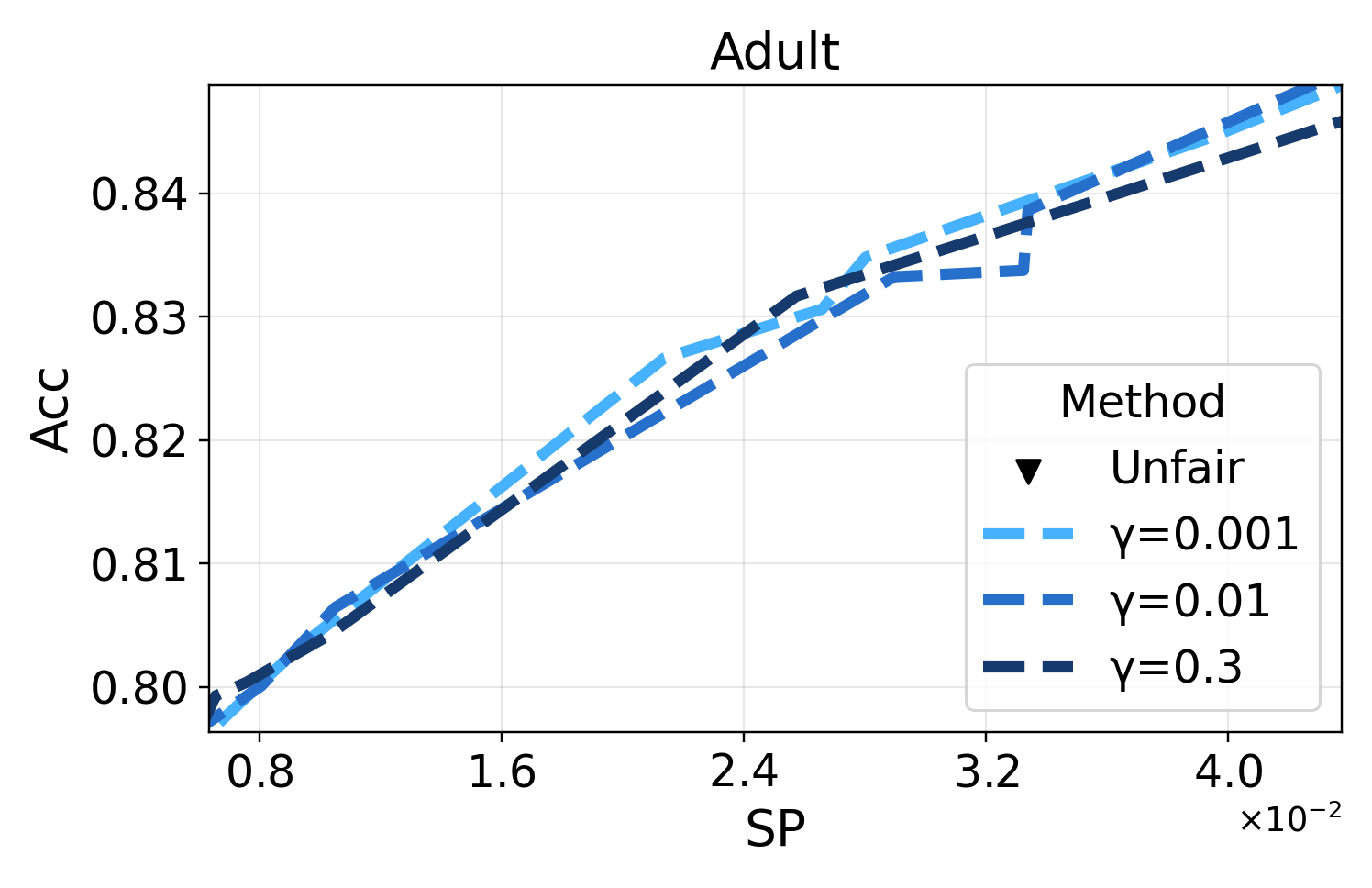}
    \includegraphics[width=0.24\linewidth]{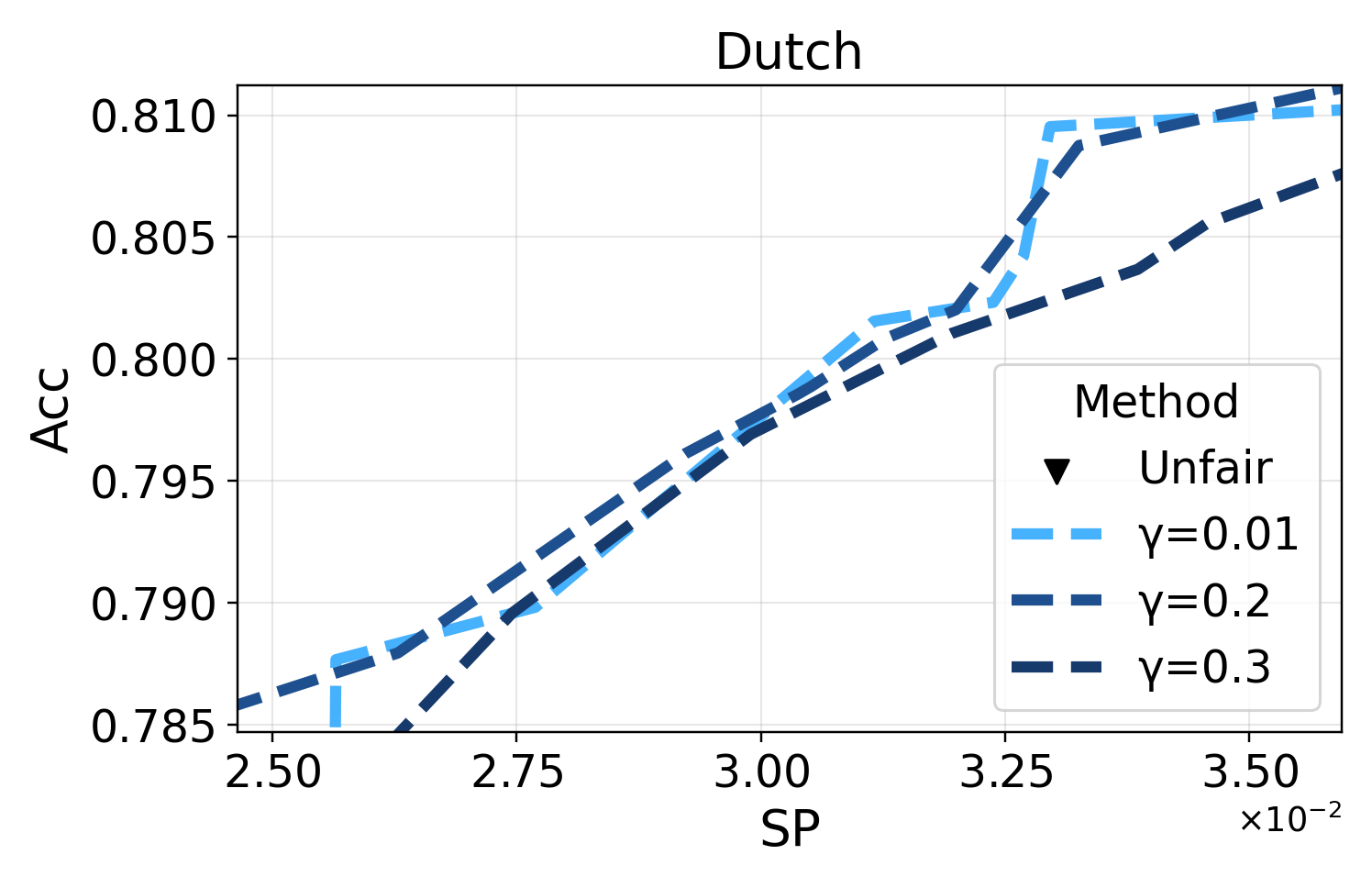}
    \includegraphics[width=0.24\linewidth]{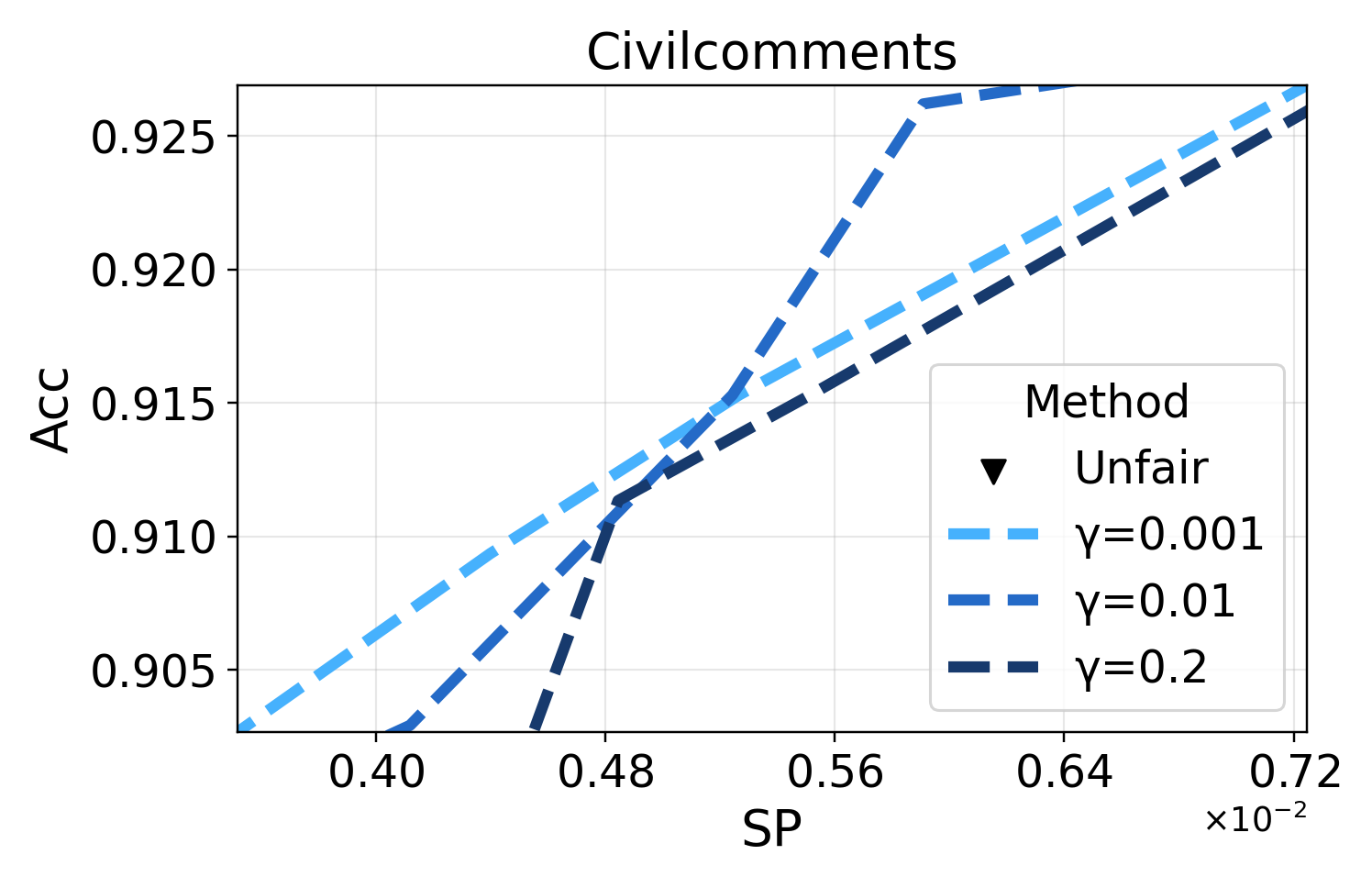}
    \includegraphics[width=0.24\linewidth]{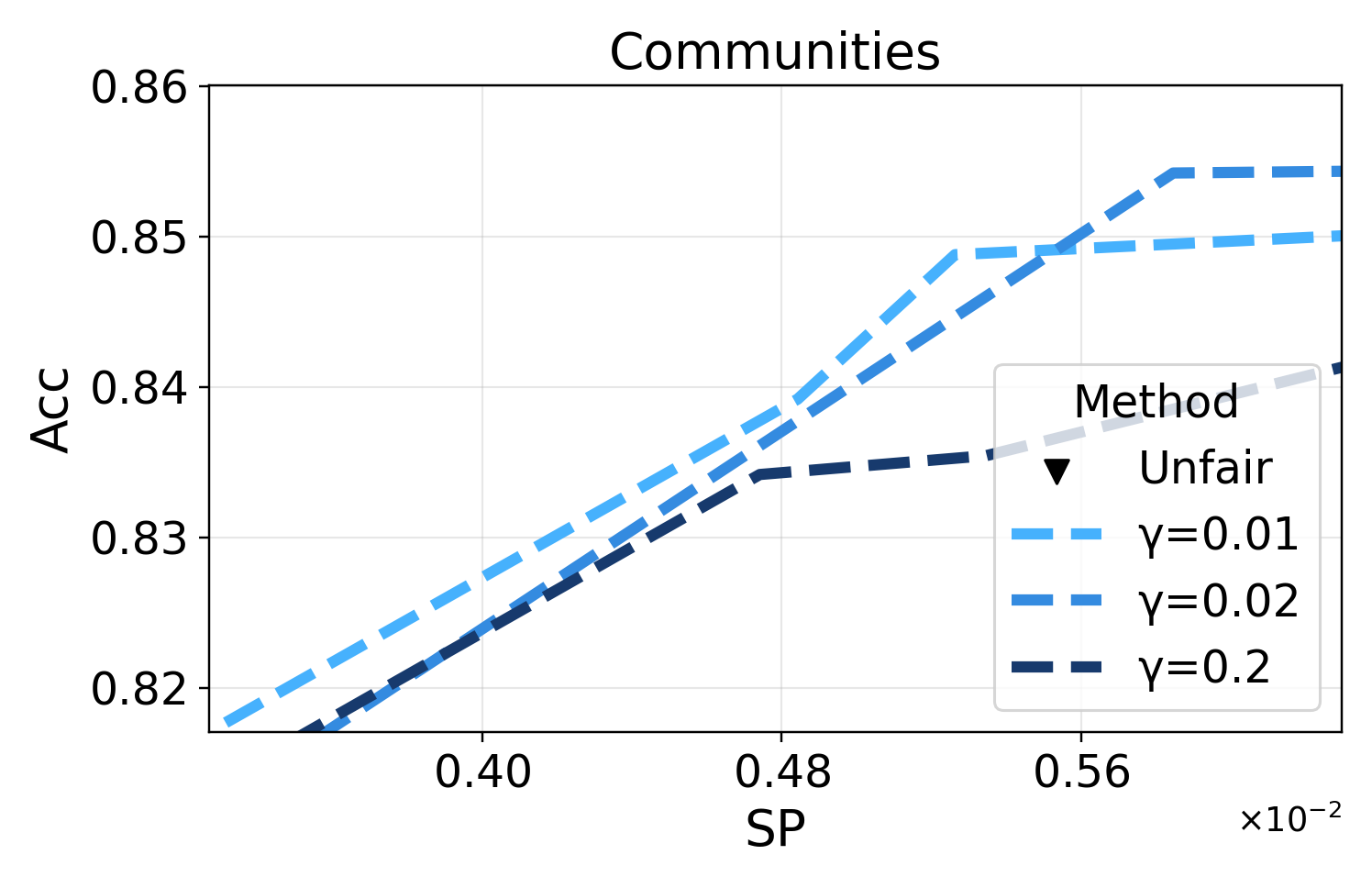}
    \\
    \includegraphics[width=0.24\linewidth]{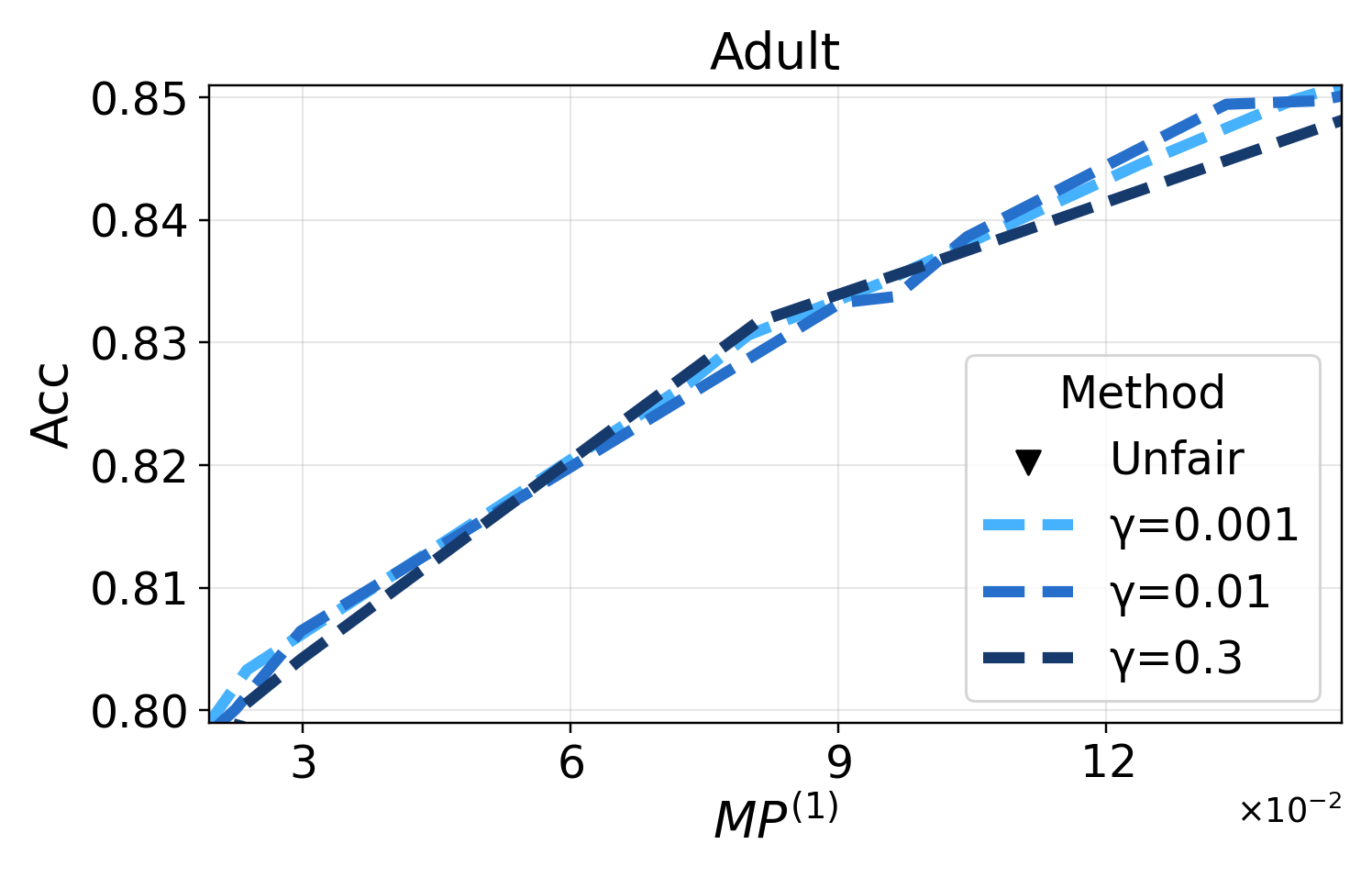}
    \includegraphics[width=0.24\linewidth]{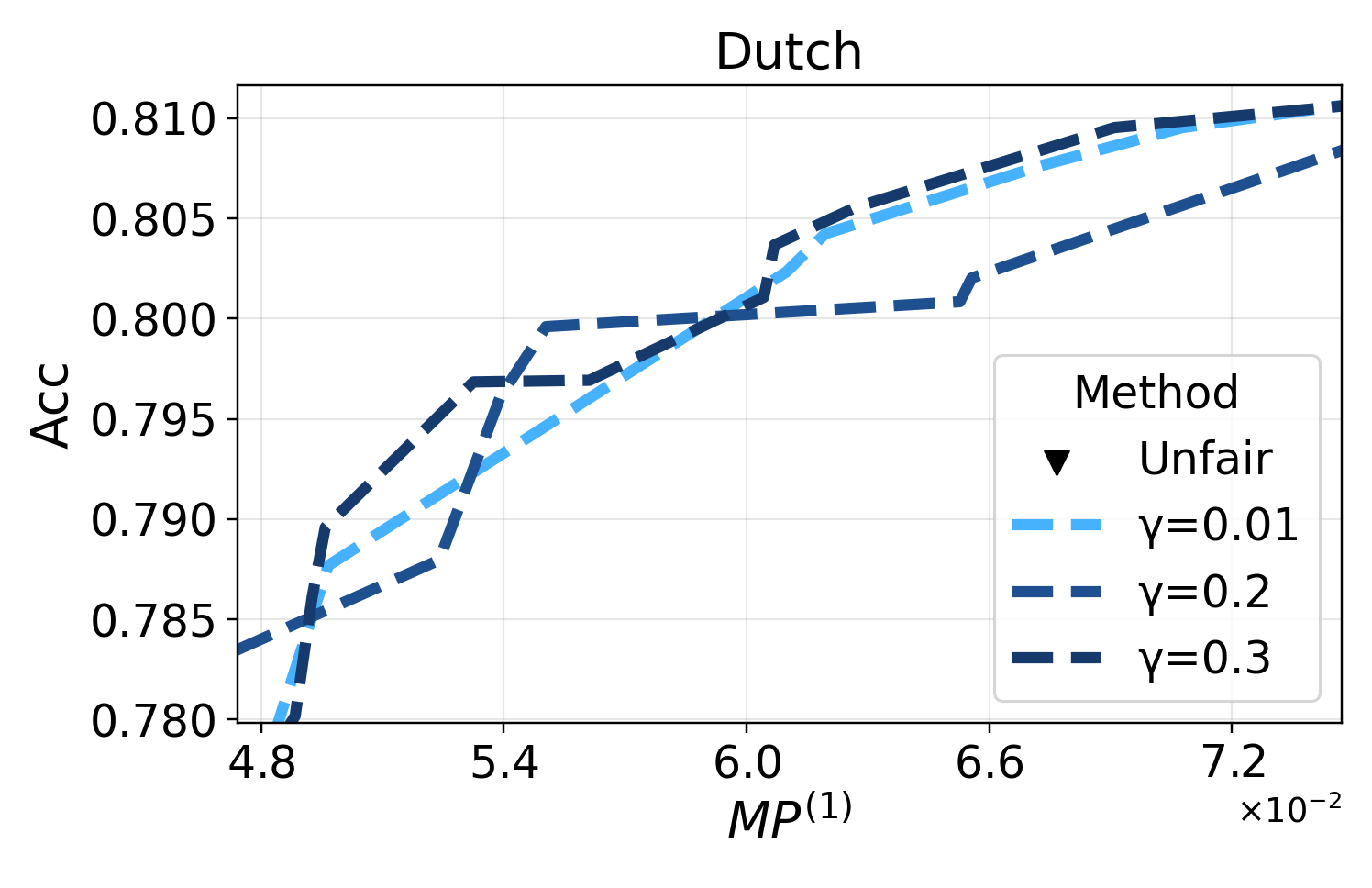}
    \includegraphics[width=0.24\linewidth]{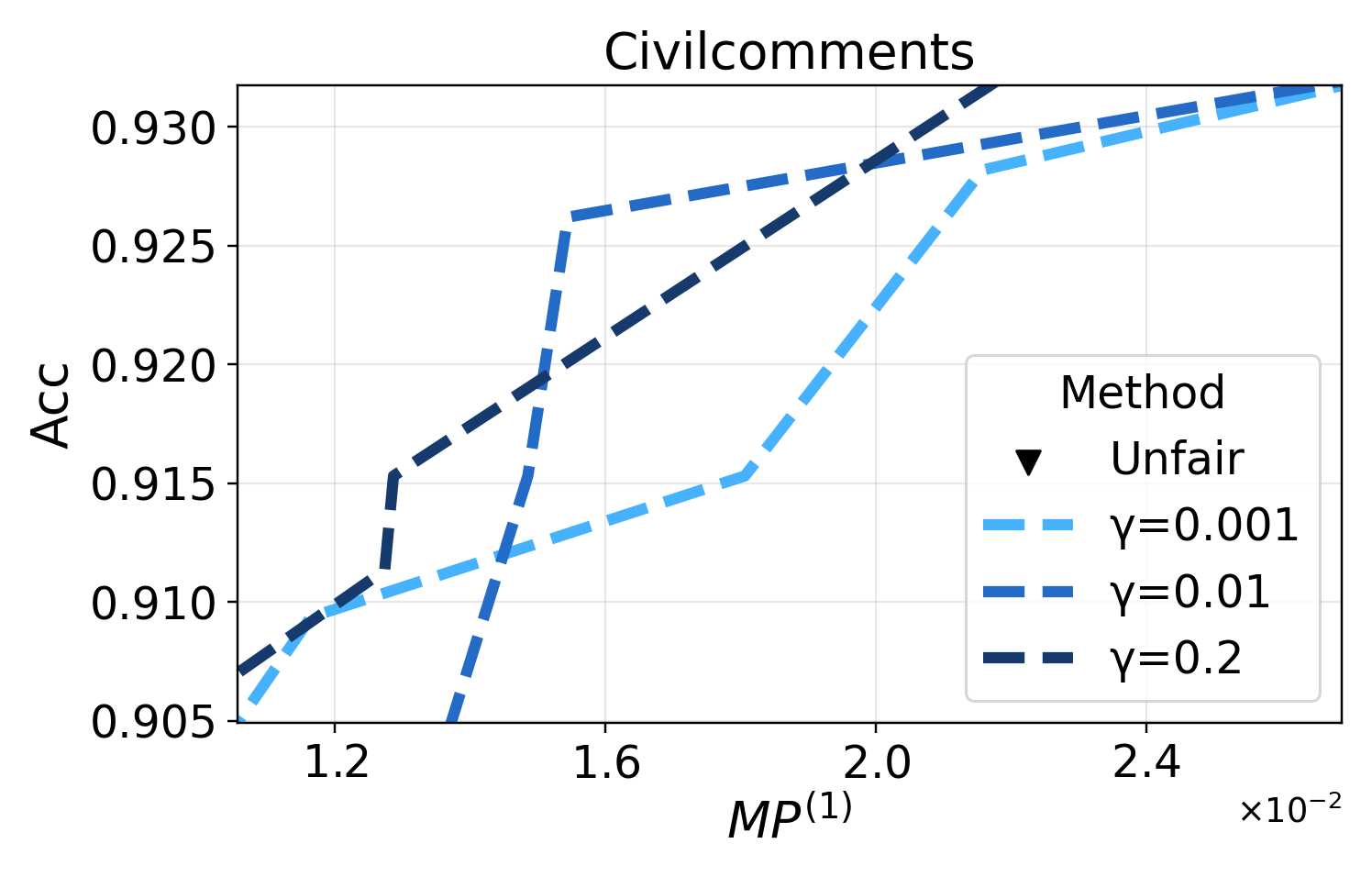}
    \includegraphics[width=0.24\linewidth]{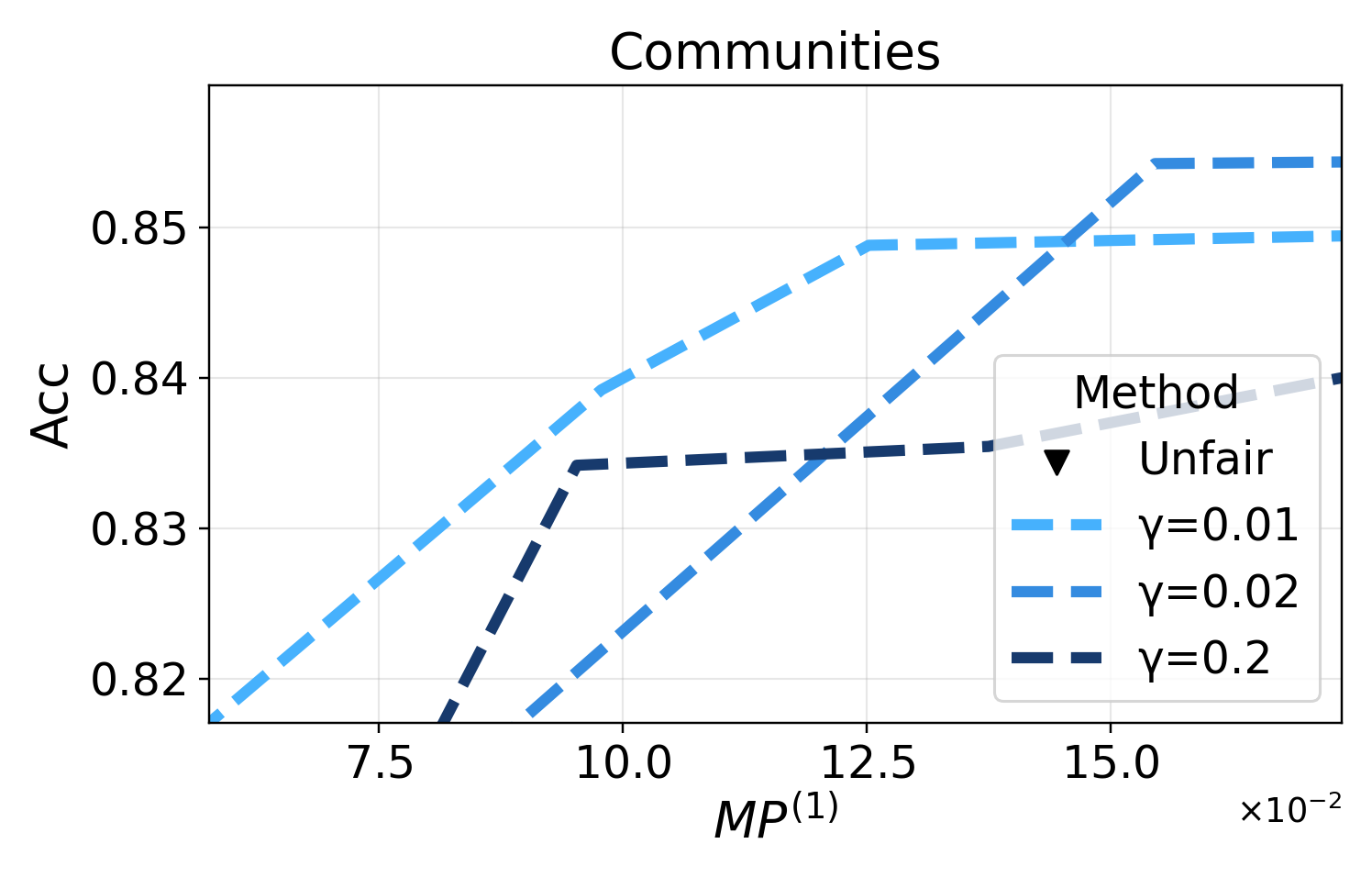}
    \caption{
    Impact of $\gamma$ for DRAF in terms of subgroup fairness SP (top) and first-order marginal fairness $\textup{MP}^{(1)}$ (bottom).
    }
    \label{fig:gamma}
\end{figure}

\cref{fig:gamma_others} provides similar results for (i) the distributional first-order marginal fairness WMP and (ii) the second-order marginal fairness $\textup{MP}^{(2)}.$
Similar to \cref{fig:gamma}, a too small $\gamma$ (e.g., 0.001) may lead to slightly worse first-order marginal fairness than a larger $\gamma$, while a too large $\gamma$ (e.g., 0.2 in \textsc{Communities} dataset) would harm the second-order marginal fairness.


\begin{figure}[h]
    \centering
    \includegraphics[width=0.24\linewidth]{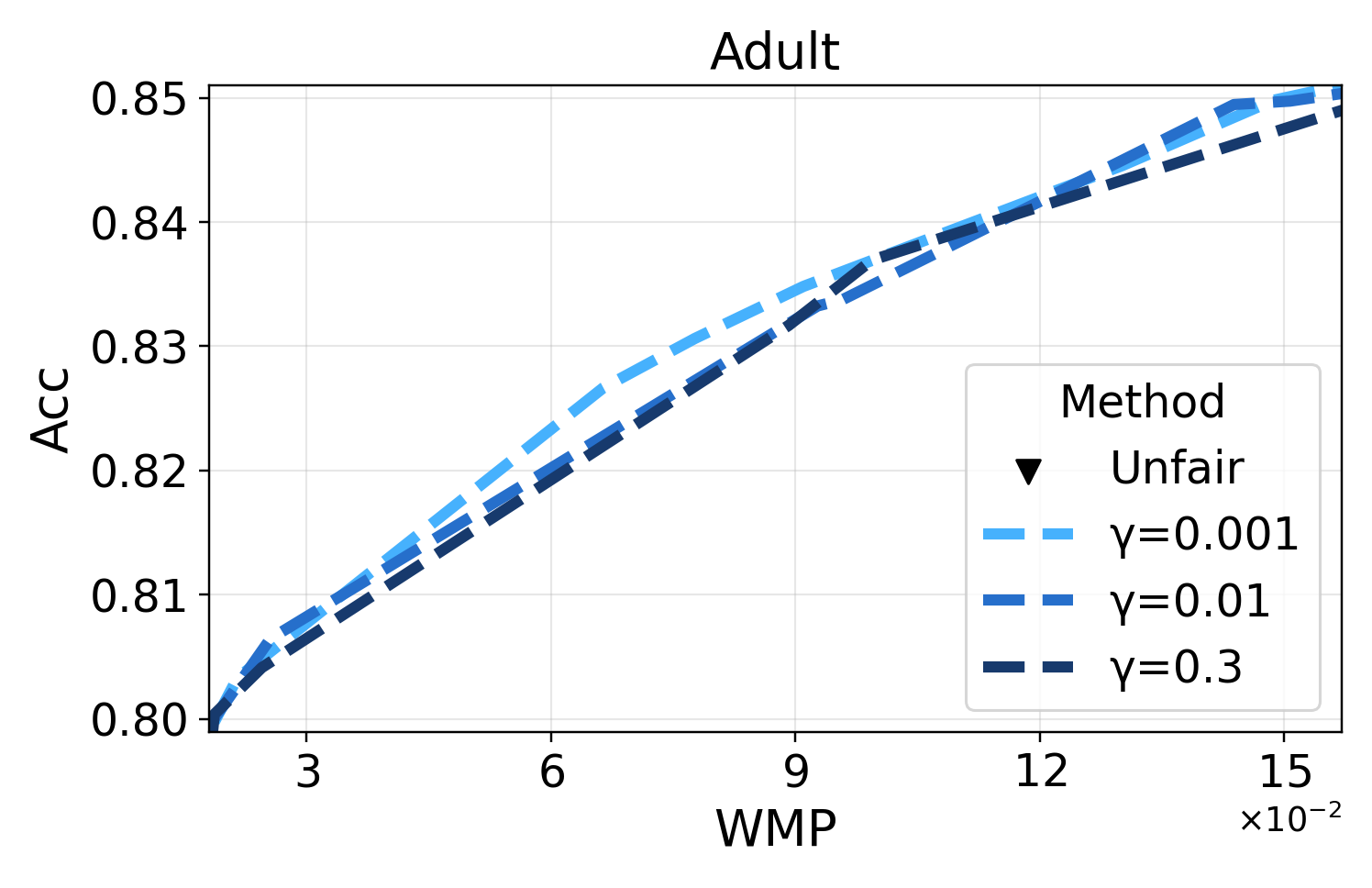}
    \includegraphics[width=0.24\linewidth]{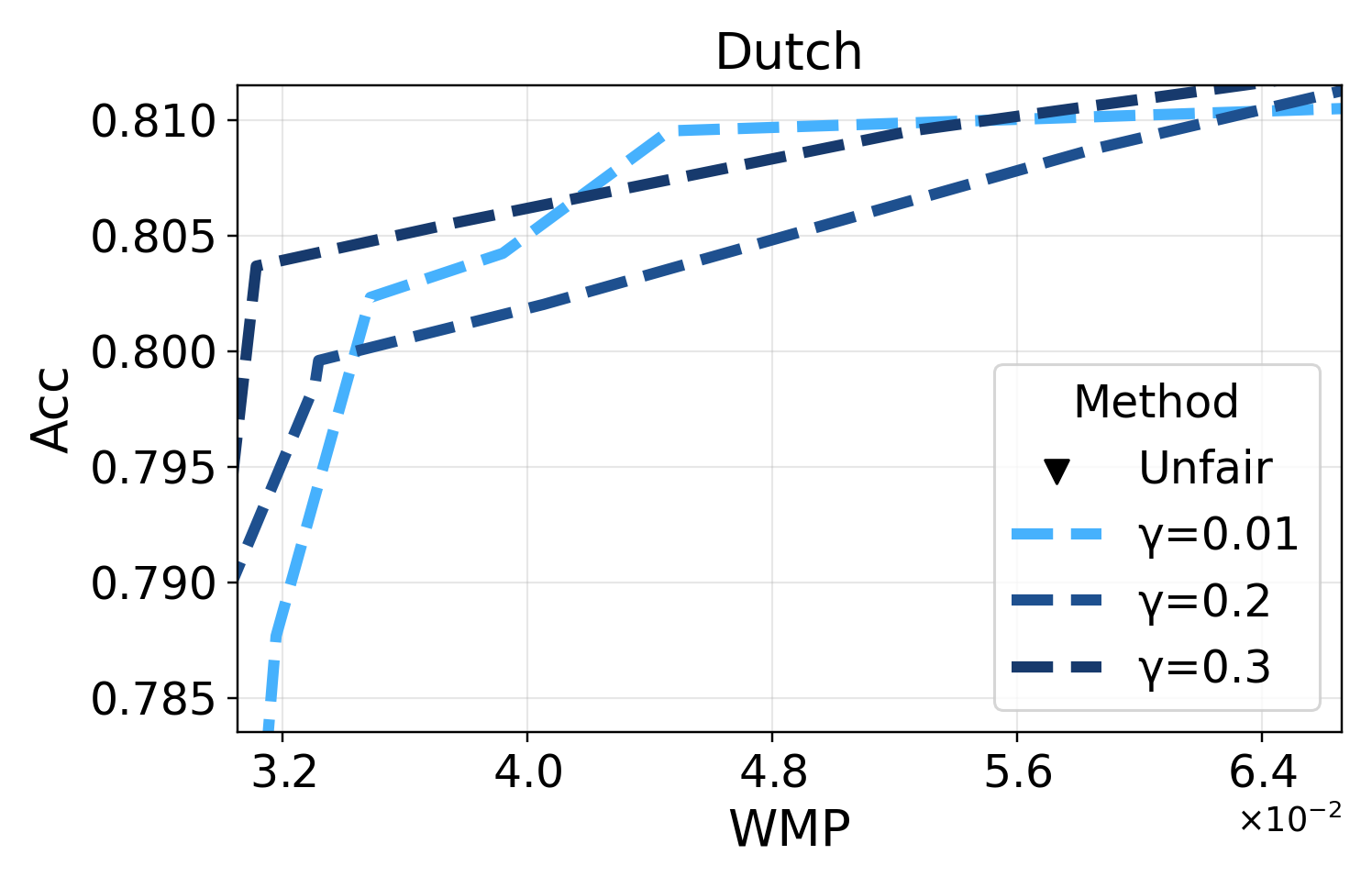}
    \includegraphics[width=0.24\linewidth]{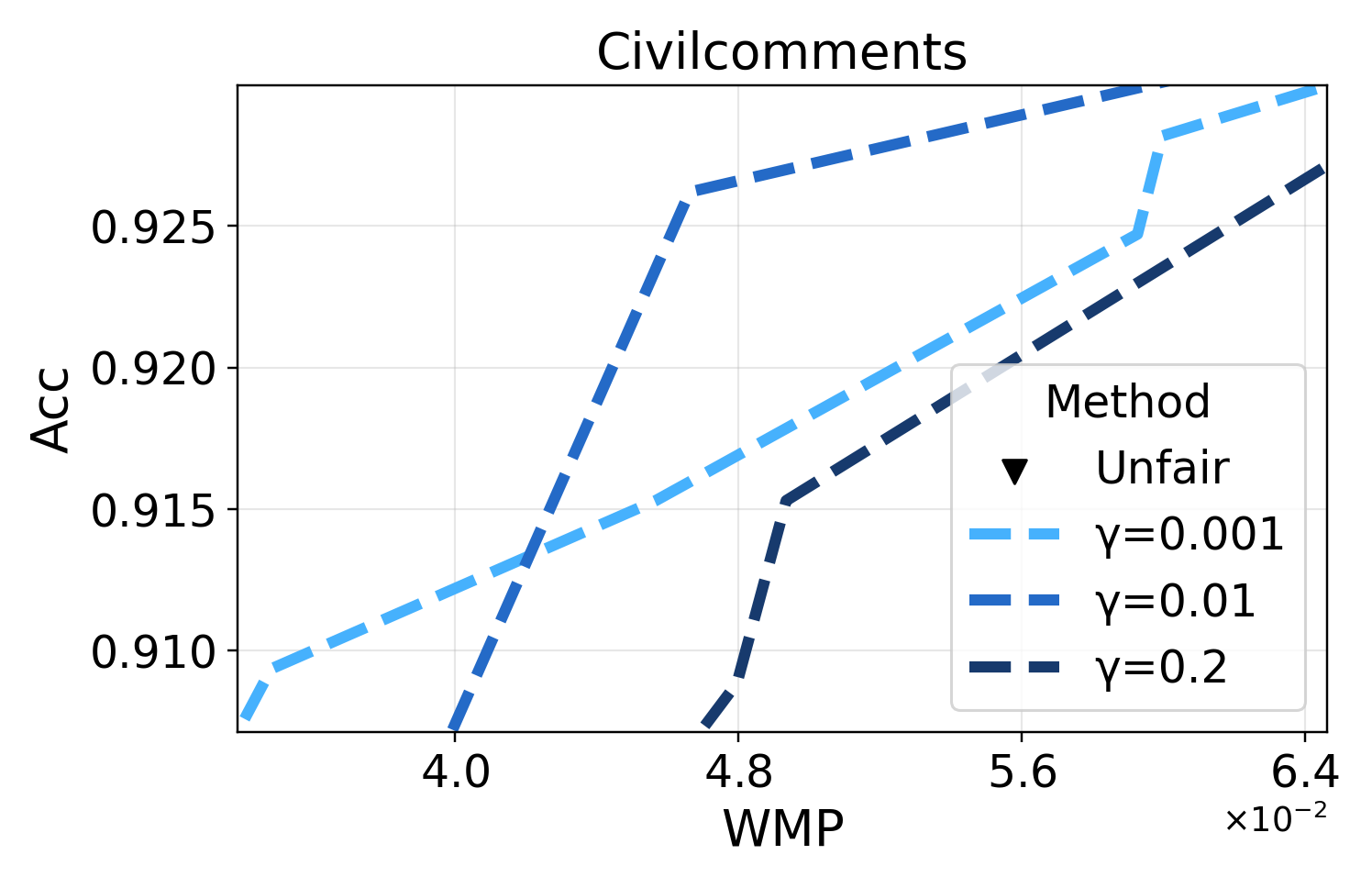}
    \includegraphics[width=0.24\linewidth]{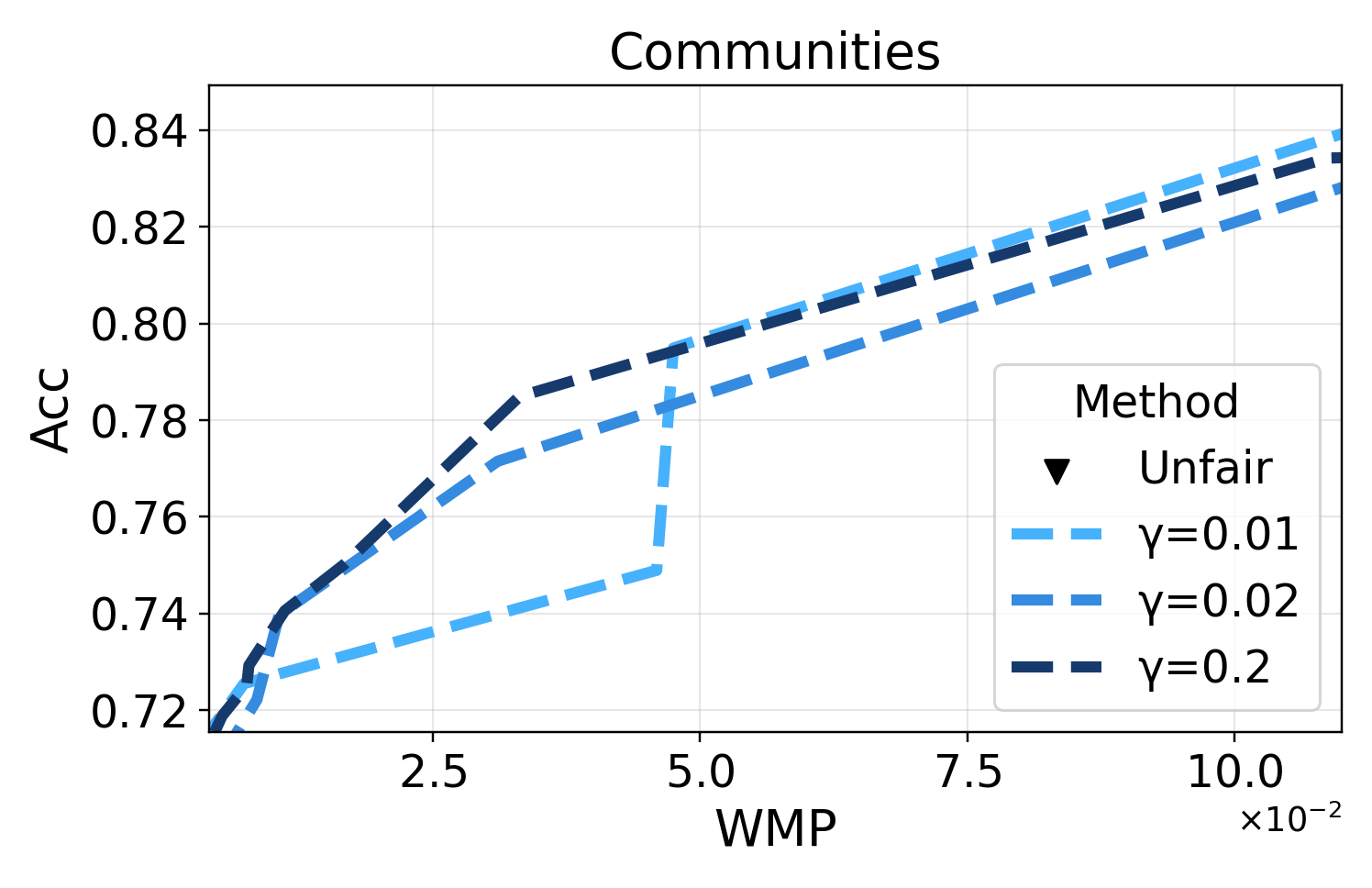}
    \\
    \includegraphics[width=0.24\linewidth]{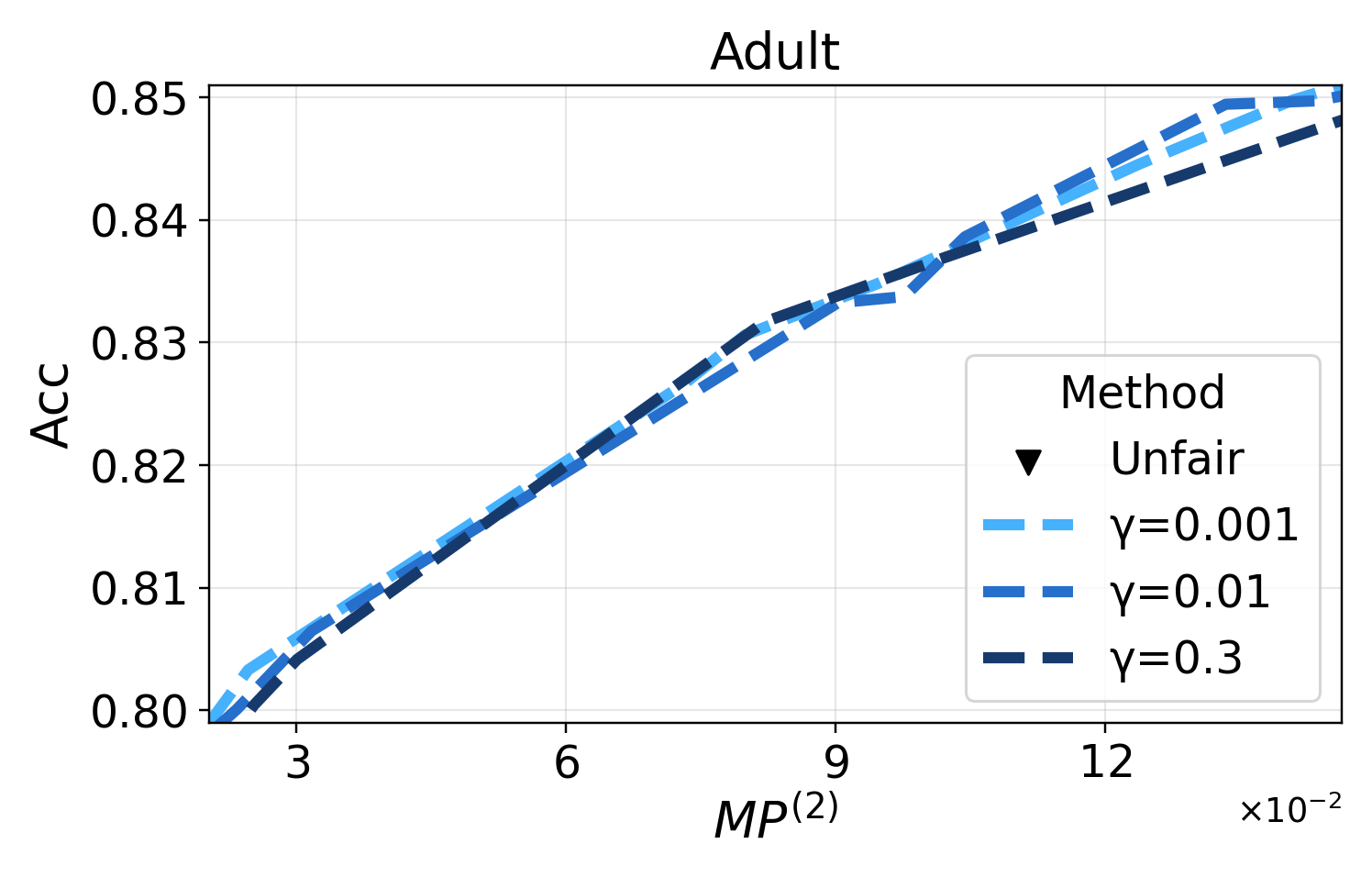}
    \includegraphics[width=0.24\linewidth]{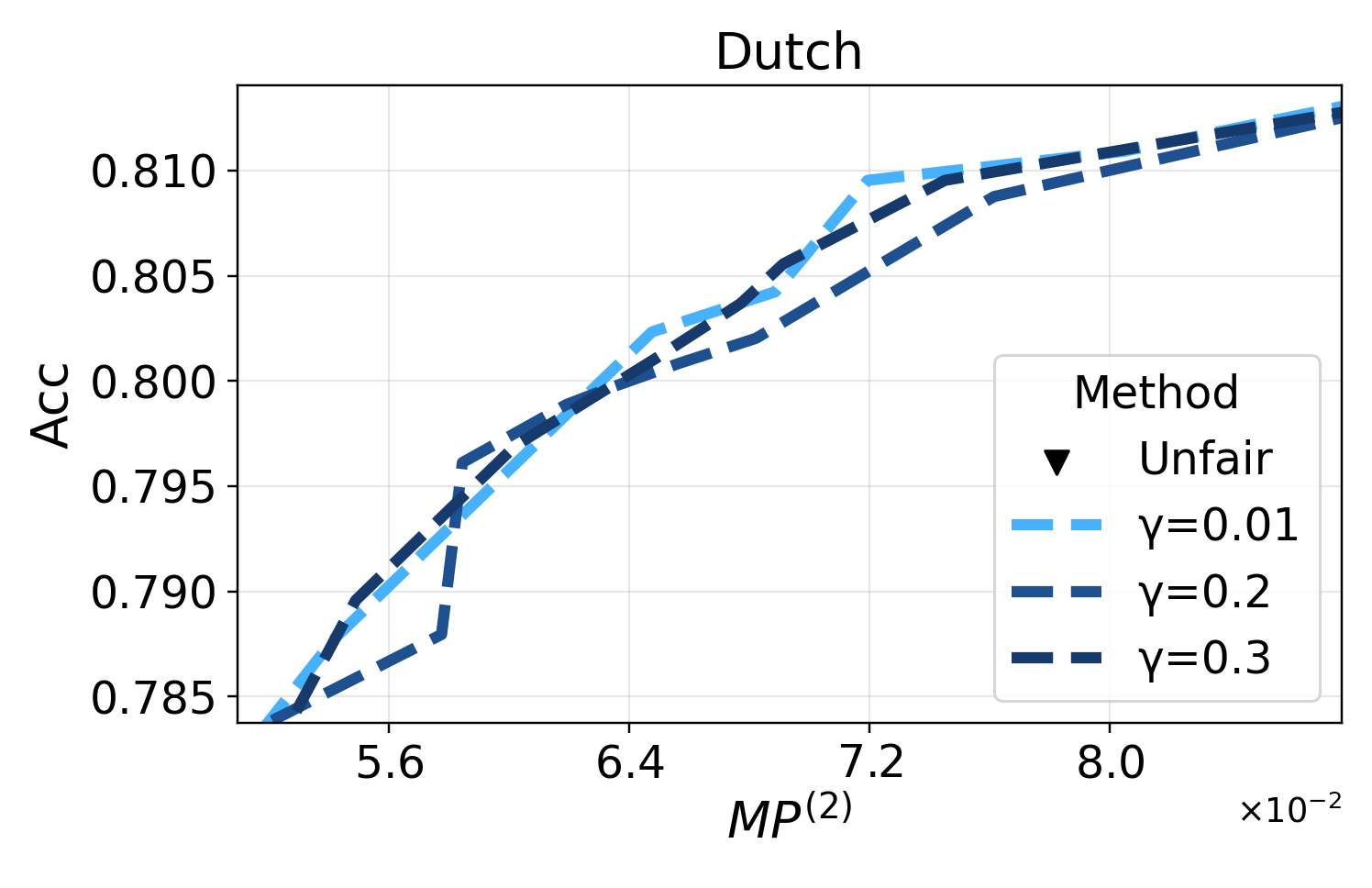}
    \includegraphics[width=0.24\linewidth]{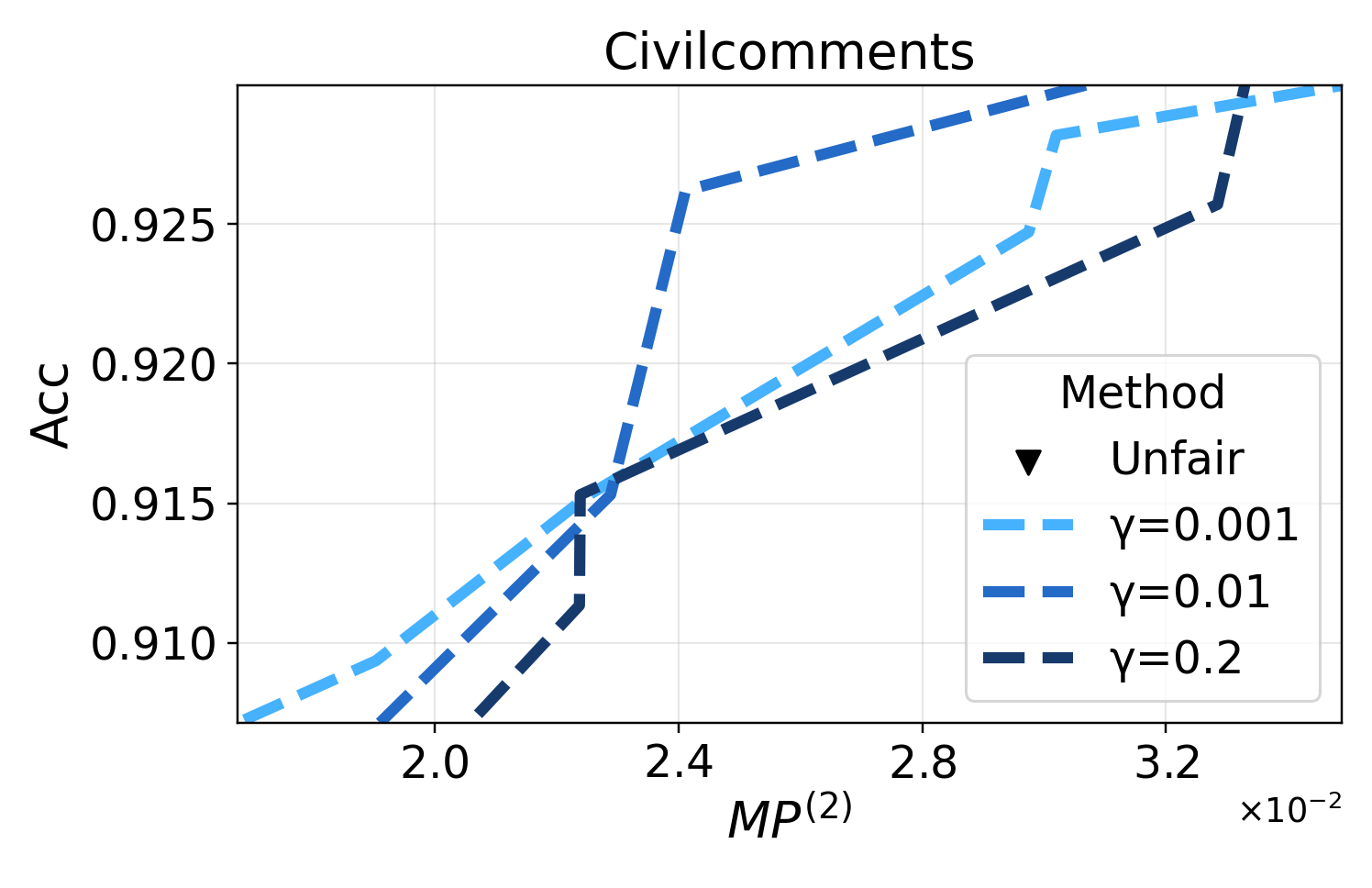}
    \includegraphics[width=0.24\linewidth]{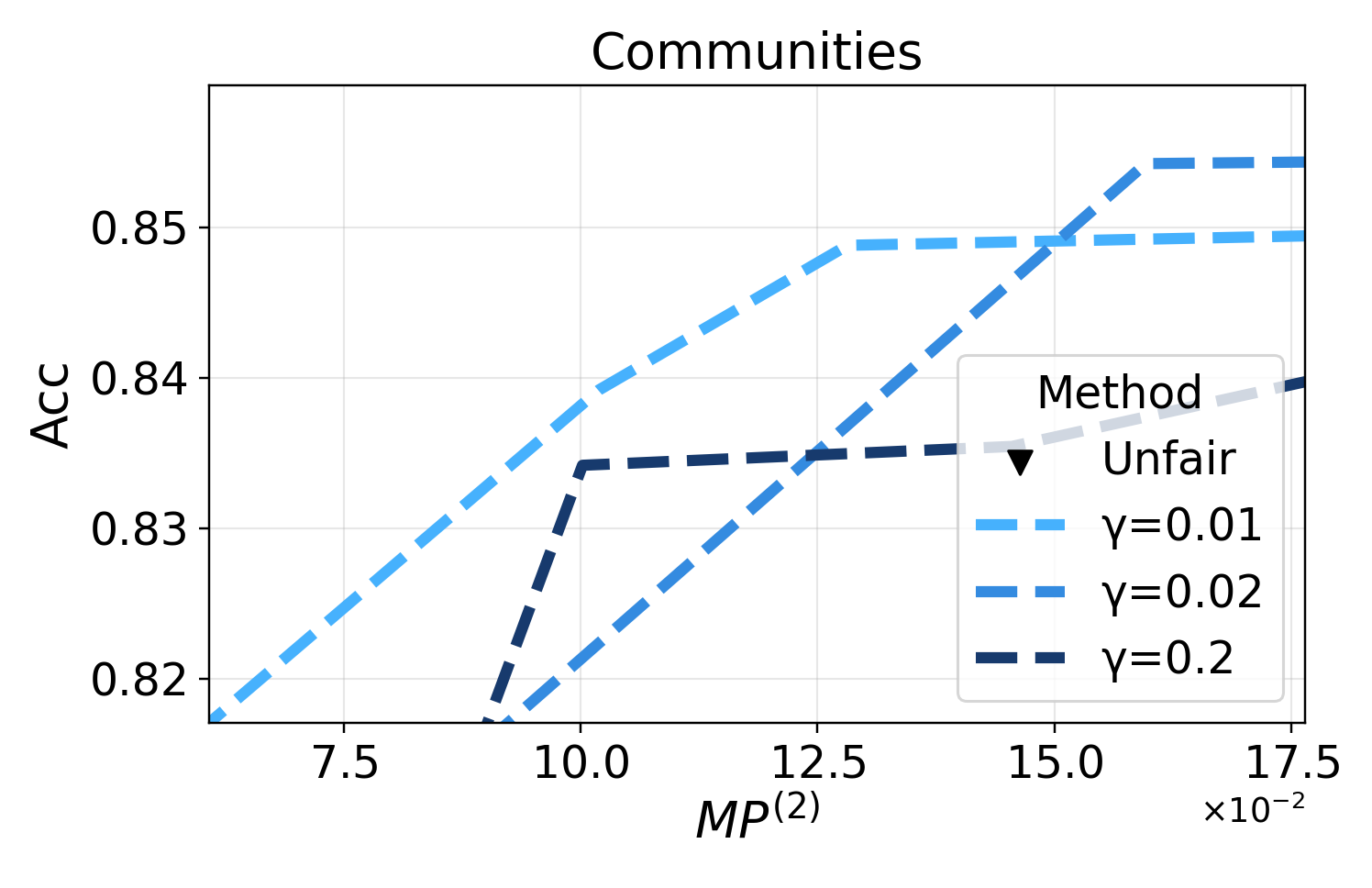}
    \caption{
    Impact of $\gamma$ for DRAF in terms of distributional first-order marginal fairness WMP (top) and second-order marginal fairness $\textup{MP}^{(2)}$ (bottom).
    }
    \label{fig:gamma_others}
\end{figure}

To analyze the effect of $\gamma$ in more detail, we categorize subgroups by size (large, medium, and small) using quantiles $0.3, 0.6$ and evaluate how their fairness on the test data changes as $\gamma$ varies.
This subgroup-wise analysis provides a clearer view of how $\gamma$ influences fairness across different group scales.
To this end, we train an unfair model (learning without any fairness constraint) and a fair model learned by DRAF for various values of $\gamma$.
For every subgroup, we compute the disparity of the unfair model and that of the fair model, and then take their difference (unfair model - fair model).
A positive difference means that the subgroup is treated more fairly by DRAF than by the unfair model, whereas a negative difference suggests the opposite.
We then summarize these differences using boxplots, grouping subgroups into small, medium, and large categories by size.

\begin{figure}[h]
    \centering
    \includegraphics[width=0.32\linewidth]{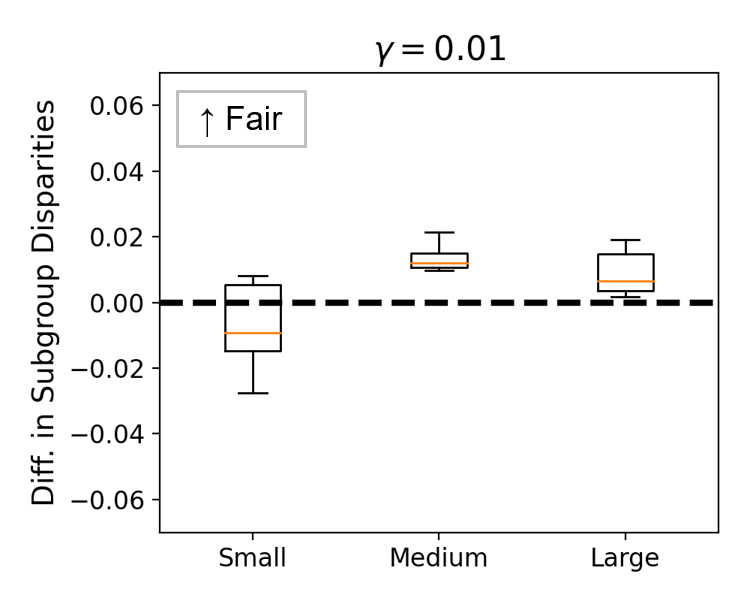}
    \includegraphics[width=0.32\linewidth]{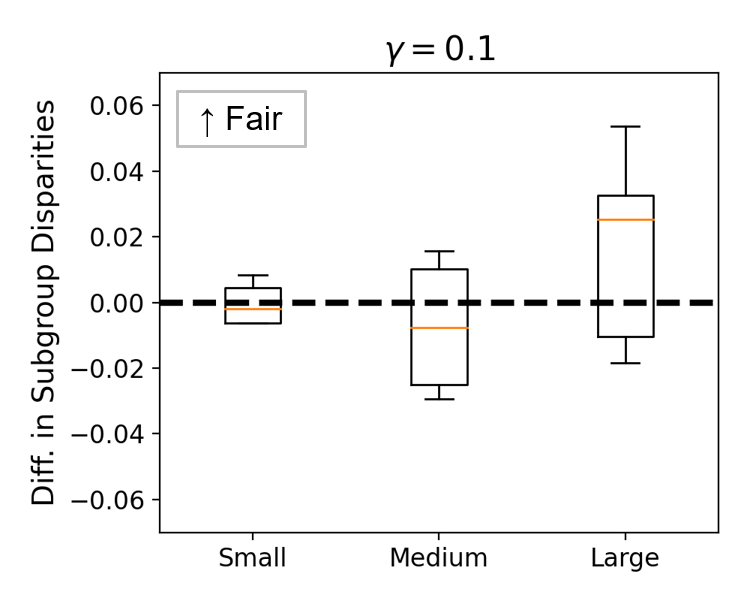}
    \includegraphics[width=0.32\linewidth]{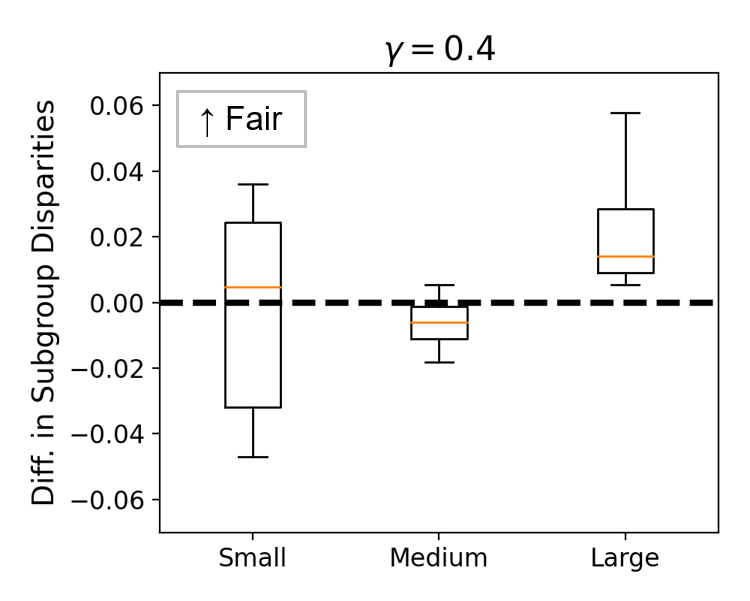}
    \caption{
    Comparison of differences in subgroup disparities (Diff. in Subgroup Disparities) between the unfair model and the fair model trained by DRAF, categorized by subgroup size (small, medium, and large).
    }
    \label{fig:gamma_fine_analysis}
\end{figure}

The results in \cref{fig:gamma_fine_analysis} show that:
(i) for large subgroups, the differences are consistently positive across all values of $\gamma$, indicating that DRAF reliably improves fairness for these subgroups;
(ii) for medium-sized subgroups, the differences tend to be close to zero or negative when $\gamma$ is large (e.g., $\gamma = 0.4$), but become positive as $\gamma$ decreases, suggesting that a too large $\gamma$ is undesirable for these groups;
(iii) in contrast, for small subgroups, the differences are concentrated around zero or negative with large variations for all $\gamma$, indicating that we cannot reliably regard these tiny subgroups as being treated fairly.
This empirical result supports our theoretical finding in \cref{thm:main-1} that enforcing fairness on very small subgroups in the training data does not guarantee fairness for those subgroups on the test data.



\paragraph{Choice of $\mathcal{G}$}

In this ablation study, we compare sIPM, RIPM, and HIPM for $\textup{IPM}_{\mathcal{G}},$ in terms of the trade-off performance.
See \cref{fig:main_trade_offs_ipm_abl} for the results on the four datasets.
The key findings are:
(i) sIPM (our default in the main analysis) performs best in most cases, though RIPM slightly outperforms sIPM on \textsc{Communities};
(ii) RIPM performs similarly to sIPM overall except for \textsc{Adult} dataset;
(iii) HIPM underperforms both sIPM and RIPM in most cases.

Accordingly, we recommend using the more stable IPMs such as sIPM and RIPM rather than HIPM, whose more complex discriminator architecture often leads to less stable training and suboptimal models.

\begin{figure}[h]
    \centering
    \begin{subfigure}{0.24\linewidth}
        \includegraphics[width=\linewidth]{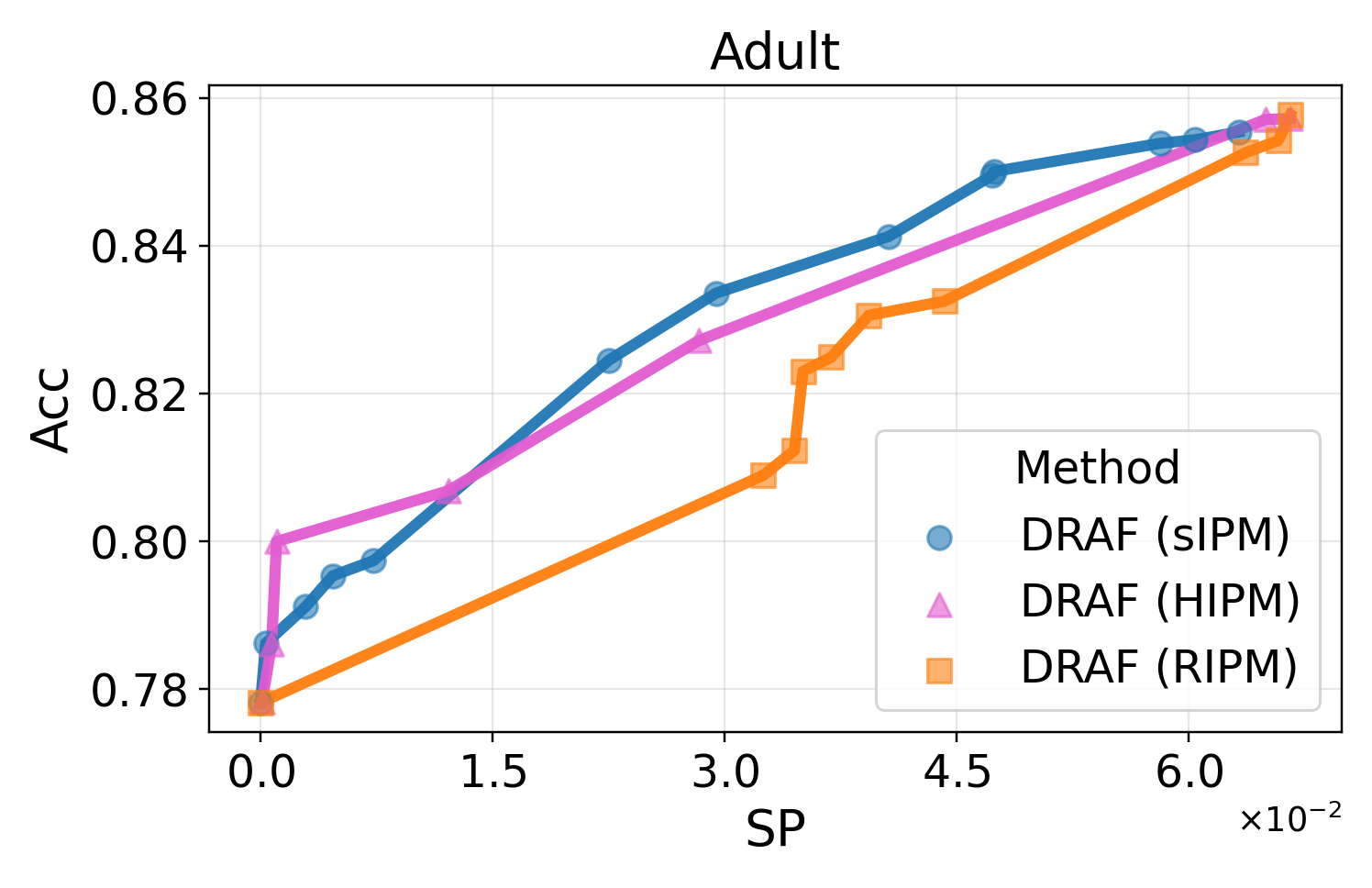}
        \caption{\textsc{Adult}}
    \end{subfigure}
    \begin{subfigure}{0.24\linewidth}
        \includegraphics[width=\linewidth]{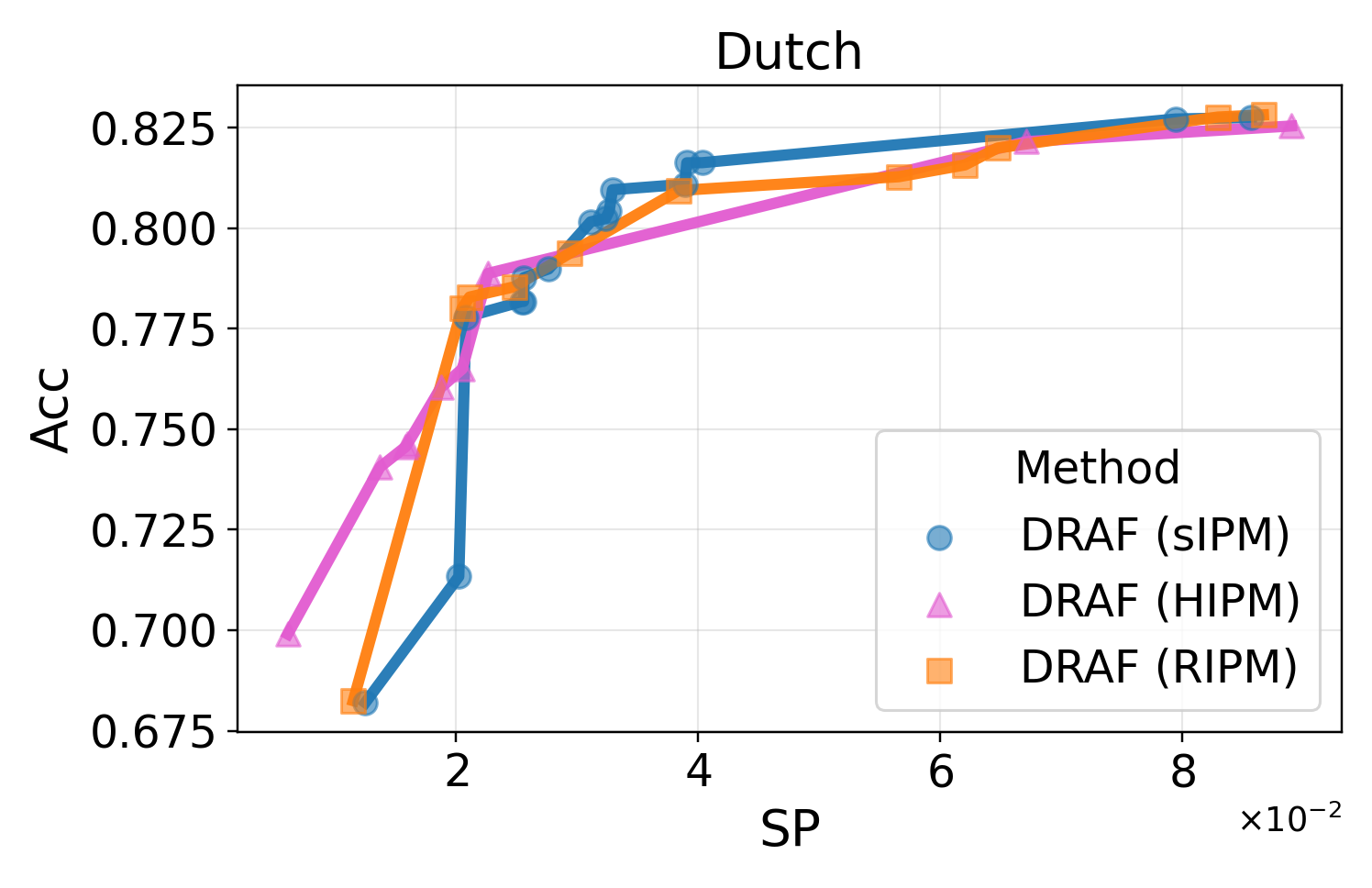}
        \caption{\textsc{Dutch}}
    \end{subfigure}
    \begin{subfigure}{0.24\linewidth}
        \includegraphics[width=\linewidth]{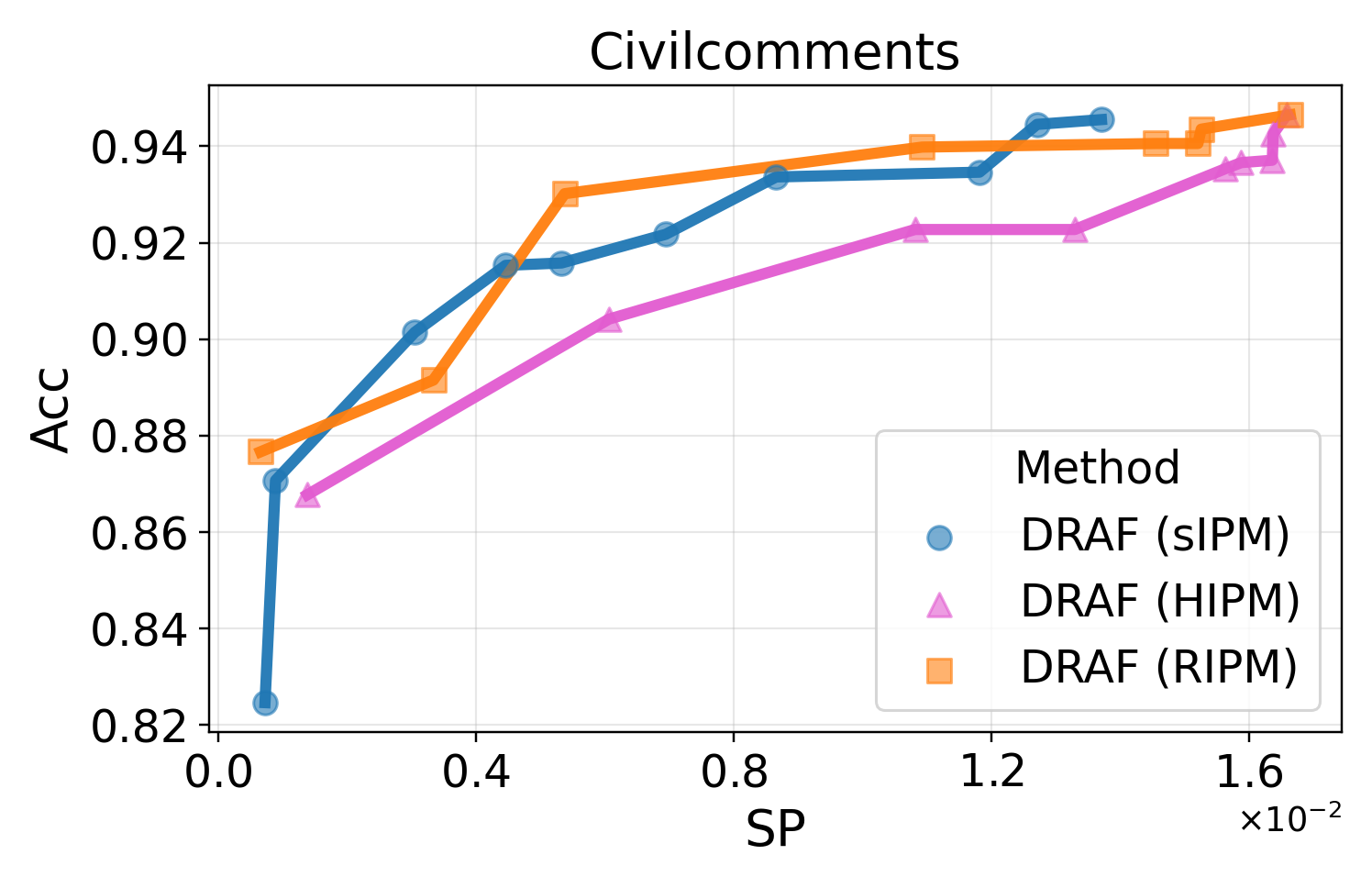}
        \caption{\textsc{CivilComments}}
    \end{subfigure}
    \begin{subfigure}{0.24\linewidth}
        \includegraphics[width=\linewidth]{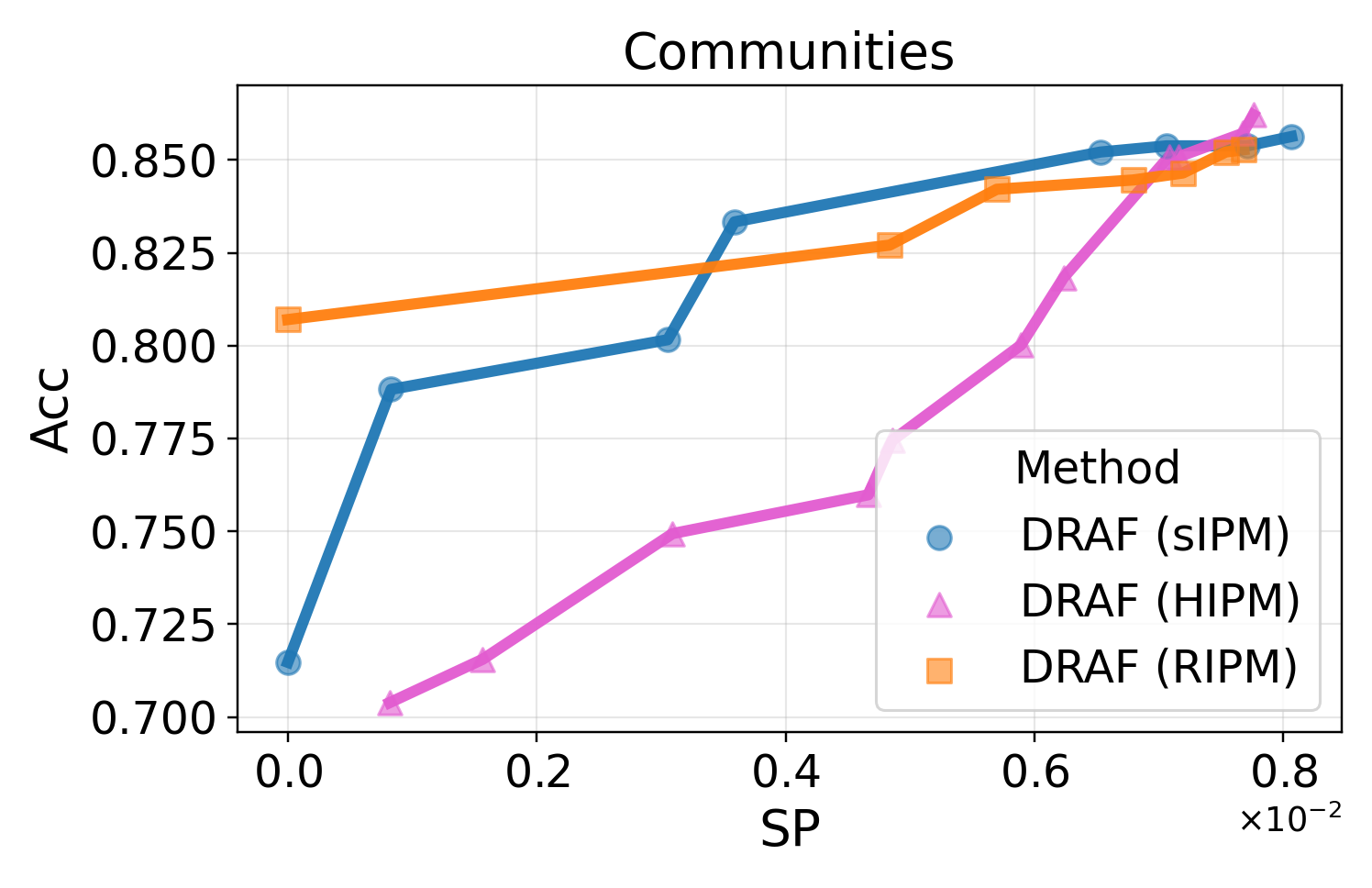}
        \caption{\textsc{Communities}}
    \end{subfigure}
    \\
    \begin{subfigure}{0.24\linewidth}
        \includegraphics[width=\linewidth]{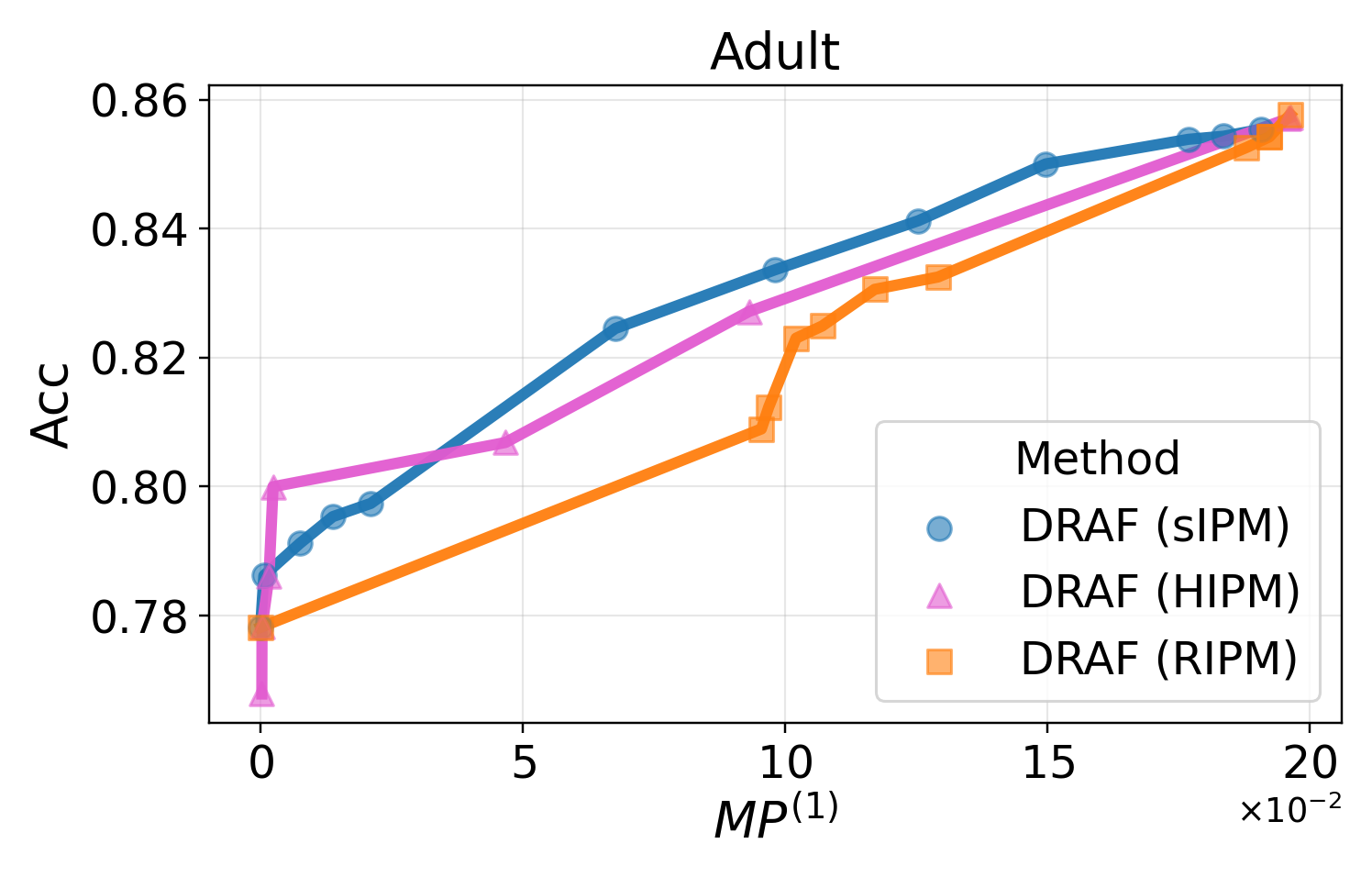}
        \caption{\textsc{Adult}}
    \end{subfigure}
    \begin{subfigure}{0.24\linewidth}
        \includegraphics[width=\linewidth]{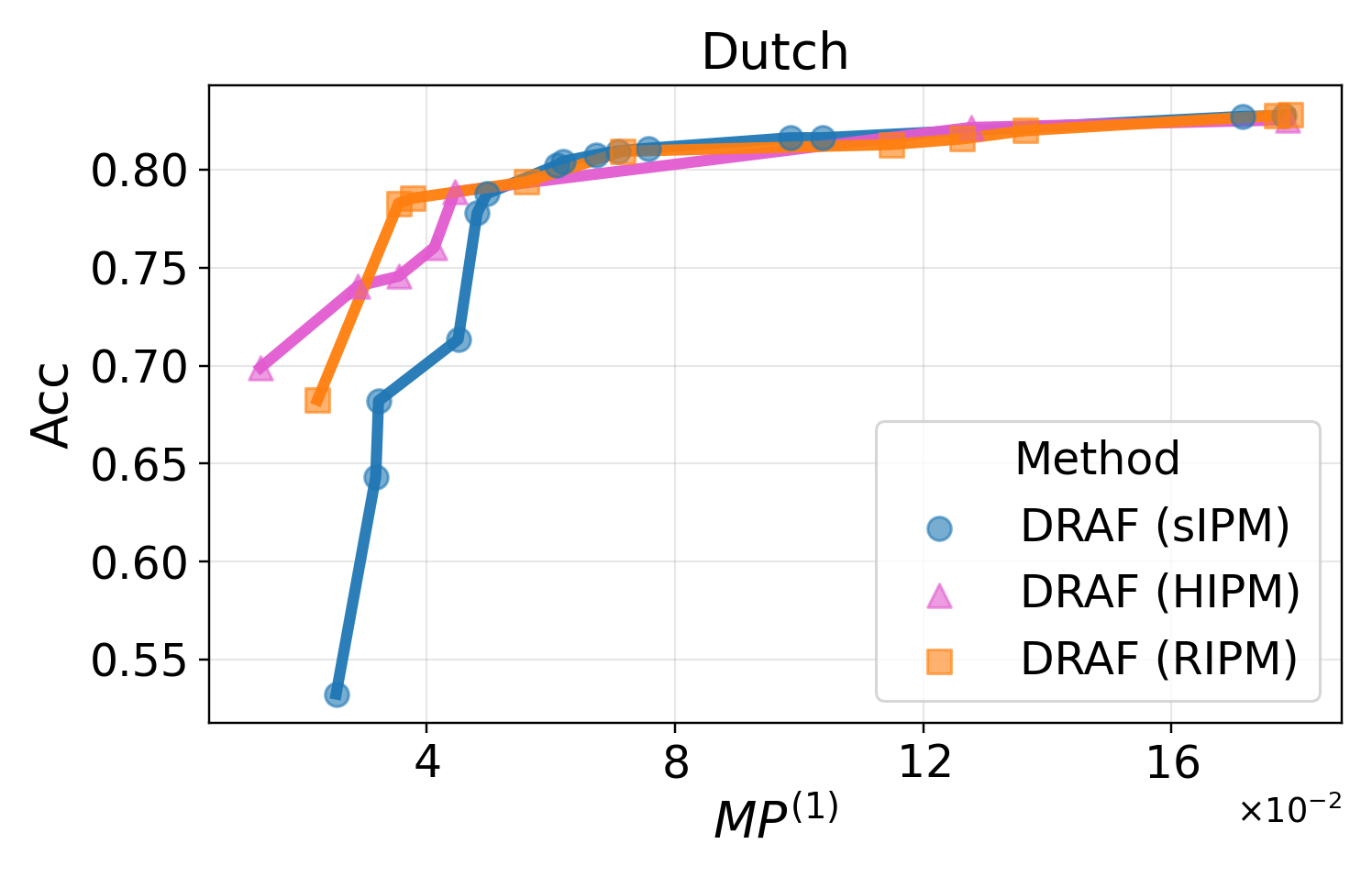}
        \caption{\textsc{Dutch}}
    \end{subfigure}
    \begin{subfigure}{0.24\linewidth}
        \includegraphics[width=\linewidth]{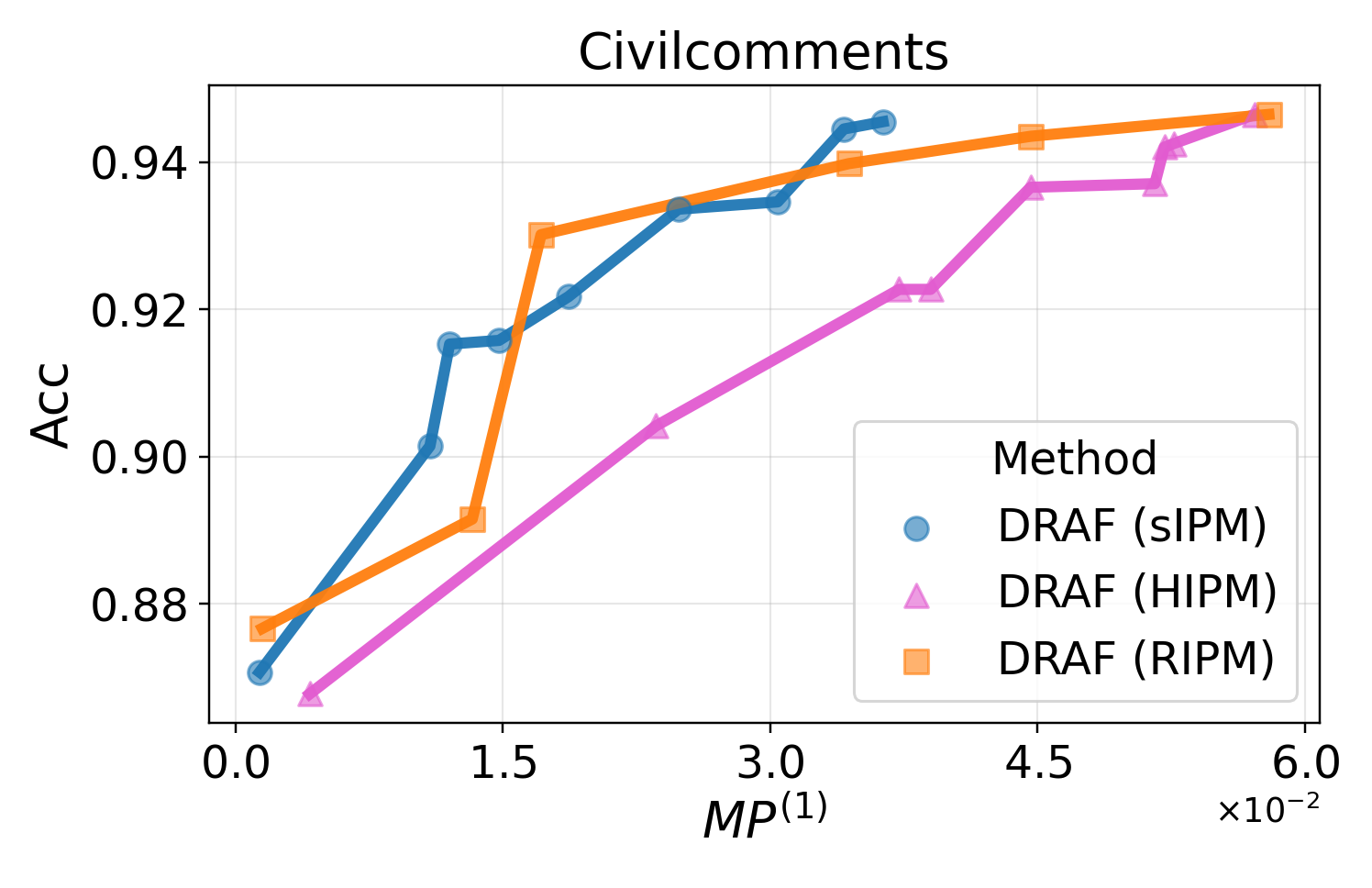}
        \caption{\textsc{CivilComments}}
    \end{subfigure}
    \begin{subfigure}{0.24\linewidth}
        \includegraphics[width=\linewidth]{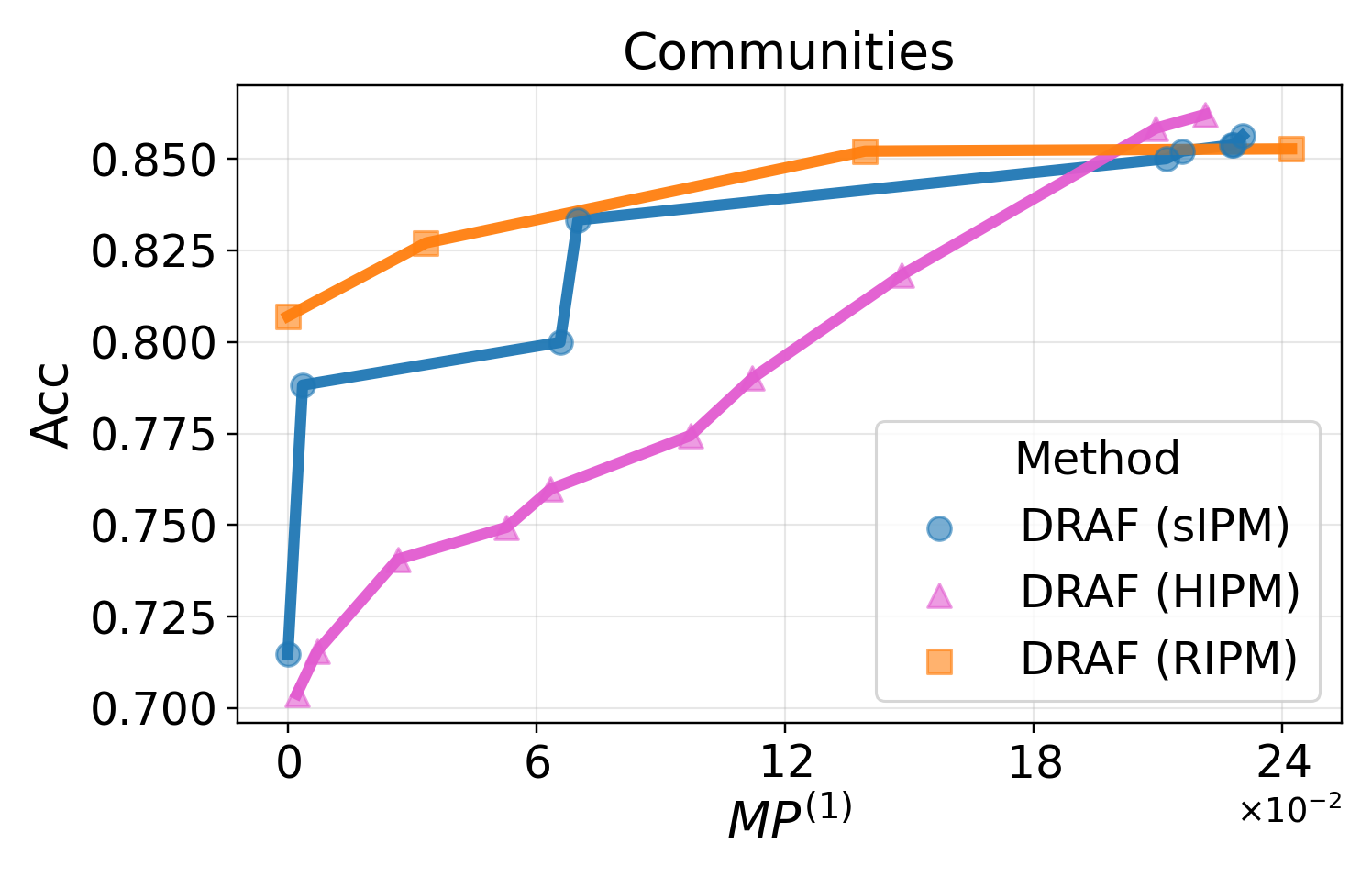}
        \caption{\textsc{Communities}}
    \end{subfigure}
    \caption{
    Trade-off between fairness level and accuracy with three IPMs (sIPM, RIPM, and HIPM) for $\mathcal{G}.$
    (Top, Bottom) = $\textup{SP}$ vs. \texttt{Acc}, $\textup{MP}^{(1)}$ vs. \texttt{Acc}.
    }
    \label{fig:main_trade_offs_ipm_abl}
\end{figure}

\clearpage

\paragraph{Robustness under noisy sensitive attributes}

To investigate the robustness of DRAF under the noisy sensitive attribute, we construct a controlled experiment on the \textsc{Adult} dataset.
We randomly introduce missing values into the sensitive attribute at rate $0.01$.
For the baseline GF, any column with at least one missing sensitive attribute is discarded, since GF requires complete subgroup to define its fair constraint.
In contrast, DRAF can still be applied partially to samples with missing values:
for samples with missing sensitive attributes, we impose fairness constraints only on subgroup-subsets that can be formed using the observed attributes, and ignore subgroup-subsets that involve any missing attributes.

For a sample with sensitive attribute vector 
$
s_i=(s_{i1},\ldots,s_{iq})\in\{0,1,\textup{NA}\}^q,
$
the indicator for a subgroup–subset $W \in \mathcal{W}$ is
\[
c_{i,W}=
\begin{cases}
1, & \text{if }  s_i \in W\ \text{and}\ s_{ij}\neq\textup{NA},\\
-1, & \text{if } s_i \notin W\ \text{and}\ s_{ij}\neq\textup{NA},\\
0, & \text{otherwise}.
\end{cases}
\]
Thus the full vector $c$ is
\[
c_i=\bigl(c_{i,W}\bigr)_{W\in\mathcal{W}},
\]
where missing attributes selectively zero out only the corresponding components of $c$, and all subgroup-subsets built from observed sensitive attributes remain active in the $c$-vector.
The results are shown in \cref{fig:noisy_s}, suggesting that this partial DRAF approach is more robust than the baseline: it achieves a better fairness-accuracy trade-off than GF, suggesting the robustness of DRAF in presence of noisy (missing) sensitive attributes.

\begin{figure}[h]
    \centering
    \includegraphics[width=0.45\linewidth]{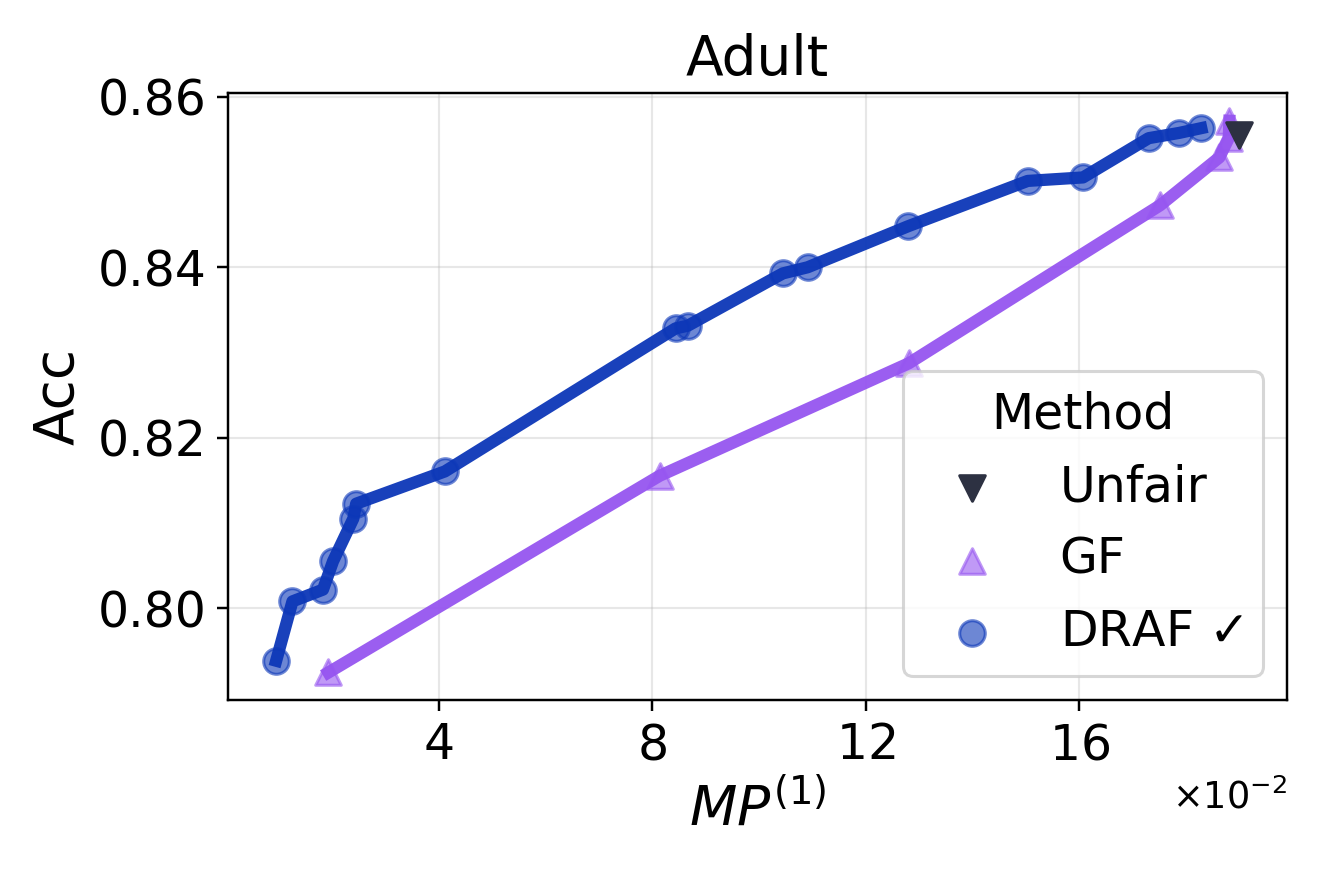}
    \includegraphics[width=0.45\linewidth]{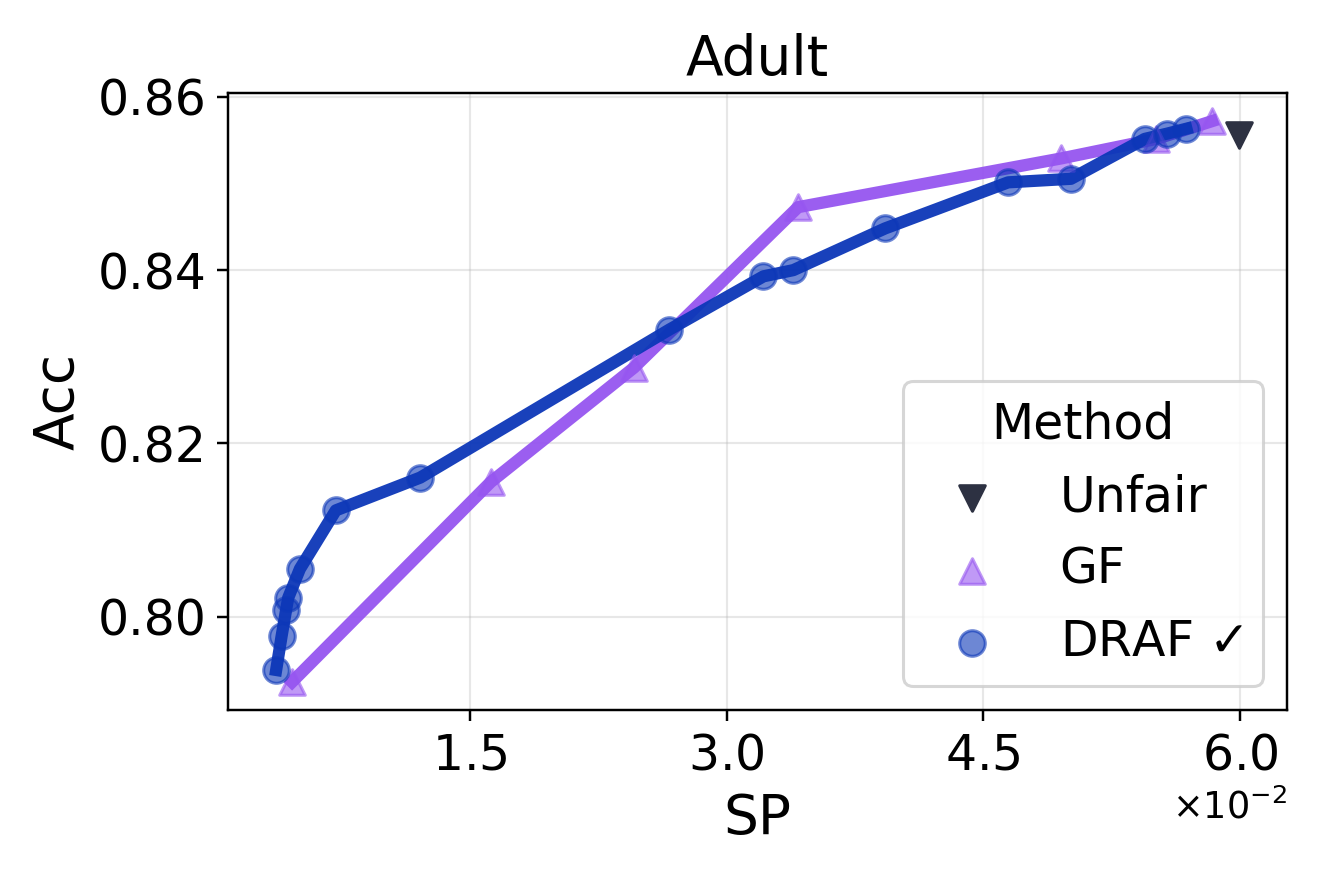}
    \caption{
    Trade-off between fairness level and accuracy when 1\% of the sensitive attributes `Married' in the \textsc{Adult} dataset are randomly set to missing.
    (Left, Right) = $\textup{MP}^{(1)}$ vs. \texttt{Acc}, $\textup{SP}$ vs. \texttt{Acc}.
    }
    \label{fig:noisy_s}
\end{figure}

%




\clearpage
\subsection{Discussion on the relationship between marginal and subgroup fairness}\label{eg:appen-sub_not_marg}

Achieving marginal fairness alone does not guarantee subgroup fairness: even if each marginal group is treated fairly, certain intersections or unions of sensitive groups may still suffer from unfairness (fairness gerrymandering; \citep{kearns2018preventing}).
Conversely, even if most subgroups are fairly treated, marginal fairness can still be violated when some groups are very sparse so that the fairness on the test data is not guaranteed (\cref{thm:main-1}).

This observation motivates our choice to simultaneously monitor both marginal fairness and subgroup fairness.
Furthermore, we believe that, from an ethical and policy perspective, a situation where most subgroups are fairly treated but a marginal group (e.g., `all women') would be difficult to justify socially.

\cref{fig:hierarchy} presents the overall hierarchy of subgroup-subsets, including marginal groups and subgroups, and highlights the subgroup-subsets that DRAF focuses on.
We additionally illustrate that even if all subgroups are fairly treated, marginal fairness can still be violated when some subgroups are sparse.


\begin{figure}[h]
    
    \centering
    \includegraphics[width=0.95\linewidth]{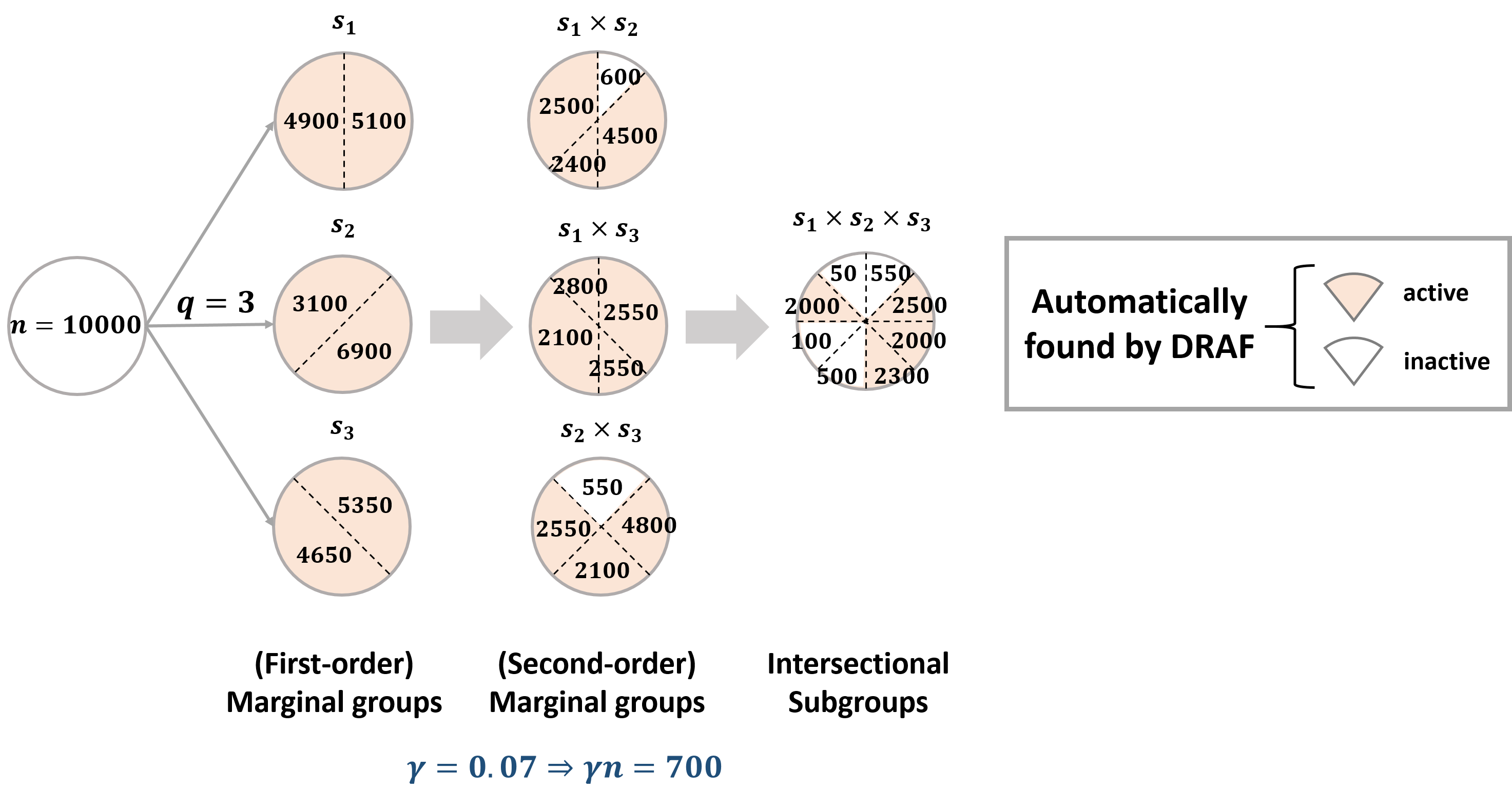}
    \\
    \hrulefill
    \\
    \includegraphics[width=0.6\linewidth]{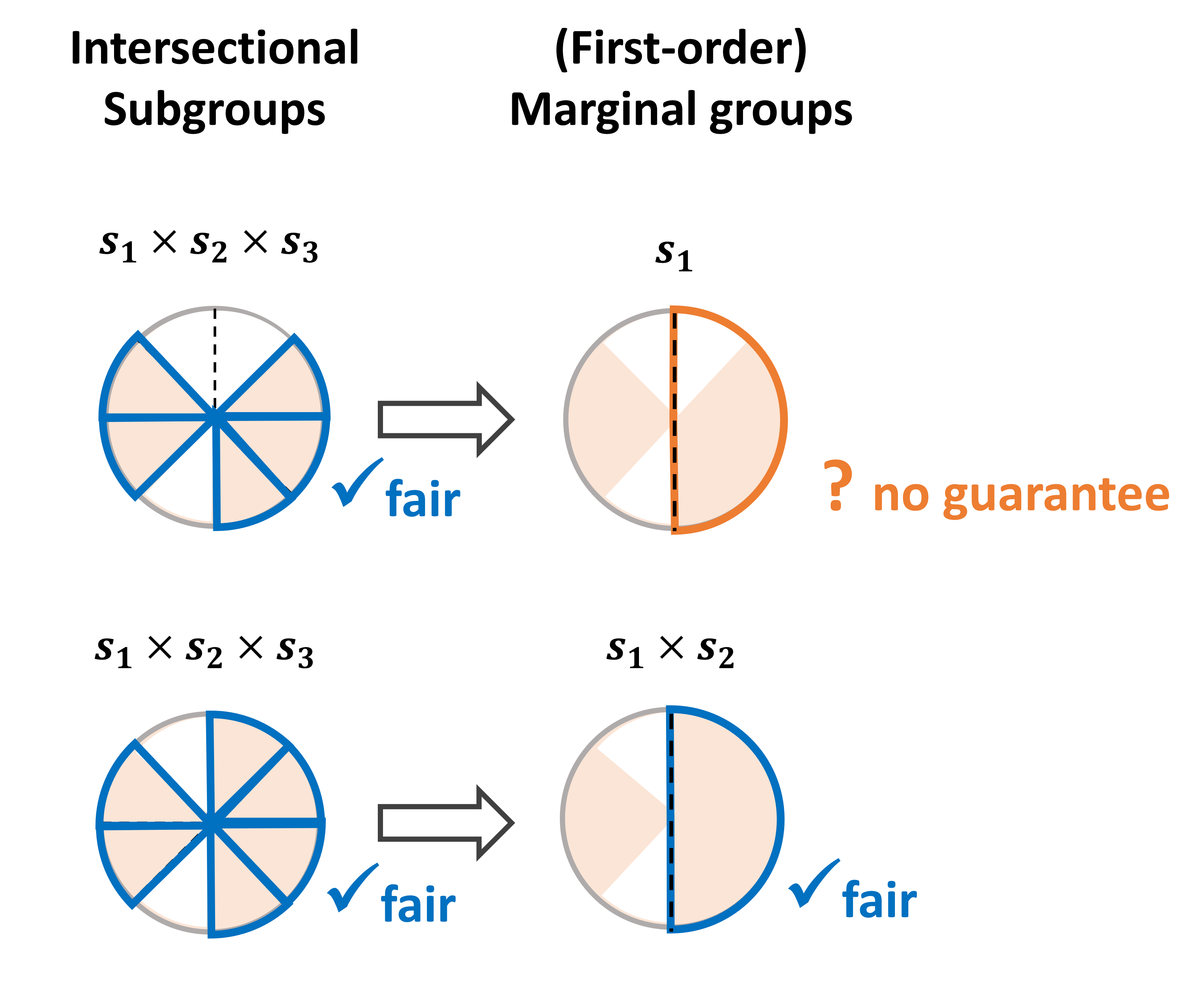}
    \caption{
    (Top)
    Hierarchy of marginal groups and intersectional subgroups, with active subgroups (whose sample sizes exceed a threshold) highlighted.
    Our proposed algorithm, DRAF, automatically identifies such active sets and enforces fairness on them.
    (Bottom)
    An visual illustration on the relationship between fairness guarantees from the intersectional subgroups to marginal groups.
    }
    \label{fig:hierarchy}
\end{figure}

\clearpage
\paragraph{Example: subgroup fairness does not always imply marginal fairness}

Suppose $q = 2.$
Assume that we are given the following configuration of dataset and predictions for a given $f.$
We write the two sensitive attributes as $a \in \{0, 1\}$ and $b \in \{0, 1\},$ for simplicity.

\begin{center}
\begin{tabular}{c|c|c|c}
\toprule
Subgroup  & \# samples  & \# positive predictions  & Positive rate \\
$(a,b)$ & $n (a,b)$ & $n_{pos}(a,b)$ by $f$ & $\hat p(a,b) = n_{pos}(a,b)/n$ \\
\midrule
$(0,0)$ & $10$ & $9$ & $0.9$ \\
$(0,1)$ & $10$ & $9$ & $0.9$ \\
$(1,0)$ & $10$ & $1$ & $0.1$ \\
$(1,1)$ & $10$ & $1$ & $0.1$ \\
\bottomrule
\end{tabular}
\end{center}

The total sample size and positives are $n = \sum_{a,b} n(a,b) =40$ and $n_{pos} = \sum_{a,b} n(a,b) =20$, hence the overall rate of positive prediction is
$$
\hat p = \frac{n_{pos}}{n} = \frac{20}{40} = 0.5.
$$

Subgroup fairness measure of \citet{kearns2018empiricalstudyrichsubgroup} over the four intersectional subgroups is calculated as
$$
\textup{SP}(f)
= \max_{(a,b)\in\{0,1\}^2} \frac{n(a,b)}{N} \big|\hat p(a,b)-\hat p\big|
= 0.25\times |0.9-0.5| = 0.1.
$$
On the other hand, we have
$
n(0, 0) + n(0, 1)=20, n_{pos}(0, 0) + n_{pos}(0, 1) = 18
$
so that
$
\frac{ n_{pos}(0, 0) + n_{pos}(0, 1) }{ n(0, 0) + n(0, 1) } = 0.9.
$
Similarly,
$
n(1, 0) + n(1, 1) = 20, n_{pos}(1, 0) + n_{pos}(1, 1) = 2
$
so that
$
\frac{ n_{pos}(1, 0) + n_{pos}(1, 1) }{ n(1, 0) + n(1, 1) } = 0.1.
$
Thus, the first-order marginal disparity for the sensitive attribute $a$ is 0.4, which is relatively large.

\end{document}